\documentclass{article}

\PassOptionsToPackage{numbers}{natbib}

\usepackage[final]{neurips_2020}

\usepackage[utf8]{inputenc} 
\usepackage[T1]{fontenc}    
\usepackage{hyperref}       
\usepackage{url}            
\usepackage{booktabs}       
\usepackage{amsfonts}       
\usepackage{nicefrac}       
\usepackage{microtype}      

\title{Minimax Value Interval for Off-Policy Evaluation and Policy Optimization}

\author{%
  Nan Jiang \\
  Department of Computer Science\\
  University of Illinois at Urbana-Champaign\\
  Urbana, IL 61801 \\
  \texttt{nanjiang@illinois.edu} \\
  \And
  Jiawei Huang \\
  Department of Computer Science\\
  University of Illinois at Urbana-Champaign\\
  Urbana, IL 61801 \\
  \texttt{jiaweih@illinois.edu}
}

\usepackage{subcaption}
\usepackage{url, hyperref}
\usepackage{algorithm, algpseudocode} %

\usepackage{amsthm, amsmath, amssymb} 
\usepackage[shortlabels]{enumitem}
\usepackage{comment}
\usepackage{color}
\usepackage{multirow}
\usepackage{xspace}
\usepackage{rotating}
\usepackage{graphicx}
\usepackage{caption}
\usepackage{xcolor}
\usepackage[normalem]{ulem}

\theoremstyle{plain}
\newtheorem{theorem}{\textbf{Theorem}}
\newtheorem{lemma}[theorem]{\textbf{Lemma}}

\newtheorem{proposition}[theorem]{Proposition}

\theoremstyle{definition}

\newtheorem{assumption}{Assumption}

\newtheorem{remark}{Remark}

\renewcommand{\cite}{\citep}

\newcommand{\tabincell}[2]{\begin{tabular}{@{}#1@{}}#2\end{tabular}}



\newcommand{\EE}{\mathbb{E}}

\newcommand{\Indi}{\mathbb{I}}

\DeclareMathOperator*{\argmin}{arg\,min}
\DeclareMathOperator*{\argmax}{arg\,max}

\newcommand{\Fcal}{\mathcal{F}}
\newcommand{\Qcal}{\mathcal{Q}}
\newcommand{\Wcal}{\mathcal{W}}

\newcommand{\Mcal}{\mathcal{M}}
\newcommand{\Scal}{\mathcal{S}}

\newcommand{\Acal}{\mathcal{A}}
\newcommand{\Tcal}{\mathcal{T}}

\newcommand{\Rmin}{R_{\min}}
\newcommand{\Rmax}{R_{\max}}

\newcommand{\RR}{\mathbb{R}}

\newcommand{\conv}{\mathcal{C}}
\newcommand{\ubw}{\mathrm{UB_w}}
\newcommand{\lbw}{\mathrm{LB_w}}
\newcommand{\ubq}{\mathrm{UB_q}}
\newcommand{\lbq}{\mathrm{LB_q}}
\newcommand{\ubwpi}{\mathrm{UB_w^\pi}}
\newcommand{\lbwpi}{\mathrm{LB_w^\pi}}

\newcommand{\lbqpi}{\mathrm{LB_q^\pi}}
\newcommand{\infw}{\inf_{w\in\Wcal}}
\newcommand{\supw}{\sup_{w\in\Wcal}}
\newcommand{\infq}{\inf_{q\in\Qcal}}
\newcommand{\supq}{\sup_{q\in\Qcal}}

\newcommand{\supCw}{\sup_{w\in\conv(\Wcal)}}

\newcommand{\Lwq}{L(w, q)}
\newcommand{\Lw}[1]{L(w, #1)}
\newcommand{\Lq}[1]{L(#1, q)}
\newcommand{\Lmwl}{L_{\mathrm{w}}}
\newcommand{\Lmql}{L_{\mathrm{q}}}
\newcommand{\wpi}{w_{\pi/\mu}}
\newcommand{\qpi}{Q^{\pi}}
\newcommand{\MRmax}{M_{\max}}
\newcommand{\MRmin}{M_{\min}}
\newcommand{\Sk}{\tilde{\Scal}}

\newcommand{\hq}{\hat{q}}

\newcommand{\lwfq}{``weight-learning''\xspace}
\newcommand{\lqfw}{``value-learning''\xspace}
\newcommand{\Lwfq}{``Weight-learning''\xspace}
\newcommand{\Lqfw}{``Value-learning''\xspace}

\newcommand{\hLwq}{\hat{L}(w,q)}
\newcommand{\hRcal}{\hat{\mathcal{R}}}

\begin{document}

\maketitle

\begin{abstract}
We study minimax methods for off-policy evaluation (OPE) using value functions and marginalized importance weights. Despite that they hold promises of overcoming the exponential variance in traditional importance sampling, several key problems remain: \\
(1) They require function approximation and are generally biased. For the sake of trustworthy OPE, is there anyway to quantify the biases? \\
(2) They are split into two styles (``weight-learning'' vs ``value-learning''). Can we unify them? \\
In this paper we answer both questions positively. By slightly altering the derivation of previous methods (one from each style \citep{uehara2019minimax}), we unify them into a single \emph{value interval} that comes with a special type of double robustness: when \emph{either} the value-function or the importance-weight class is well specified, the interval is valid and its length quantifies the misspecification of the other class. Our interval also provides a unified view of and new insights to some recent methods, and we further explore the implications of our results on exploration and exploitation in off-policy policy optimization with insufficient data coverage.
\end{abstract}

\newcount\Comments  
\Comments=0 
\definecolor{darkgreen}{rgb}{0,0.5,0}
\definecolor{darkred}{rgb}{0.7,0,0}
\definecolor{teal}{rgb}{0.3,0.8,0.8}
\newcommand{\kibitz}[2]{\ifnum\Comments=1\textcolor{#1}{#2}\fi}
\newcommand{\nan}[1]{\kibitz{red}{[NJ: #1]}}

\newcommand{\para}[1]{\noindent \textbf{#1}~}

\section{Introduction} \label{sec:intro}
A major barrier to applying reinforcement learning (RL) to real-world applications is the difficulty of evaluation: how can we reliably evaluate a new policy before actually deploying it, possibly using historical data collected from a different policy? Known as off-policy evaluation (OPE), the problem is genuinely difficult as the variance of any unbiased estimator---including the popular importance sampling methods and their variants \citep{precup2000eligibility, jiang2016doubly}---inevitably grows exponentially in horizon \citep{li2015minimax}. 

To overcome this ``curse of horizon'', the RL community has recently gained interest in a new family of algorithms \citep[e.g.,][]{liu2018breaking}, which require function approximation of value-functions and marginalized importance weights and provide accurate evaluation when \emph{both} function classes are well-specified (or \emph{realizable}). Despite that fast progress is made in this direction, several key problems remain:
\begin{itemize}[leftmargin=*]
	\item The methods are generally biased since they rely on function approximation.  Is there anyway we can quantify the biases, which is important for trustworthy evaluation?
	\item The original method by \citet{liu2018breaking} estimates the marginalized importance weights (``weight'') using a discriminator class of value-functions (``value'').  
	Later, \citet{uehara2019minimax} swap the roles of value and weight to learn a $Q$-function using weight discriminators. Not only we have two styles of methods now (\lwfq vs \lqfw), each of them also ignores some important components of the data in their core optimization (see Sec.~\ref{sec:related} for details). Can we have a unified method that makes effective use of all components of data?
\end{itemize}
In this paper we answer both questions positively. By modifying the derivation of one method from each style \citep{uehara2019minimax}, we unify them into a single \emph{value interval}, which automatically comes with a special type of double robustness: when either the weight- or the value-class is well specified, the interval is valid and its length quantifies the misspecification of the other class (Sec.~\ref{sec:ope}). 
Each bound is computed from a single optimization program that uses all components of the data, which we show is generally tighter than the na\"ive intervals developed from previous methods. 
Our derivation also unifies several recent OPE methods and reveals their simple and direct connections; see Table~\ref{tab:compare} in the appendix. 
Furthermore, we examine the potential of applying these value bounds to two long-standing problems in RL: 
reliable off-policy policy optimization under poor data coverage (i.e., \emph{exploitation}), and efficient \emph{exploration}. Based on a simple but important observation, that poor data coverage can be treated as a special case of importance-weight misspecification, we show that optimizing our lower and upper bounds over a policy class corresponds to the well-established pessimism and optimism principles for these problems, respectively (Sec.~\ref{sec:opt}). 

\para{On Statistical Errors} 
We assume exact expectations and ignore statistical errors 
in most of the derivations, as our main goal is to quantify the biases and unify existing methods. In Appendix~\ref{app:staterr} we show how to theoretically handle the statistical errors by adding generalization error bounds to the interval. That said, these generalization bounds are typically loose for practical purposes, and we handle statistical errors by bootstrapping in the experiments (Section~\ref{sec:ope_exp}) and show its effectiveness empirically. We also refer the readers to concurrent works that provide tighter and/or more efficient computation of confidence intervals for related estimators \cite{feng2020accountable, dai2020coindice}. 

\section{Preliminaries}
\para{Markov Decision Processes}
An infinite-horizon discounted MDP is specified by $(\Scal, \Acal, P, R, \gamma, s_0)$, where $\Scal$ is the state space, $\Acal$ is the action space, $P: \Scal\times\Acal\to\Delta(\Scal)$ is the transition function ($\Delta(\cdot)$ is probability simplex), $R: \Scal\times\Acal\to \Delta([0, \Rmax])$ is the reward function, and $s_0$ is a known and deterministic starting state, which is w.l.o.g.\footnote{When $s_0\sim d_0$ is random, all our derivations hold by replacing $q(s_0, \pi)$ with $\EE_{s_0 \sim d_0}[q(s_0, \pi)]$. \label{ft:initial}} For simplicity we assume $\Scal$ and $\Acal$ are finite and discrete but their cardinalities can be arbitrarily large. Any policy\footnote{Our derivation also applies to stochastic policies.} $\pi: \Scal\to\Acal$ induces a distribution of the trajectory,
$
s_0, a_0, r_0, s_1, a_1, r_1, \ldots, 
$ 
where $s_0$ is the starting state, and $\forall t \ge 0$, $a_t = \pi(s_t)$, $r_t \sim R(s_t, a_t)$, $s_{t+1} \sim P(s_t, a_t)$. The expected discounted return determines the performance of policy $\pi$, which is defined as
$
J(\pi) := \EE[\sum_{t=0}^\infty \gamma^t r_t \,|\, \pi].
$ 
It will be useful to define the discounted (state-action) occupancy of $\pi$ as
$
d^\pi(s,a) := \sum_{t=0}^\infty \gamma^t d_t^\pi(s,a),
$ 
where $d_t^\pi(\cdot,\cdot)$ is the marginal distribution of $(s_t, a_t)$ under policy $\pi$.  $d^\pi$ behaves like an \emph{unnormalized} distribution ($\|d^\pi\|_1 = 1/(1-\gamma)$), and for notational convenience we write $\EE_{d^\pi}[\cdot]$ with the understanding that
$$ \textstyle
\EE_{d^\pi}[f(s,a,r,s')] := \sum_{s\in\Scal,a\in\Acal} d^\pi(s,a) ~\EE_{r\sim R(s,a), s'\sim P(s,a)}[f(s,a,r,s')].
$$
With this notation, the discounted return can be written as $J(\pi) = \EE_{d^\pi}[r]$. 

The policy-specific $Q$-function $Q^\pi$ satisfies the Bellman equations: $\forall s\in\Scal, a\in\Acal$,
$Q^\pi(s,a) = \EE_{r \sim R(s,a), s'\sim P(s,a)}[r + \gamma Q^\pi(s', \pi)] =: (\Tcal^\pi Q^\pi)(s,a),$ 
where $Q^\pi(s', \pi)$ is a shorthand for $Q^\pi(s', \pi(s'))$ and $\Tcal^\pi$ is the Bellman update operator. It will be also useful to keep in mind that
\begin{align} \label{eq:Rq}
J(\pi) = Q^\pi(s_0, \pi).
\end{align} 

\para{Data and Marginalized Importance Weights} \label{sec:data}
In off-policy RL, we are passively given a dataset and cannot interact with the environment to collect more data. The goal of OPE is to estimate $J(\pi)$ for a given policy $\pi$ using the dataset. We assume that the dataset consists of i.i.d.~$(s,a,r,s')$ tuples, where $(s,a) \sim \mu$, $r\sim R(s,a)$, $s'\sim P(s,a)$. $\mu \in \Delta(\Scal\times\Acal)$ is a distribution from which we draw $(s,a)$, which determines the exploratoriness of the dataset. We write $\EE_{\mu}[\cdot]$ as a shorthand for taking expectation w.r.t.~this distribution. The strong i.i.d.~assumption is only meant to simplify derivation and presentation, and does not play a crucial role in our results as we do not handle statistical errors. 

A concept crucial to our discussions is the \emph{marginalized importance weights}. Given any $\pi$, if $\mu(s,a) > 0$ whenever $d^\pi(s,a)>0$, define 
$
\wpi(s,a) := \frac{d^\pi(s,a)}{\mu(s,a)}.
$ 
When there exists $(s,a)$ such that $\mu(s,a)=0$ but $d^\pi(s,a)>0$,  $\wpi$ does not exist (and hence cannot be realized by any function class). When it does exist, $\forall f$,
$ 
\EE_{d^\pi}[f(s,a,r,s')] = \EE_{\wpi}[f(s,a,r,s')] := \EE_{\mu}[\wpi(s,a) f(s,a,r,s')],
$ 
where
\begin{align*} 
\EE_{w}[\cdot] := \EE_{\mu}[w(s,a) \cdot (\cdot)]
\end{align*}
is a shorthand we will use throughout the paper, and $\mu$ is omitted in $\EE_w[\cdot]$ since importance weights are always applied on the data distribution $\mu$. 
Finally, OPE would be easy if we knew $\wpi$, as
\begin{align} \label{eq:Rw}
J(\pi) = \EE_{d^\pi}[r] = \EE_{\wpi}[r].
\end{align}

\para{Function Approximation}
Throughout the paper, we assume access to two function classes $\Qcal \subset (\Scal\times\Acal\to \RR)$ and $\Wcal \subset (\Scal\times\Acal\to \RR)$. To develop intuition, they are supposed to model $\qpi$ and $\wpi$, respectively, though most of our main results are stated without assuming any kind of realizability. We use $\conv(\cdot)$ to denote the convex hull of a set. We also make the following compactness assumption so that infima and suprema we introduce later are always attainable. 
\begin{assumption}
	We assume $\Qcal$ and $\Wcal$ are compact subsets of $\RR^{\Scal\times\Acal}$.
\end{assumption}

\section{Related Work} \label{sec:related}
\para{Minimax OPE} \citet{liu2018breaking} proposed the first minimax algorithm for learning marginalized importance weights. When the data distribution (e.g., our $\mu$) covers $d^\pi$ reasonably well, the method provides efficient estimation of $J(\pi)$ without incurring the exponential variance in horizon, a major drawback of importance sampling \citep{precup2000eligibility, li2015minimax, jiang2016doubly}. Since then, the method has sparked a flurry of interest in the RL community \citep{XieTengyang2019OOEf, ChowYinlam2019DBEo, liu2019understanding, liu2019off, rowland2019conditional, liao2019off, zhang2019gendice, zhang2020gradientdice}.

While most of these methods solve for a weight-function using value-function discriminators and plug into Eq.\eqref{eq:Rw} to form the final estimate of $J(\pi)$, \citet{uehara2019minimax} recently show that one can flip the roles of weight- and value-functions to approximate $Q^\pi$. This is also closely related to the kernel loss for solving Bellman equations \citep{feng2019kernel}. 
As we will see, when we keep model misspecification in mind and derive upper and lower  bounds of $J(\pi)$ (as opposed to point estimates), the two types of methods merge into almost the same value intervals---except that they are the reverse of each other, and at most one of them is valid in general (Sec.~\ref{sec:unify}).

\para{Drawback of Existing Methods} One drawback of both types of methods is that some important components of the data are ignored in the core optimization. For example, \lwfq \citep[e.g.,][]{liu2018breaking, zhang2019gendice} completely ignores the rewards $r$ in its loss function, and only uses it in the final plug-in step. Similarly, \lqfw \citep[MQL of][]{uehara2019minimax} ignores the initial state $s_0$ until the final plug-in step \citep[see also][]{feng2019kernel}. In contrast, each of our bounds is computed from a single optimization program that uses all components of the data, and we show the advantage of unified optimization in Table~\ref{tab:compare} and App.~\ref{app:naiveCI}. 

\para{Double Robustness} Our interval is valid when either function class is well-specified, which can be viewed as a type of double robustness. This is related to but different from the usual notion of double robustness in RL \citep{dudik2011doubly, jiang2016doubly, KallusNathan2019EBtC, tang2019harnessing}, and we explain the difference in App.~\ref{app:dr}.


\para{AlgaeDICE} Closest related to our work is the recently proposed AlgaeDICE for off-policy policy optimization  \citep{nachum2019algaedice}. In fact, one side of our interval recovers a version of its policy evaluation component. \citet{nachum2019algaedice} derive the expression using Fenchel duality \citep[see also][]{nachum2020reinforcement}, whereas we provide an alternative derivation using basic telescoping properties such as Bellman equations (Lemmas~\ref{lem:pel_q} and \ref{lem:pel_w}). Our results also provide further justification for AlgaeDICE as an off-policy policy optimization algorithm (Sec.~\ref{sec:opt}), and point out its weakness for OPE (Sec.~\ref{sec:ope}) and how to address it. 



\section{The Minimax Value Intervals} \label{sec:ope}
In this section we derive the minimax value intervals by slightly altering the derivation of two recent methods \citep{uehara2019minimax}, one of \lwfq style (Sec.~\ref{sec:mwl-ci}) and one of \lqfw style (Sec.~\ref{sec:mql-ci}), and show that under certain conditions, they merge into a single unified value interval whose validity only relies on either $\Qcal$ or $\Wcal$ being well-specified (Sec.~\ref{sec:unify}). While we focus on the discounted \& behavior-agnostic setting in the main text, in Appendix~\ref{app:variants} we describe how to adapt to other settings such as when behavior policy is available or in average-reward MDPs.

\subsection{Value Interval for Well-specified $\Qcal$ and Misspecified $\Wcal$} \label{sec:mwl-ci}
We start with a simple lemma that can be used to derive the \lwfq methods, such as the original algorithm by \citet{liu2018breaking} and its behavior-agnostic extension by \citet{uehara2019minimax}. Our derivation assumes realizable $\Qcal$ but arbitrary $\Wcal$. 

\begin{lemma}[Evaluation Error Lemma for Importance Weights]\label{lem:pel_q}
	For any $w:\Scal\times\Acal \to \RR$, 
	\begin{align} \label{eq:pel_q}
	J(\pi) -  \EE_w[r] = Q^\pi(s_0, \pi) + \EE_w[\gamma Q^\pi(s', \pi) - Q^\pi(s,a)].
	\end{align}
\end{lemma}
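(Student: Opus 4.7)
The plan is to recognize that Eq.~\eqref{eq:pel_q} is essentially a Bellman residual identity: the right-hand side differs from $J(\pi)$ only by a term that vanishes in expectation under \emph{any} weighting $w$ by the tower property and the Bellman equation for $Q^\pi$. So I would split the proof into two short steps and then combine.

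First, I would invoke Eq.~\eqref{eq:Rq} to replace $J(\pi)$ by $Q^\pi(s_0,\pi)$, which already matches the first term on the right-hand side of \eqref{eq:pel_q}. What remains to show is therefore the much cleaner identity
\[
\EE_w\!\left[r + \gamma\, Q^\pi(s',\pi) - Q^\pi(s,a)\right] \;=\; 0
\]
for every $w:\Scal\times\Acal\to\RR$. Unpacking the shorthand $\EE_w[\cdot] = \EE_\mu[w(s,a)\,\cdot\,(\cdot)]$ and pulling the weight $w(s,a)$ outside the inner conditional expectation over $(r,s')\mid(s,a)$, the claim reduces to showing that for each $(s,a)$ in the support of $\mu$,
\[
\EE_{r\sim R(s,a),\, s'\sim P(s,a)}\!\left[r + \gamma\, Q^\pi(s',\pi)\right] - Q^\pi(s,a) \;=\; 0,
\]
which is exactly the Bellman equation $Q^\pi = \Tcal^\pi Q^\pi$ stated in the Preliminaries.

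Combining the two pieces, $J(\pi) - \EE_w[r] = Q^\pi(s_0,\pi) + \EE_w[\gamma Q^\pi(s',\pi) - Q^\pi(s,a)]$ after moving $\EE_w[r]$ to the right. There is no real obstacle here: the argument is a single application of the Bellman equation followed by the tower rule, and the fact that $w$ need not be the true importance weight $\wpi$ is precisely what makes the identity useful, since the Bellman residual of $Q^\pi$ is zero pointwise and hence under any reweighting of $\mu$. The only thing to keep in mind is the footnote-level convention that if $s_0$ were random one would replace $Q^\pi(s_0,\pi)$ by $\EE_{s_0\sim d_0}[Q^\pi(s_0,\pi)]$, but in the deterministic-$s_0$ setting adopted in the paper no extra care is needed.
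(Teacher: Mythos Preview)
Your proof is correct and takes essentially the same approach as the paper: both reduce the identity to showing that $J(\pi)-Q^\pi(s_0,\pi)=0$ and $\EE_w[r+\gamma Q^\pi(s',\pi)-Q^\pi(s,a)]=0$, the former by Eq.~\eqref{eq:Rq} and the latter by the pointwise Bellman equation for $Q^\pi$. The paper simply states ``both sides equal 0'' whereas you spell out the tower-rule step, but the argument is the same.
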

\begin{proof}
	By moving terms, it suffices to show that $J(\pi) - Q^\pi(s_0, \pi) = \EE_w[r + \gamma Q^\pi(s', \pi) - Q^\pi(s,a)]$, which holds because both sides equal 0. 
\end{proof}

\para{\Lwfq in a Nutshell} The \lwfq methods aim to learn a $w$ such that $J(\pi) \approx \EE_w[r]$. By Lemma~\ref{lem:pel_q}, we may simply find $w$ that sets the RHS of Eq.\eqref{eq:pel_q} to $0$. Of course, this expression depends on $Q^\pi$ which is unknown. However, if we are given a function class $\Qcal$ that captures $Q^\pi$, we can find $w$ (over a class $\Wcal$) that minimizes
\begin{align} \label{eq:loss_mwl}
\supq |\Lmwl(w, q)|,~~ \textrm{where~~} \Lmwl(w,q) := q(s_0, \pi) + \EE_w[\gamma q(s', \pi) - q(s,a)]. 
\end{align}
This derivation implicitly assumes that we can find $w$ such that $\Lmwl(w, q) \approx 0~ \forall q \in \Qcal$, which is guaranteed when $\wpi \in \Wcal$. When $\Wcal$ is misspecified, however, the estimate can be highly biased. Although such a bias can be somewhat quantified by the approximation guarantee of these methods (see Remark~\ref{rem:naiveCI} for details), we show below that there is a more direct, elegant, and tighter approach.

\para{Derivation of the Interval}
Again, suppose we are given $\Qcal$ such that $Q^\pi \in \Qcal$.\footnote{This condition can be relaxed to $Q^\pi \in \conv(\Qcal)$ due to the affinity of $L(w, \cdot)$.} Then from Lemma~\ref{lem:pel_q},
\begin{align}
J(\pi) = &~ Q^\pi(s_0, \pi) + \EE_w[r + \gamma Q^\pi(s', \pi) - Q^\pi(s,a)] \nonumber \\
\le &~ \supq \Big\{ q(s_0, \pi) + \EE_w[r + \gamma q(s', \pi) - q(s,a)]\Big\}.
\end{align}
For convenience, from now on we will use the shorthand 
\begin{align}
\Lwq := q(s_0, \pi) + \EE_w[r+\gamma q(s',\pi) - q(s,a)],
\end{align}
and the upper bound is then $\sup_{q\in\Qcal} \Lwq$. 
The lower bound is similar:
\begin{align}
J(\pi) \ge \infq \Big\{ q(s_0, \pi) + \EE_w[r + \gamma q(s', \pi) - q(s,a)]\Big\} = \infq \Lwq.
\end{align}
To recap, an arbitrary $w$ will give us a valid interval $[\infq \Lwq,~ \supq \Lwq]$, and we may search over a class $\Wcal$ to find a tighter interval\footnote{We cannot search over the unrestricted (tabular) class when the state space is large, due to overfitting. In contrast, the value bounds derived in this paper only optimize over restricted $\Qcal$ and $\Wcal$ classes, allowing the bound computed on a finite sample to generalize when $\Qcal$ and $\Wcal$ have bounded statistical complexities; see Appendix~\ref{app:staterr} for related discussions.} by taking the \emph{lowest upper bound} and the \emph{highest lower bound}: 
\begin{align}
J(\pi) \le \infw \supq \Lwq =: \ubw, \qquad
J(\pi) \ge \supw \infq \Lwq =: \lbw.
\end{align}
It is worth keeping in mind that the above derivation assumes realizable $\conv(\Qcal)$. Without such an assumption, there is no guarantee that $J(\pi) \le \ubw$, $J(\pi)\ge \lbw$, or even $\lbw \le \ubw$. Below we establish the conditions under which the interval is valid (i.e., $\lbw \le J(\pi) \le \ubw$) and tight (i.e., $\ubw - \lbw$ is small). 

\para{Properties of the Interval}
Intuitively, if $\Qcal$ is richer, it is more likely to be realizable, which improves the interval's validity. 
If we further make $\Wcal$ richer, the interval becomes tighter, as we are searching over a richer space to suppress the upper bound and raise the lower bound. We formalize these intuitions with the theoretical results below. 
Notably, our main results do not require any explicit realizability assumptions and hence are  automatically agnostic. All proofs of this section can be found in App.~\ref{app:proof}.

\begin{theorem}[Validity] \label{thm:ciw_valid}
	Define $\Lmwl(w,q) := q(s_0, \pi) + \EE_w[\gamma q(s', \pi) - q(s,a)]$. We have \vspace*{0.4em}
\begin{align*}
\ubw - J(\pi)  \ge \infw \supq \Lmwl(w, q-Q^\pi), \qquad
J(\pi) - \lbw \ge \infw \supq \Lmwl(w, Q^\pi-q). 
\end{align*}
	As a corollary, when $Q^\pi \in \conv(\Qcal)$, the interval is valid, i.e., $\ubw \ge J(\pi) \ge \lbw$.
\end{theorem}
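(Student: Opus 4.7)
The identity behind everything is $J(\pi) = L(w, Q^\pi)$ for every $w$, which is just a rewrite of Lemma~\ref{lem:pel_q}: moving $\EE_w[r]$ to the right-hand side in \eqref{eq:pel_q} produces exactly $L(w, Q^\pi)$. Since $L(w, \cdot)$ is affine in $q$ and the $\EE_w[r]$ term does not depend on $q$, subtracting gives the clean identity
\begin{align*}
L(w, q) - J(\pi) \;=\; L(w, q) - L(w, Q^\pi) \;=\; \Lmwl(w,\, q - Q^\pi),
\end{align*}
valid for every $w$ and $q$ with no realizability assumption needed. This is the engine of the whole argument; everything else is sup/inf bookkeeping.

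Applying $\supq$ and then $\infw$ to both sides yields
\begin{align*}
\ubw - J(\pi) \;=\; \infw \supq \Lmwl(w,\, q - Q^\pi),
\end{align*}
which immediately gives the first claimed inequality (in fact with equality). For the lower bound I would apply $\infq$ and then $\supw$, and use the general identity $\supw \infq f = -\infw \supq (-f)$ together with the linearity of $\Lmwl(w, \cdot)$, which turns the sign flip into $Q^\pi - q$; this gives $J(\pi) - \lbw = \infw \supq \Lmwl(w,\, Q^\pi - q)$, again as an equality.

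For the corollary, linearity of $\Lmwl(w, \cdot)$ in its second argument means $\supq$ over $\Qcal$ equals the same supremum over $\conv(\Qcal)$. Hence when $Q^\pi \in \conv(\Qcal)$ we may plug $q = Q^\pi$ in either expression to obtain $\Lmwl(w, 0) = 0$, showing the inner sup is nonnegative for every $w$; taking $\infw$ preserves nonnegativity and yields $\ubw \ge J(\pi)$, and the symmetric argument yields $\lbw \le J(\pi)$. The only real conceptual step is recognising that Lemma~\ref{lem:pel_q} is equivalent to the pointwise identity $J(\pi) = L(w, Q^\pi)$; once that is in hand the proof is essentially mechanical, and the convex-hull relaxation is the only mild technical subtlety.
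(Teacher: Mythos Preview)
Your proof is correct and follows essentially the same approach as the paper: both hinge on the identity $J(\pi)=L(w,Q^\pi)$ from Lemma~\ref{lem:pel_q} and the affinity of $L(w,\cdot)$ (so that $L(w,q)-L(w,Q^\pi)=\Lmwl(w,q-Q^\pi)$), followed by routine inf/sup manipulations and the convex-hull argument for the corollary. The paper introduces the argmin $w_1$ and writes an inequality when passing to $\infw$, whereas you observe---correctly---that because $L(w,Q^\pi)=J(\pi)$ is \emph{constant} in $w$, the inf passes through as an equality; this is a mild sharpening of the stated theorem but not a different argument.
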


$\Lmwl(w, q-Q^\pi)$ and $\Lmwl(w, Q^\pi -q)$ can be viewed as a measure of difference between $Q^\pi$ and $q$, as they are essentially linear measurements of $q-Q^\pi$ (note that $\Lmwl(w, \cdot)$ is linear) with the measurement vector determined by $w$. 
Therefore,  if $\conv(\Qcal)$ contains close approximation of $Q^\pi$, then $\ubw$ will not be too much lower than $J(\pi)$ and $\lbw$ will not be too much higher than $\lbw$, and the degree of realizability of $\conv(\Qcal)$ determines to what extent $[\lbw, \ubw]$ is approximately valid. 

Even if valid, the interval may be useless if $\ubw \gg \lbw$. Below we show that this can be prevented by having a well-specified $\Wcal$.

\begin{theorem}[Tightness]
	\label{thm:ciw_tight}
	$
	\ubw - \lbw \le 2 \infw \supq |\Lmwl(w, q)|.
	$ 
	As a corollary, when $\wpi \in \Wcal$, we have $\ubw\le \lbw$.
\end{theorem}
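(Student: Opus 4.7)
The plan is to reduce the gap $\ubw - \lbw$ to a single-$w$ quantity by exploiting the fact that $\Lwq$ and $\Lmwl(w,q)$ differ only by $\EE_w[r]$, which is constant in $q$. Concretely, $\Lwq = \Lmwl(w,q) + \EE_w[r]$, so for any fixed $w$,
\[
\supq \Lwq - \infq \Lwq \;=\; \supq \Lmwl(w,q) - \infq \Lmwl(w,q).
\]

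First I would fix an arbitrary $w_0 \in \Wcal$ and use the elementary one-sided bounds $\ubw \le \supq L(w_0,q)$ and $\lbw \ge \infq L(w_0,q)$, obtained directly from the definitions of infimum and supremum. Subtracting gives $\ubw - \lbw \le \supq L(w_0,q) - \infq L(w_0,q)$, which by the display above equals $\supq \Lmwl(w_0,q) - \infq \Lmwl(w_0,q)$. This step is the key move: it evaluates both $\ubw$ and $\lbw$ at the \emph{same} $w_0$, sidestepping the usual difficulty of commuting $\inf_w$ and $\sup_q$.

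Second I would bound the per-$w_0$ gap by $2\supq |\Lmwl(w_0,q)|$. Rewriting $\infq \Lmwl(w_0,q) = -\supq\bigl(-\Lmwl(w_0,q)\bigr)$, the gap becomes $\supq \Lmwl(w_0,q) + \supq (-\Lmwl(w_0,q))$; since both $\Lmwl(w_0,q)$ and $-\Lmwl(w_0,q)$ are pointwise dominated by $|\Lmwl(w_0,q)|$, the bound follows. Taking $\inf$ over $w_0\in\Wcal$ yields the stated inequality $\ubw - \lbw \le 2\infw \supq |\Lmwl(w,q)|$.

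For the corollary, I would plug in $w_0 = \wpi$ and show that $\Lmwl(\wpi, q) = 0$ for every $q$, which makes the RHS vanish. This is a re-application of the telescoping in Lemma~\ref{lem:pel_q}: under the change of measure $\wpi \cdot \mu = d^\pi$, the standard discounted telescoping gives $\EE_{d^\pi}[q(s,a) - \gamma q(s',\pi)] = q(s_0,\pi)$, which cancels the leading $q(s_0,\pi)$ in $\Lmwl$. The main obstacle, if any, is the asymmetry between $\infw\supq$ in $\ubw$ and $\supw\infq$ in $\lbw$, which precludes a direct minimax/maximin argument; but the trick of evaluating both at a common $w_0$ sidesteps this entirely, after which the remaining work is a routine triangle-inequality-style decomposition and one application of the telescoping identity.
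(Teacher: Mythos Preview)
Your proposal is correct and essentially identical to the paper's proof: the paper writes $\ubw - \lbw = \inf_{w,w'}\{\supq L(w,q) - \infq L(w',q)\}$ and then constrains $w=w'$, which is just your ``evaluate both at the same $w_0$'' move in slightly different notation; the remaining steps (canceling $\EE_w[r]$, bounding by $2\supq|\Lmwl(w,q)|$, and invoking $\Lmwl(\wpi,q)\equiv 0$ for the corollary) match exactly.
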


\begin{remark}[Interpretation of Theorem~\ref{thm:ciw_tight}]
$\ubw \le \lbw$ does not imply $\ubw = \lbw$, as 
$\ubw$ and $\lbw$ may lose their upper/lower bound semantics without realizable $\conv(\Qcal)$. An interval that is both tight and valid can only be implied from Theorems~\ref{thm:ciw_valid} and \ref{thm:ciw_tight} together.
\end{remark}

\begin{remark}[Point estimate] \label{rem:point_estimate}
	If a point estimate is desired, we may output $\tfrac{1}{2}(\ubw + \lbw)$, and under $Q^\pi \in \conv(\Qcal)$ we can assert that its error is bounded by $\infw \supq |\Lmwl(w,q)|$. This coincides with the guarantee of MWL under the same assumption \citep[Theorem 2]{uehara2019minimax}. Furthermore, if we simply output $\ubw$ or $\lbw$ (the latter being the policy-evaluation component of Fenchel AlgaeDICE \citep{nachum2019algaedice})\footnote{See Appendix~\ref{app:algaedice} for how to translate between the two papers' notations.}, the approximation guarantee will be twice as large since they only incur one-sided errors. 
\end{remark}

\begin{remark}[Na\"ive interval]\label{rem:naiveCI}
	As we alluded to earlier, the approximation guarantee of \citet[Theorem 2]{uehara2019minimax} can be used to derive an interval that is also valid under realizable $\conv(\Qcal)$; see Table~\ref{tab:compare} for details. 
	In App.~\ref{app:naiveCI}, we show that such an interval is never tighter than ours under realizable $\conv(\Qcal)$. 
\end{remark}

\begin{remark}[Regularization] In App.~\ref{app:reg} we derive the regularized version of the interval via Fenchel transformations, similar to \citep{nachum2019algaedice, nachum2020reinforcement}. 
\end{remark}

\subsection{Value Interval for Well-specified $\Wcal$ and Misspecified $\Qcal$} \label{sec:mql-ci}
Similar to Sec.~\ref{sec:mwl-ci}, we now derive the interval for the case of realizable $\conv(\Wcal)$. We also base our entire derivation on the following simple lemma, which can be used to derive the \lqfw method.

\begin{lemma}[Evaluation Error Lemma for Value Functions]\label{lem:pel_w}
	For any $q:\Scal\times\Acal \to \RR$, 
	\begin{align} \label{eq:pel_w}
	J(\pi) -  q(s_0, \pi) = \EE_{d^\pi}[r + \gamma q(s', \pi) - q(s, a)].
	\end{align}
\end{lemma}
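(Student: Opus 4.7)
The plan is to mimic the proof strategy used for Lemma~\ref{lem:pel_q}: move all terms to one side and verify the resulting identity by direct computation. Specifically, rearranging \eqref{eq:pel_w}, it suffices to show
\[
\EE_{d^\pi}[\gamma q(s', \pi) - q(s, a)] = -q(s_0, \pi),
\]
because $J(\pi) = \EE_{d^\pi}[r]$ holds by the definition of $d^\pi$ (this is exactly Eq.~\eqref{eq:Rw} without the importance-weight change of measure).

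First I would expand $\EE_{d^\pi}[\cdot]$ through the series representation $d^\pi = \sum_{t=0}^\infty \gamma^t d_t^\pi$ and the $\EE_{d^\pi}$ convention stated in Section~\ref{sec:data}, which allows me to write
\[
\EE_{d^\pi}[\gamma q(s',\pi) - q(s,a)] \;=\; \sum_{t=0}^\infty \gamma^t \, \EE\!\left[\gamma\, q(s_{t+1}, \pi) - q(s_t, a_t) \,\middle|\, \pi\right].
\]
Next I would reindex the $\gamma q(s_{t+1}, \pi)$ sum, using the fact that under $\pi$ we have $a_{t+1} = \pi(s_{t+1})$ (so $q(s_{t+1}, \pi) = q(s_{t+1}, a_{t+1})$) and that the distribution of $(s_{t+1}, a_{t+1})$ is precisely $d_{t+1}^\pi$. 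This converts the two sums into $\sum_{t\ge 1}\gamma^t \EE[q(s_t, a_t)\mid \pi]$ and $\sum_{t\ge 0}\gamma^t \EE[q(s_t, a_t)\mid \pi]$, which telescope so that only the $t=0$ term of the second sum survives, yielding $-\EE[q(s_0, a_0)\mid \pi] = -q(s_0, \pi)$ since $s_0$ is deterministic and $a_0 = \pi(s_0)$.

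I do not expect any real obstacle here; the only subtlety is the standard bookkeeping of the telescoping infinite series, whose absolute convergence is guaranteed because $\gamma < 1$ and $q$ is bounded under our compactness assumption on $\Qcal$. For the footnote~\ref{ft:initial} case with a random starting state, the same argument applies verbatim after replacing $q(s_0, \pi)$ by $\EE_{s_0 \sim d_0}[q(s_0, \pi)]$ in the final step.
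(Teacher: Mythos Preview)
Your proposal is correct and follows essentially the same approach as the paper's one-line proof: both reduce to $J(\pi)=\EE_{d^\pi}[r]$ together with the telescoping identity $\EE_{d^\pi}[q(s,a)-\gamma q(s',\pi)]=q(s_0,\pi)$, which the paper simply attributes to the ``Bellman equation for $d^\pi$'' and which you unpack explicitly via the series expansion. A minor nitpick: the boundedness of $q$ you invoke for absolute convergence follows from the finiteness of $\Scal\times\Acal$ (the lemma is stated for arbitrary $q:\Scal\times\Acal\to\RR$), not from compactness of $\Qcal$.
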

\begin{proof}
$J(\pi) = \EE_{d^\pi}[r]$, and $\EE_{d^\pi}[q(s,a) - \gamma q(s', \pi)] = q(s_0, \pi)$ due to Bellman equation for $d^\pi$. 
\end{proof}
\para{\Lqfw in a Nutshell} MQL \citep{uehara2019minimax} seeks to find $q$ such that $q(s_0, \pi) \approx J(\pi)$. Using Lemma~\ref{lem:pel_w}, this can be achieved by finding $q$ that sets the RHS of Eq.\eqref{eq:pel_w} to $0$, and we can use a class $\Wcal$ that realizes $\wpi$ to overcome the difficulty of unknown $d^\pi$, similar to how we handle the unknown $Q^\pi$ in Sec.~\ref{sec:mwl-ci}. While the method gives an accurate estimation when both $\Qcal$ and $\Wcal$ are well-specified, below we show how to derive an interval to quantify the bias due to misspecified $\Qcal$.

\para{Derivation of the Interval} 
If we are given $\Wcal \subset (\Scal\times\Acal\to\RR)$ such that $\wpi \in \Wcal$, then\footnote{As before, this condition can be relaxed to $\wpi \in \conv(\Wcal)$.}
\begin{align}
& J(\pi) \le q(s_0, \pi) + \sup_{w\in\Wcal} \EE_w[r + \gamma q(s', \pi) - q(s,a)] = \supw \Lwq. \\
& J(\pi) \ge q(s_0, \pi) + \inf_{w\in\Wcal} \EE_w[r + \gamma q(s', \pi) - q(s,a)] = \infw \Lwq.
\end{align}
Again, an arbitrary $q$ yields a valid interval, and we may search over a class $\Qcal$ to tighten it: 
\begin{align}
J(\pi) \le \inf_{q\in\Qcal} \sup_{w\in\Wcal} \Lwq =: \ubq, \qquad
J(\pi) \ge \sup_{q\in\Qcal} \inf_{w\in\Wcal} \Lwq =: \lbq.
\end{align}

\para{Properties of the Interval} We characterize the interval's validity and tightness similar to Sec.~\ref{sec:mwl-ci}.

\begin{theorem}[Validity] \label{thm:ciq_valid}
	Define $\Lmql(w,q) := \EE_w[r + \gamma q(s', \pi) - q(s,a)]$. 
	\begin{align*}
\ubq - J(\pi)  \ge \infq \supw \, \Lmql(w-\wpi, q),  \qquad 
J(\pi) - \lbq \ge \infq \supw \, \Lmql(\wpi-w, q). 
	\end{align*}
	As a corollary, when $\wpi \in \conv(\Wcal)$, the interval is valid, i.e., $\ubw \ge J(\pi) \ge \lbw$.
\end{theorem}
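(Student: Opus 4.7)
The plan is to mirror the structure used for Theorem~\ref{thm:ciw_valid}, exploiting the fact that $L_{\mathrm{q}}(\cdot, q)$ is linear in its first argument and that Lemma~\ref{lem:pel_w} rewrites $J(\pi)$ exactly as $q(s_0,\pi)+L_{\mathrm{q}}(w_{\pi/\mu}, q)$ for every $q$. The key identity is that for any $w$ and any $q$,
\[
\Lwq - J(\pi) \;=\; \bigl(q(s_0,\pi)+L_{\mathrm{q}}(w,q)\bigr) - \bigl(q(s_0,\pi)+L_{\mathrm{q}}(w_{\pi/\mu},q)\bigr) \;=\; L_{\mathrm{q}}(w-w_{\pi/\mu}, q),
\]
where the second equality uses that $\EE_w[\cdot]$ is linear in $w$ (so $L_{\mathrm{q}}$ is linear in $w$). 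Note that the rewrite of $J(\pi)$ via Lemma~\ref{lem:pel_w} is purely an algebraic identity and does not yet use any realizability of $\Wcal$.

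First, I would take the $\sup_{w\in\Wcal}$ of both sides to get $\supw \Lwq - J(\pi) = \supw L_{\mathrm{q}}(w-w_{\pi/\mu},q)$, then take $\inf_{q\in\Qcal}$ (which commutes with subtracting the $q$-independent constant $J(\pi)$) to conclude $\ubq - J(\pi) = \infq \supw L_{\mathrm{q}}(w-w_{\pi/\mu}, q)$, which in particular implies the stated $\ge$ inequality. The lower-bound claim is completely symmetric: replace $\supw$ by $\infw$ and observe that $\infw L_{\mathrm{q}}(w - w_{\pi/\mu}, q) = -\supw L_{\mathrm{q}}(w_{\pi/\mu}-w, q)$, so $J(\pi) - \lbq = -\infq \infw L_{\mathrm{q}}(w-w_{\pi/\mu},q) = \infq \supw L_{\mathrm{q}}(w_{\pi/\mu}-w, q)$.

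For the corollary, I want to show that $\infq \supw L_{\mathrm{q}}(w - w_{\pi/\mu}, q) \ge 0$ whenever $w_{\pi/\mu} \in \conv(\Wcal)$ (and similarly for the other bound). Fixing any $q\in\Qcal$ and writing $w_{\pi/\mu} = \sum_i \alpha_i w_i$ with $w_i \in \Wcal$, $\alpha_i \ge 0$, $\sum_i \alpha_i = 1$, linearity gives $L_{\mathrm{q}}(w_{\pi/\mu}, q) = \sum_i \alpha_i L_{\mathrm{q}}(w_i, q) \le \supw L_{\mathrm{q}}(w, q)$, so $\supw L_{\mathrm{q}}(w-w_{\pi/\mu}, q) = \supw L_{\mathrm{q}}(w,q) - L_{\mathrm{q}}(w_{\pi/\mu}, q) \ge 0$. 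The bound for $\lbq$ is obtained identically by swapping the roles of sup and inf.

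The argument is essentially bookkeeping around one identity plus linearity, so I do not anticipate a genuine obstacle; the only mild subtlety is remembering that the $\conv(\Wcal)$ relaxation in the corollary is handled not by replacing the supremum's feasible set but by invoking the affinity of $L_{\mathrm{q}}(\cdot, q)$ to realize $L_{\mathrm{q}}(w_{\pi/\mu}, q)$ as a convex combination of values already attained over $\Wcal$, exactly analogous to the $Q^\pi \in \conv(\Qcal)$ step in Theorem~\ref{thm:ciw_valid}.
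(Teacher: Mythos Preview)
Your proof is correct and follows essentially the same route as the paper: both rely on Lemma~\ref{lem:pel_w} to write $J(\pi)=q(s_0,\pi)+\Lmql(\wpi,q)$ for every $q$, then subtract and use linearity of $\Lmql(\cdot,q)$; the paper phrases this via a minimizer $q_1$ and obtains $\ge$, while your direct manipulation in fact yields equality. One minor slip: in the lower-bound chain you wrote $J(\pi)-\lbq = -\infq\infw\,\Lmql(w-\wpi,q)$, but since $\lbq=\supq\infw L(w,q)$ you should have $-\supq\infw\,\Lmql(w-\wpi,q)=\infq\bigl[-\infw\,\Lmql(w-\wpi,q)\bigr]=\infq\supw\,\Lmql(\wpi-w,q)$; your final expression is correct, only the intermediate $-\infq$ should read $-\supq$ (or equivalently $\infq(-\cdot)$).
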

Again, the validity of the interval is controlled by the realizability of $\conv(\Wcal)$, defined as the best approximation of $\wpi \in \Wcal$ where the difference between $\wpi$ and any $w$ is measured by $\Lmql(w-\wpi, q)$ and $\Lmql(\wpi-w, q)$ (which are linear measurements of $w-\wpi$).

\begin{theorem}[Tightness]
	\label{thm:ciq_tight}
	$
	\ubq - \lbq \le 2 \infq \supw |\Lmql(w, q)|.
	$ As a corollary, when $\qpi \in \Qcal$, we have $\ubq\le \lbq$.
\end{theorem}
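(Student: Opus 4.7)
My plan is to mirror the symmetric argument that worked for Theorem~\ref{thm:ciw_tight}. The critical structural observation is that the full loss decomposes as $\Lwq = q(s_0,\pi) + \Lmql(w,q)$, where the first summand depends only on $q$ and not on $w$. This means that, for any fixed $q$, the difference $\sup_{w\in\Wcal}\Lwq - \inf_{w\in\Wcal}\Lwq$ is identical to $\sup_{w\in\Wcal}\Lmql(w,q) - \inf_{w\in\Wcal}\Lmql(w,q)$, since the $q(s_0,\pi)$ terms cancel. This cancellation is the reason the tightness bound involves $\Lmql$ rather than $L$, and it is exactly the $\Wcal$-side analogue of how $q(s_0,\pi)$ vanished in the $\Qcal$-side argument.

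The second step is to replace the order of inf/sup by fixing $q$ in each of $\ubq$ and $\lbq$. Because $\ubq$ is an infimum over $q$, for any candidate $q\in\Qcal$ we have $\ubq \le \sup_{w\in\Wcal}\Lwq$; similarly $\lbq \ge \inf_{w\in\Wcal}\Lwq$. Subtracting these two inequalities, and invoking the cancellation above, gives $\ubq - \lbq \le \sup_{w\in\Wcal}\Lmql(w,q) - \inf_{w\in\Wcal}\Lmql(w,q)$, which is clearly bounded above by $2\sup_{w\in\Wcal}|\Lmql(w,q)|$. Since this holds for every $q\in\Qcal$, I take the infimum over $q$ on the right-hand side to conclude the main inequality.

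For the corollary, I instantiate $q=\qpi$. By the Bellman equation, $\EE[r + \gamma \qpi(s',\pi) - \qpi(s,a)\mid s,a] = 0$ pointwise, so $\Lmql(w,\qpi) = \EE_{\mu}[w(s,a)\cdot 0] = 0$ for every $w\in\Wcal$. Hence $\sup_{w\in\Wcal}|\Lmql(w,\qpi)|=0$, forcing the right-hand side of the main inequality to vanish and yielding $\ubq \le \lbq$. (As emphasized in Remark~\ref{rem:point_estimate} for the $\Wcal$-side result, this does not contradict the upper/lower bound naming; without realizable $\conv(\Wcal)$ those semantics are not guaranteed, and only combining validity with tightness produces an interval that actually contains $J(\pi)$.)

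I do not anticipate a real obstacle here, since the proof is an almost verbatim dual of Theorem~\ref{thm:ciw_tight}: the only point that requires care is confirming that it really is the $w$-independent part of $L$ (namely $q(s_0,\pi)$) that cancels in the difference $\sup_w L - \inf_w L$, which is what makes the residual $\Lmql$ rather than the full $L$ on the right-hand side.
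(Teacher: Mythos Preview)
Your proposal is correct and follows essentially the same approach as the paper. The paper phrases the key step as rewriting $\ubq - \lbq$ as $\inf_{q,q'\in\Qcal}\{\supw \Lwq - \infw L(w,q')\}$ and then constraining $q=q'$, which is logically equivalent to your direct use of $\ubq \le \supw\Lwq$ and $\lbq \ge \infw\Lwq$ for each fixed $q$; the remaining cancellation of $q(s_0,\pi)$, the bound by $2\supw|\Lmql(w,q)|$, and the corollary via $\Lmql(w,\qpi)\equiv 0$ are identical.
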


\subsection{Unification} \label{sec:unify}
So far we have obtained two intervals:
\begin{align*}
\textrm{$\conv(\Qcal)$ realizable:} \begin{cases}
\ubw = \inf_w \sup_q \Lwq, \\
\lbw = \sup_w \inf_q \Lwq,
\end{cases} & \quad
\textrm{$\conv(\Wcal)$ realizable:} \begin{cases}
\ubq = \inf_q \sup_w \Lwq, \\
\lbq = \sup_q \inf_w \Lwq,
\end{cases}
\end{align*}
where $\Lwq = q(s_0, \pi) + \EE_w[r+\gamma q(s',\pi) - q(s,a)]$. Taking a closer look, $\ubw$ and $\lbq$ are almost the same and only differ in the order of optimizing $q$ and $w$, and so are $\ubq$ and $\lbw$. It turns out that if $\Wcal$ and $\Qcal$ are convex, these two intervals are precisely the \emph{reverse} of each other.
\begin{theorem} \label{thm:convex}
	If $\Wcal$ and $\Qcal$ are compact and convex sets, we have
	$
	\ubw = \lbq, ~~ \ubq = \lbw.
	$
\end{theorem}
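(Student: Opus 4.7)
The goal is to recognize both equalities as instances of a minimax swap, and then verify that the hypotheses of Sion's minimax theorem are satisfied. The plan is to start by inspecting the structure of $L(w,q) = q(s_0,\pi) + \EE_w[r+\gamma q(s',\pi) - q(s,a)]$. For fixed $q$, the map $w \mapsto L(w,q)$ is affine in $w$, since $\EE_w[f] = \EE_\mu[w(s,a) f]$ is linear in $w$ for any $f$. For fixed $w$, the map $q \mapsto L(w,q)$ is affine in $q$, since it involves only pointwise evaluations and a $\mu$-expectation of a linear combination of values of $q$. Thus $L$ is bilinear, and in particular continuous on the finite-dimensional ambient space $\RR^{\Scal\times\Acal}\times\RR^{\Scal\times\Acal}$, hence continuous on $\Wcal\times\Qcal$.

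Next, I would invoke Sion's minimax theorem: if $X,Y$ are nonempty convex sets with one of them compact, and $f: X\times Y \to \RR$ is upper semicontinuous and quasi-concave in its first argument and lower semicontinuous and quasi-convex in its second argument, then $\inf_{y\in Y}\sup_{x\in X} f(x,y) = \sup_{x\in X}\inf_{y\in Y} f(x,y)$. In our setting, bilinearity of $L$ together with the compactness and convexity of both $\Wcal$ and $\Qcal$ trivially supplies these hypotheses (affine functions are both quasi-concave and quasi-convex, and continuity is stronger than semicontinuity). Applying the theorem with $X=\Qcal$, $Y=\Wcal$, $f(q,w)=L(w,q)$ gives
\begin{align*}
\ubw = \infw \supq L(w,q) = \supq \infw L(w,q) = \lbq,
\end{align*}
and applying it with $X=\Wcal$, $Y=\Qcal$, $f(w,q)=L(w,q)$ gives
\begin{align*}
\ubq = \infq \supw L(w,q) = \supw \infq L(w,q) = \lbw.
\end{align*}

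The only substantive step is verifying that Sion's theorem applies, which is essentially routine given bilinearity of $L$ and the compactness/convexity assumption; the attainability of the inner infimum/supremum (as opposed to only the outer one) is guaranteed by compactness of both sets combined with continuity of $L$, which is why compactness of both $\Wcal$ and $\Qcal$ is assumed rather than just one. No obstacle of substance is expected; the main care is to write out the identification $f(x,y)=L(w,q)$ carefully so that the roles of ``first'' and ``second'' argument in the two applications of Sion's theorem are not confused.
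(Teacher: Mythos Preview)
Your proposal is correct and matches the paper's own proof, which simply notes that $L(w,q)$ is bi-affine (hence convex--concave) and invokes the minimax theorem of von Neumann/Sion to swap $\inf$ and $\sup$. One minor terminological point: you call $L$ ``bilinear'' after correctly verifying it is affine in each argument separately; the accurate term is bi-affine (there are additive constants $q(s_0,\pi)$ and $\EE_w[r]$), but this does not affect the argument since Sion's hypotheses only require quasi-convexity/concavity.
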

This result implies that we do not need to separately consider these two intervals. Neither do we need to know which one of $\Qcal$ and $\Wcal$ is well specified. We just need to do the obvious thing, which is to compute $\ubw(=\lbq)$ and $\ubq(=\lbw)$, and let the smaller number be the lower bound and the greater one be the upper bound. This way we get a single interval that is valid when either $\Qcal$ or $\Wcal$ is realizable (Theorems~\ref{thm:ciw_valid} and \ref{thm:ciq_valid}), and tight when both are. Furthermore, by looking at which value is lower, we can tell which class is misspecified (assuming one of them is well-specified), and this piece of information may provide guidance to the design of function approximation. 

\para{The Non-convex Case} When $\Qcal$ and $\Wcal$ are non-convex, the two intervals are still related in an interesting albeit more subtle manner: the two ``reversed intervals'' are generally tighter than the original intervals, but they are only valid under stronger realizability conditions. See proofs and further discussions in  App.~\ref{app:nonconvex}.
\begin{theorem} \label{thm:nonconvex}
	When $\qpi \in \Qcal$, $J(\pi) \in [\ubq, \lbq] \subseteq [\lbw, \ubw]$. When $\wpi \in \Wcal$, \\$J(\pi) \in [\ubw, \lbw]\subseteq [\lbq, \ubq]$.
\end{theorem}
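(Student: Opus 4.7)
The plan is to reduce the theorem to two very simple ingredients: (i) universal weak duality $\sup\inf\le\inf\sup$, which holds for \emph{any} $\Qcal,\Wcal$ without convexity or realizability; and (ii) the observation that $L(w,q)$ degenerates to the scalar $J(\pi)$ whenever one of its arguments is a ``true'' object ($\qpi$ or $\wpi$). Once both are in hand, each of the two claimed set inclusions is immediate.

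\textbf{Step 1 (weak duality, unconditional).} I would first record that, with no assumptions on $\Qcal$ or $\Wcal$,
\begin{align*}
\lbw \;=\; \supw\infq \Lwq \;\le\; \infq\supw \Lwq \;=\; \ubq, \qquad
\lbq \;=\; \supq\infw \Lwq \;\le\; \infw\supq \Lwq \;=\; \ubw.
\end{align*}
These are the only ``containment'' facts I will need; they will convert the sharper interval produced in Steps~2--3 into a subset of the looser one.

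\textbf{Step 2 (case $\qpi\in\Qcal$).} I plug $q=\qpi$ into $\Lwq = q(s_0,\pi) + \EE_w[r+\gamma q(s',\pi)-q(s,a)]$. Since $\qpi$ satisfies the Bellman equation, the residual $r+\gamma\qpi(s',\pi)-\qpi(s,a)$ has zero conditional expectation given $(s,a)$, so $\EE_w[\,\cdot\,]=\EE_\mu[w(s,a)\cdot 0]=0$ for \emph{every} $w$. Hence $\Lw{\qpi}=\qpi(s_0,\pi)=J(\pi)$. Taking $\supw$ and $\infw$ yields
$
\ubq = \infq\supw \Lwq \le \supw \Lw{\qpi}=J(\pi)
$
and
$
\lbq = \supq\infw \Lwq \ge \infw \Lw{\qpi}=J(\pi),
$
so $\ubq\le J(\pi)\le\lbq$. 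Combining with Step~1 gives $J(\pi)\in[\ubq,\lbq]\subseteq[\lbw,\ubw]$.

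\textbf{Step 3 (case $\wpi\in\Wcal$).} Symmetrically, I apply Lemma~\ref{lem:pel_w} at $w=\wpi$: because $\EE_{\wpi}[\,\cdot\,]=\EE_{d^\pi}[\,\cdot\,]$, the lemma gives $\EE_{\wpi}[r+\gamma q(s',\pi)-q(s,a)]=J(\pi)-q(s_0,\pi)$, which cancels the initial term and leaves $\Lq{\wpi}=J(\pi)$ for every $q$. Taking $\supq$ and $\infq$ then yields $\ubw\le J(\pi)\le\lbw$, and combining with Step~1 delivers $J(\pi)\in[\ubw,\lbw]\subseteq[\lbq,\ubq]$.

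The only real pitfall is bookkeeping rather than mathematics: in the non-convex regime the labels ``UB'' and ``LB'' \emph{swap roles} under the relevant realizability hypothesis (already visible from Theorems~\ref{thm:ciw_tight} and \ref{thm:ciq_tight}, which give $\ubw\le\lbw$ under $\wpi\in\Wcal$ and $\ubq\le\lbq$ under $\qpi\in\Qcal$). The notation $[\ubq,\lbq]$ must therefore be read with $\ubq$ as the lower endpoint and $\lbq$ as the upper endpoint, and likewise for $[\ubw,\lbw]$. Once this convention is kept straight, the entire proof is the one-line identities of Steps~2--3 chained with the weak-duality inequalities of Step~1.
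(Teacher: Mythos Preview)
Your proof is correct and follows essentially the same route as the paper: weak duality $\sup\inf\le\inf\sup$ for the containment $[\ubq,\lbq]\subseteq[\lbw,\ubw]$, together with the identity $L(w,\qpi)=J(\pi)$ for all $w$ (the content of Lemma~\ref{lem:pel_q}) to trap $J(\pi)$ between $\ubq$ and $\lbq$, and symmetrically for the $\wpi\in\Wcal$ case via Lemma~\ref{lem:pel_w}. The only cosmetic difference is that you re-derive $L(w,\qpi)=J(\pi)$ from the Bellman equation whereas the paper simply cites Lemma~\ref{lem:pel_q}.
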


\subsection{Empirical Verification of Theoretical Predictions} \label{sec:ope_exp}
We provide  preliminary empirical results to support the theoretical predictions, that \\
(1) which bound is the upper bound depends on the expressivity of function classes (Sec.~\ref{sec:unify}), and \\
(2) our interval is tighter than the na\"ive intervals based on previous methods (App.\ref{app:naiveCI}). \\
We conduct the experiments in CartPole, with the target policy being softmax over a pre-trained $Q$-function with temperature $\tau$ (behavior policy is $\tau=1.0$). We use neural nets for $\Qcal$ and $\Wcal$, and optimize the losses using stochastic gradient descent ascent (SGDA);\footnote{Neural nets induce non-convex classes, violating Theorem~\ref{thm:convex}'s assumptions. However, such non-convexity may be mitigated by having an ensemble of neural nets, or in the recently popular infinite-width regime \citep{jacot2018neural}. Also, SGDA is symmetric w.r.t.~$w$ and $q$, and our implementation can be viewed as heuristic approximations of $\ubq$ \& $\lbw$ and $\ubw$ \& $\lbq$, resp., so we treat $\ubq \approx \lbw$ and $\ubw \approx \lbq$ in Sec.~\ref{sec:ope_exp}.}
see App.~\ref{app:exp} for more details. 
Fig.~\ref{fig:reverse} demonstrates the interval reversal phenomenon, where we compute $\lbq (\approx \ubw)$ and $\ubq (\approx \lbw)$ for  Q-networks of different sizes while fixing everything else. As predicted by theory, $\ubq > \lbq$ when $\Qcal$ is small (hence poorly specified), and the interval is flipped as $\Qcal$ increases.



\begin{figure}[ht!]
	\centering
\begin{minipage}[c]{0.38\textwidth}
	\includegraphics[width=\textwidth]{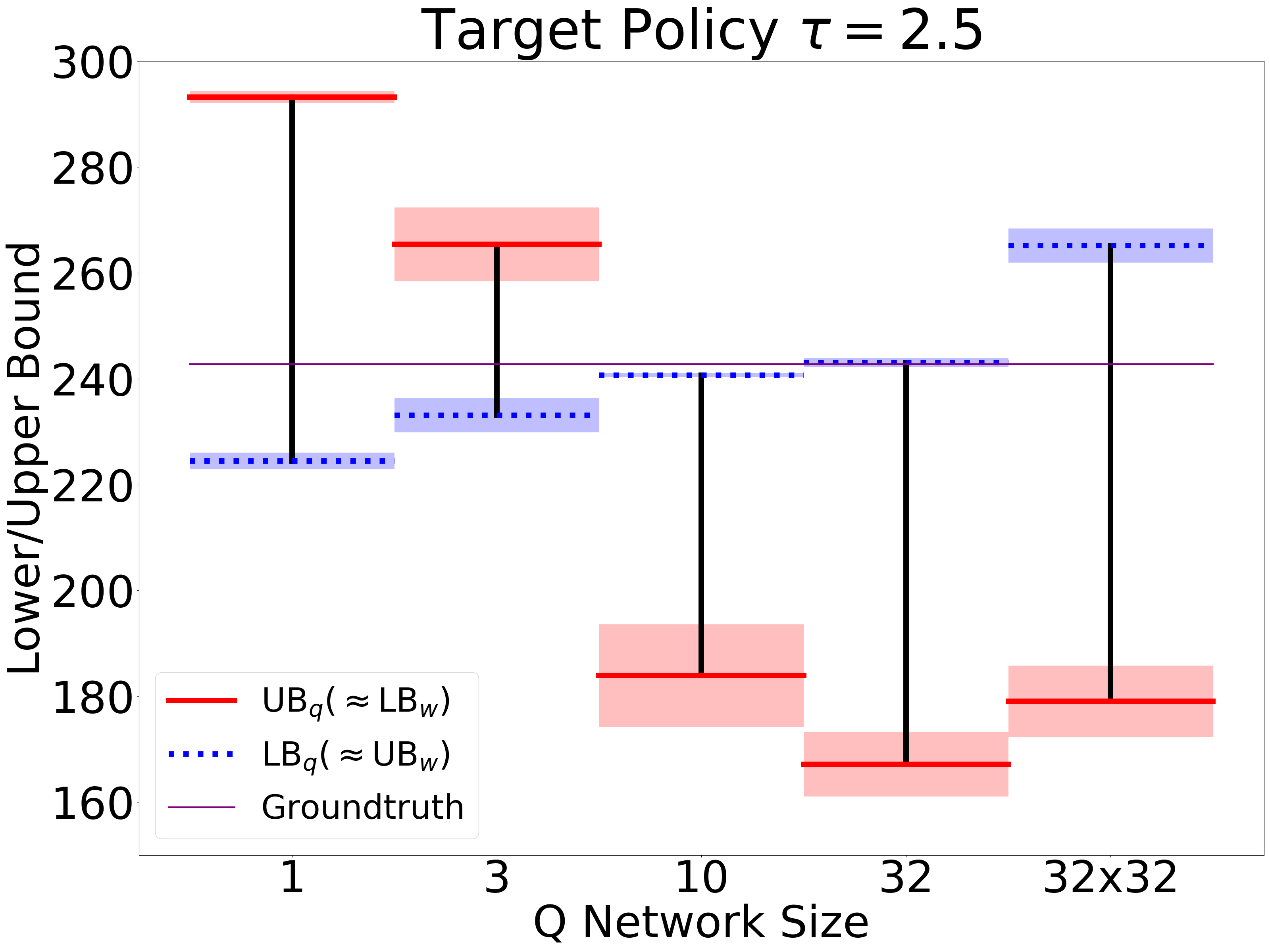}
\end{minipage}
\begin{minipage}[c]{0.24\textwidth}
\caption{ \textbf{(Left)} The interval reversal phenomenon; see Sec.~\ref{sec:ope_exp}. \label{fig:reverse}	} \vspace*{-1em}
\caption{\textbf{(Right)} Policy optimization  under non-exploratory data; see Sec.~\ref{sec:opt_exp} for details. \label{fig:opt}}
\end{minipage} \hfill
\begin{minipage}[c]{0.36\textwidth}
	\includegraphics[width=\textwidth]{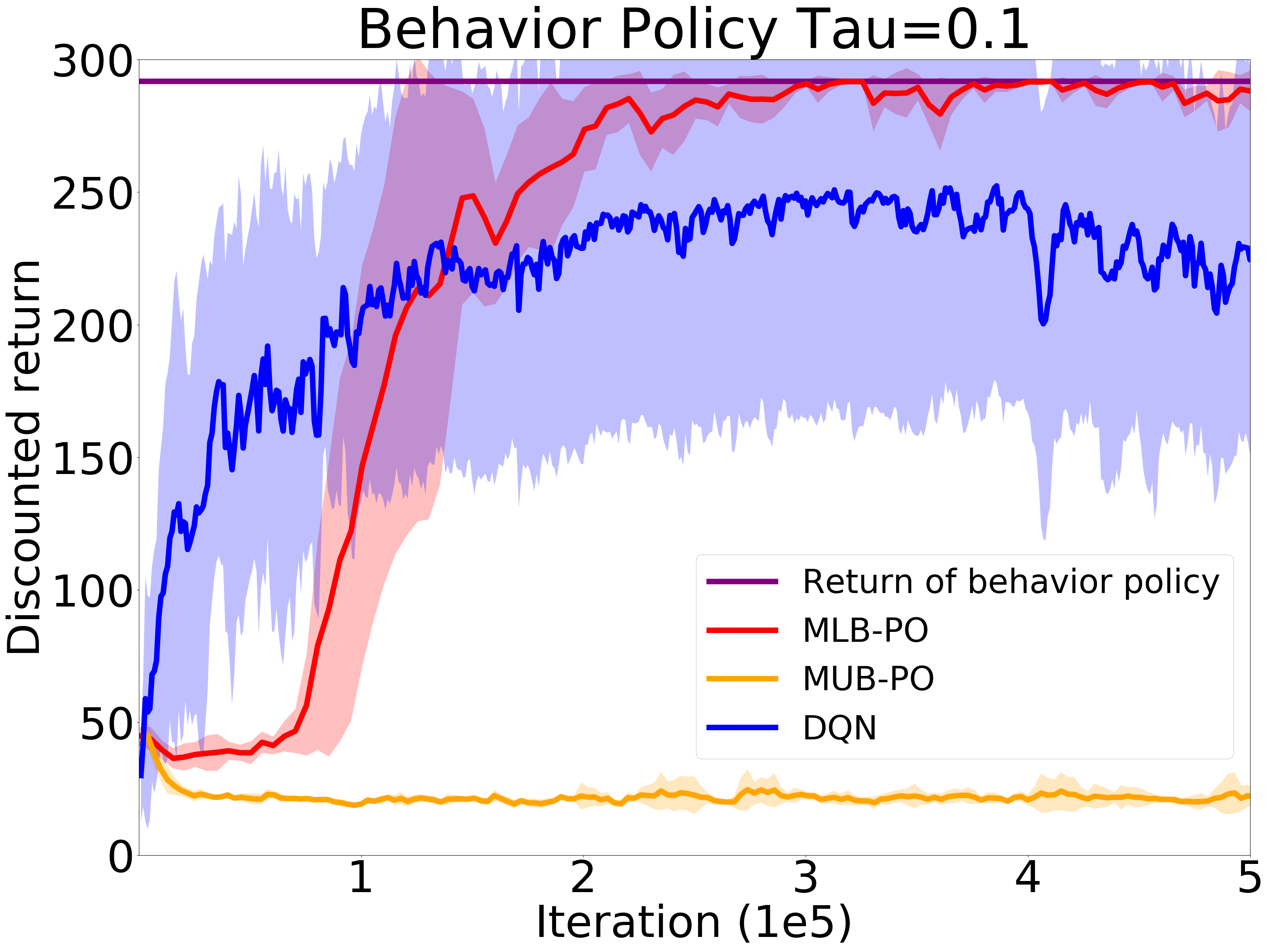}
\end{minipage}
\end{figure}
\begin{figure}[ht!]
\begin{minipage}[c]{0.24\textwidth}
\caption{Comparison of interval length and validity ratio between our interval and that of MQL as a function of sample size; see App.~\ref{app:bootstrap} for more details. \label{fig:len} All error bars show twice standard errors.}
\end{minipage}\hfill
\begin{minipage}[c]{0.74\textwidth}
	\includegraphics[width=0.48\textwidth]{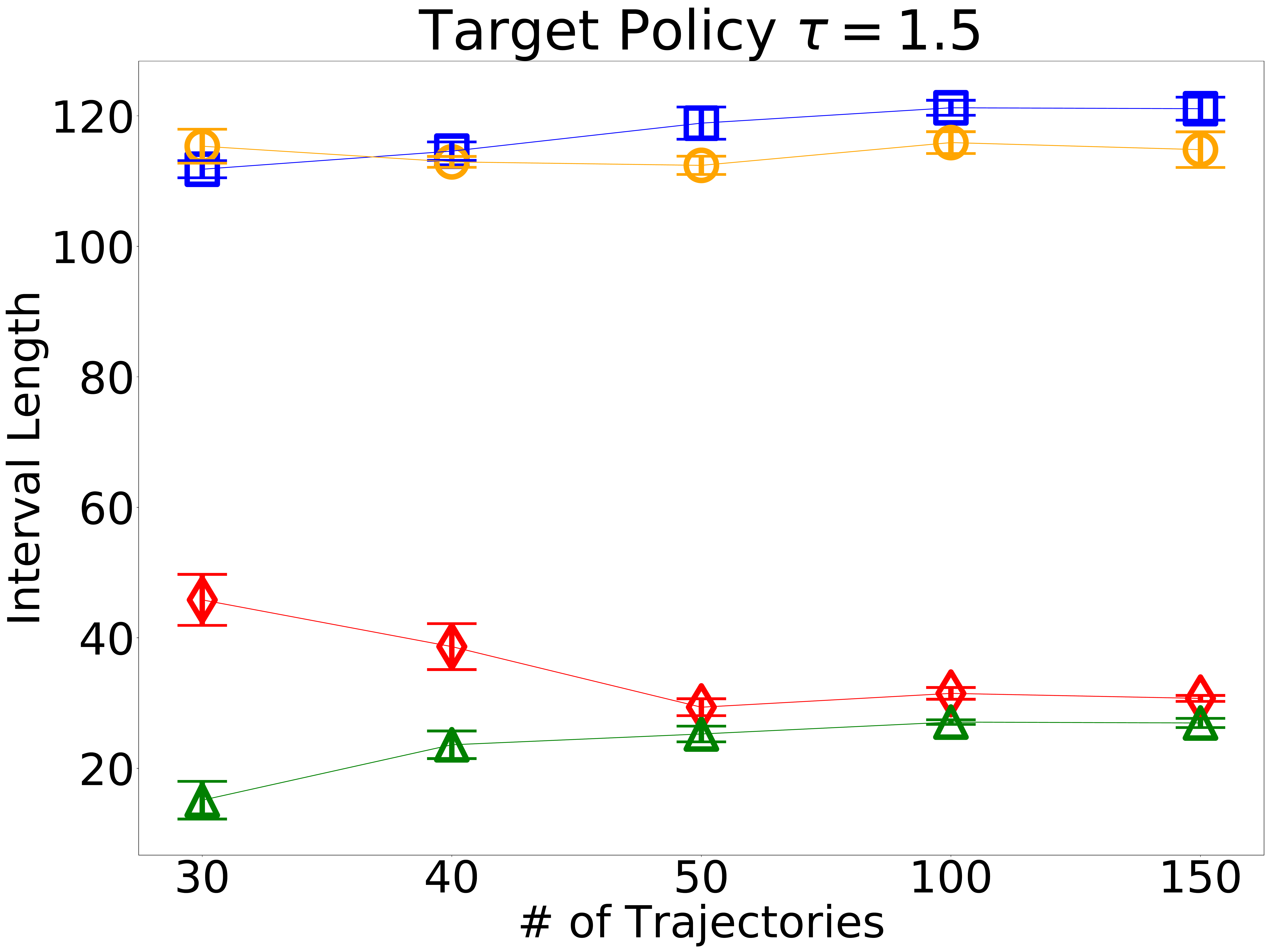}
	\includegraphics[width=0.48\textwidth]{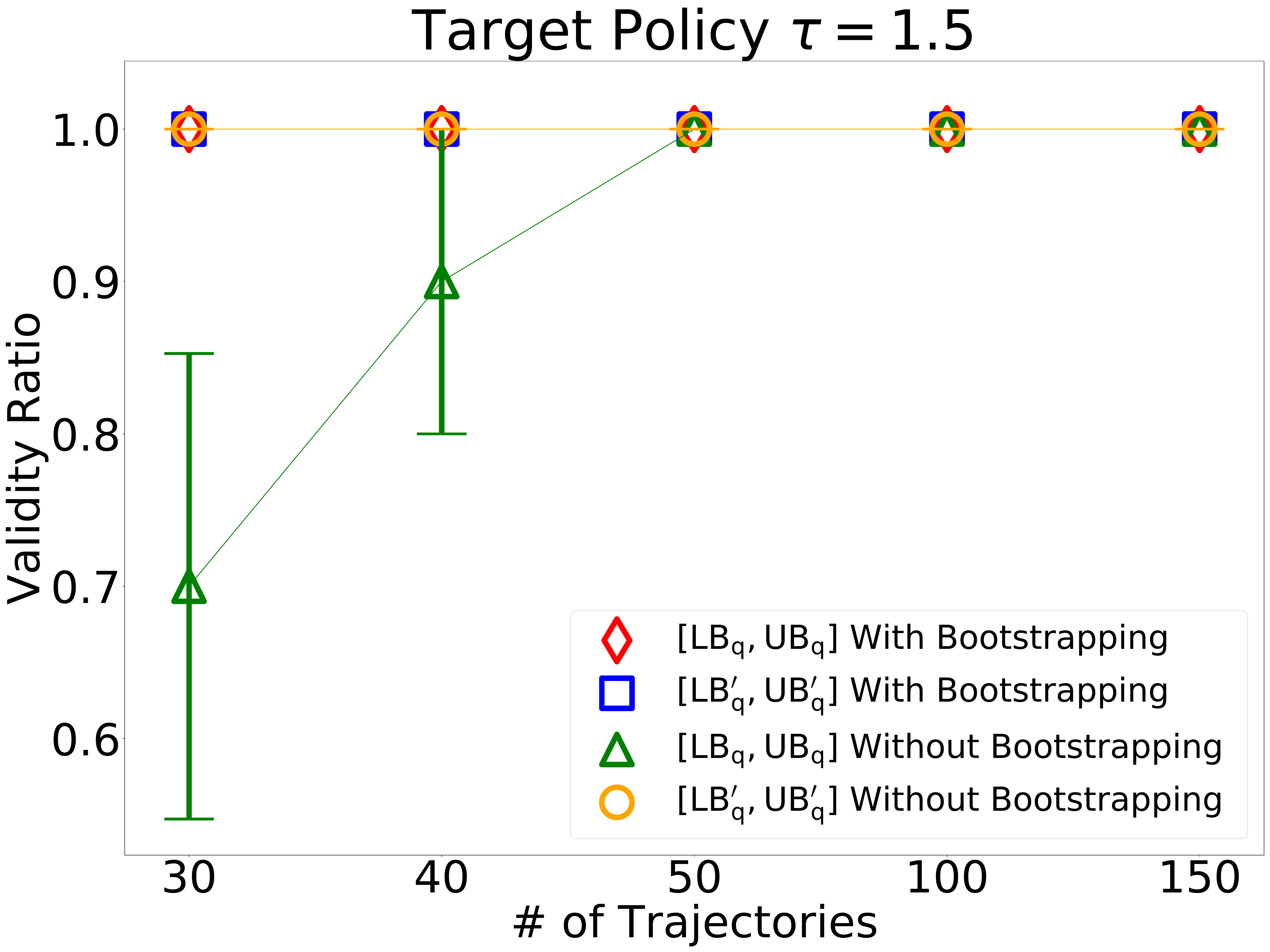}
\end{minipage} 
\end{figure}

Fig.~\ref{fig:len} (left) compares the interval lengths between our interval and that induced by MQL under different sample sizes,\footnote{We compare to the interval induced by MQL ($[\lbq', \ubq']$ in Table~\ref{tab:compare}), as \citet{uehara2019minimax} reported that MQL is more stable and performs better than MWL in discounted problems.}
and we see the former is significantly tighter than the latter. However, Fig.~\ref{fig:len} (right) reveals that when the sample size is small, our new interval is too aggressive and fails to contain $J(\pi)$ with a significant chance. This is also expected as our theory does not handle statistical errors, and validity guarantee can be violated due to data randomness especially in the small sample regime. While  a theoretical investigation of the statistical errors (beyond the use of generalization error bounds in App.~\ref{app:staterr} which can be loose) is left to future work, we empirically test a popular heuristic of bootstrapping intervals (App.\ref{app:bootstrap}), and show that after bootstrapping our interval becomes $100\%$ valid, while its length has only increased moderately especially in the large sample regime.

\section{On Policy Optimization with Insufficient Data  Coverage} \label{sec:opt}
OPE is not only useful as an evaluation method, but can also serve as a crucial component for off-policy policy optimization: if we can estimate the return $J(\pi)$ for a class of policies $\Pi$, then in principle we may use $\argmax_{\pi \in \Pi} J(\pi)$ to find the best policy in class. However, our interval gives two estimations of $J(\pi)$, an upper bound and a lower bound. Which one should we optimize? 
Traditional wisdom in robust MDP literature \citep[e.g.,][]{nilim2005robust} suggests \emph{pessimism}, i.e., optimizing the lower bound for the best worst-case guarantees. \emph{But which side of the interval is the lower bound?} 

As Sec.\ref{sec:unify} indicates, every expression could be the upper bound or the lower bound, depending on the realizability of $\Qcal$ and $\Wcal$. So the question becomes: which one of the function classes is more likely to be misspecified? 

\para{RL with Insufficient Data Coverage} 
Even if $\Wcal$ and $\Qcal$ are chosen with maximum care, the realizability of $\Wcal$ faces one additional and crucial challenge compared to $\Qcal$: 
if the data distribution $\mu$ itself does not  cover the support of $d^\pi$, then $\wpi$ may not even exist and hence $\Wcal$ will be poorly specified whatsoever. Since the historical dataset in real applications often comes with no exploratory guarantees, this scenario is highly likely and remains as a crucial yet understudied challenge \citep{fujimoto2019off, liu2019off}.  
In this section we analyze the algorithms that optimize the upper and the lower bounds when $\Wcal$ is poorly specified. For convenience we will call them\footnote{We make the dependence of $\ubw$ and $\lbw$ on $\pi$ explicit since we consider multiple policies in this section.}
\begin{align}
\textrm{MUB-PO:} ~~\argmax_{\pi \in \Pi}~ \ubwpi, \qquad \textrm{MLB-PO:}~~ \argmax_{\pi \in \Pi}~ \lbwpi,
\end{align}
where MUB-PO/MLB-PO stand for minimax upper (lower) bound policy optimization. MLB-PO is essentially Fenchel AlgaeDICE \citep{nachum2019algaedice}, so our results provide further justification for this method. On the other hand, while MUB-PO may not be appropriate for exploitation, it exercises \emph{optimism} and may be better applied to induce exploration, that is, collecting new data with the policy computed by MUB-PO may improve the data coverage.

\subsection{Case Study: Exploration and Exploitation in a Tabular Scenario} \label{sec:tabular}
To substantiate the claim that MUB-PO/MLB-PO induce effective exploration/exploitation, we first consider a scenario where the solution concepts for exploration and exploitation are clearly understood, and show that MUB-PO/MLB-PO reduce to familiar algorithms. 

\para{Setup} Consider an MDP with finite and discrete state and action spaces. Let $\Sk \subset \Scal$ be a subset of the state space, and $s_0 \in \Sk$. Suppose the dataset contains $n$ transition samples from each $s \in \Sk, a\in\Acal$ with $n \to \infty$ (i.e., the transitions and rewards in $\Sk$ are fully known), and $0$ samples from $s \notin \Sk$.

\para{Known Solution Concepts} In such a simplified setting, effective exploration can be achieved by (the episodic version of) the  well-known Rmax algorithm \citep{brafman2003r,kakade2003sample}, which computes the optimal policy of the Rmax-MDP: this MDP has the same transition dynamics and reward function as the true MDP $M$ on states with sufficient data ($\Sk$), and the ``unknown'' states are assumed to have self-loops with $\Rmax$ rewards, making them appealing to visit and thus encouraging exploration. 
Similarly, when the goal is to output a policy with the best worst-case guarantee (exploitation), one simply changes $\Rmax$  to the minimally possible reward (``$\Rmin$'', which is $0$ for us).  
Below we show that MUB-PO and MLB-PO in this setting precisely correspond to the Rmax and the Rmin algorithms, respectively. 
\begin{proposition}\label{prop:Rmin}
	Consider the MDP and the dataset described above. Let $\Wcal = [0, \frac{|\Sk\times\Acal|}{1-\gamma}]^{\Scal\times\Acal}$ 
	and $\Qcal = [0, \Rmax/(1-\gamma)]^{\Scal\times\Acal}$. In this case, MUB-PO reduces to Rmax, and MLB-PO reduces to Rmin. 
\end{proposition}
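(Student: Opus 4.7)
The plan is to exploit the bilinear structure of $\Lwq$ and the box form of $\Qcal,\Wcal$ to show that for every $\pi$, $\lbqpi$ equals the value $J_{\mathrm{Rmax}}(\pi)$ of $\pi$ in the Rmax-MDP and $\ubqpi$ equals the value $J_{\mathrm{Rmin}}(\pi)$ of $\pi$ in the Rmin-MDP. Combined with $\ubwpi=\lbqpi$ and $\lbwpi=\ubqpi$ from Theorem~\ref{thm:convex} (applicable since both classes are compact convex boxes and $L$ is affine in each argument), this immediately identifies MUB-PO with $\argmax_\pi J_{\mathrm{Rmax}}(\pi)$ (Rmax) and MLB-PO with $\argmax_\pi J_{\mathrm{Rmin}}(\pi)$ (Rmin).

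The first concrete step is to evaluate the inner $\inf_w$ and $\sup_w$. Since the dataset forces $\mu(s,a) = 1/|\Sk\times\Acal|$ for $(s,a)\in\Sk\times\Acal$ and $\mu=0$ elsewhere, the product $\mu(s,a)w(s,a)$ ranges over $[0, 1/(1-\gamma)]$ independently at each point of the support. Hence
\begin{align*}
\inf_{w\in\Wcal}\Lwq &= q(s_0,\pi) + \tfrac{1}{1-\gamma}\sum_{(s,a)\in\Sk\times\Acal}\min\{0,\delta_q(s,a)\},
\end{align*}
where $\delta_q(s,a) := R(s,a)+\gamma\EE_{s'\sim P(s,a)}[q(s',\pi)]-q(s,a)$, and $\sup_{w\in\Wcal}\Lwq$ is analogous with $\max\{0,\delta_q\}$.

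For MUB-PO, I would establish $\lbqpi = J_{\mathrm{Rmax}}(\pi)$ by exhibiting two witnesses. For the $\ge$ direction, plug $q^\star := Q^\pi_{\mathrm{Rmax}}$ into $\sup_q$; it lies in $\Qcal$ since entries are in $[0,\Rmax/(1-\gamma)]$, and because the Rmax-MDP has the true dynamics on $\Sk$, the Bellman residual $\delta_{q^\star}(s,a)$ vanishes at every $(s,a)\in\Sk\times\Acal$, giving $\inf_w L(w,q^\star) = q^\star(s_0,\pi) = J_{\mathrm{Rmax}}(\pi)$. For the $\le$ direction, take any $q\in\Qcal$ and choose $w^\star(s,a) := d^\pi_{\mathrm{Rmax}}(s,a)/\mu(s,a)$ on $\Sk\times\Acal$; the bound $d^\pi_{\mathrm{Rmax}}\le 1/(1-\gamma)$ verifies $w^\star\in\Wcal$. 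Applying Lemma~\ref{lem:pel_w} inside the Rmax-MDP and splitting the occupancy between $\Sk$ and $\Scal\setminus\Sk$ yields
\begin{align*}
L(w^\star,q) = J_{\mathrm{Rmax}}(\pi) - \sum_{s\notin\Sk,\,a} d^\pi_{\mathrm{Rmax}}(s,a)\bigl[\Rmax + \gamma q(s,\pi) - q(s,a)\bigr],
\end{align*}
and since $d^\pi_{\mathrm{Rmax}}$ concentrates on $a=\pi(s)$ (or averages under $\pi(\cdot\mid s)$ in the stochastic case), the bracket reduces to $\Rmax-(1-\gamma)q(s,\pi)\ge 0$, so $L(w^\star,q)\le J_{\mathrm{Rmax}}(\pi)$ and the upper bound follows.

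The MLB-PO case is a mirror argument: use the Rmin-MDP ($0$-reward absorbing self-loops on $\Scal\setminus\Sk$), plug $q^\star := Q^\pi_{\mathrm{Rmin}}$ to obtain $\ubqpi \le J_{\mathrm{Rmin}}(\pi)$, and for the reverse take $w^\star := d^\pi_{\mathrm{Rmin}}/\mu$; the remainder becomes $-(1-\gamma)q(s,\pi)\le 0$ because $q\ge 0$, giving $L(w^\star,q)\ge J_{\mathrm{Rmin}}(\pi)$. I expect the main delicate point to be the sign analysis of the ``boundary remainder'' on $\Scal\setminus\Sk$: these terms depend on $q$ values at states the data never visits, and it is precisely the specific choice of $\Qcal = [0,\Rmax/(1-\gamma)]^{\Scal\times\Acal}$ together with $d^\pi$ concentrating on $\pi$'s own actions that makes them have the right sign. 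A second small check---that the constants $|\Sk\times\Acal|/(1-\gamma)$ defining $\Wcal$ are tight enough to realize the witness $w^\star$ but loose enough to realize $d^\pi_{\mathrm{Rmax}}/\mu$ and $d^\pi_{\mathrm{Rmin}}/\mu$ simultaneously---confirms the claim's constants are calibrated correctly.
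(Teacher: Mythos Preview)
Your proposal is correct and takes a genuinely different route from the paper's proof. The paper works directly with the ``$w$'' intervals $\lbwpi,\ubwpi$ and never invokes Theorem~\ref{thm:convex}; you instead compute the ``$q$'' intervals $\lbqpi,\ubqpi$ and use the minimax theorem to transfer the identities back. For the ``easy'' direction, the paper argues abstractly: since $\Qcal$ is tabular and realizable for \emph{every} MDP consistent with the data on $\Sk$, $\lbwpi$ is a valid lower bound of $J_M(\pi)$ for every such $M$, hence in particular $\lbwpi\le J_{\MRmin}(\pi)$ (and symmetrically for $\ubwpi$). You replace this ``consistent MDPs'' step by the concrete witness $q^\star=Q^\pi_{\mathrm{Rmax}}$ (resp.\ $Q^\pi_{\mathrm{Rmin}}$) and the observation that its Bellman residual vanishes on $\Sk\times\Acal$, which is more elementary and avoids the subtle point that the data-based expectation $\EE_w[\cdot]$ is unchanged across all MDPs in the consistent class. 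For the ``hard'' direction both proofs use the same witness $w^\star\propto d^\pi_{\MRmax}$ (resp.\ $d^\pi_{\MRmin}$) restricted to $\Sk$; the paper first argues that the inner $\sup_q$ (resp.\ $\inf_q$) is attained at $q\equiv\Rmax/(1-\gamma)$ (resp.\ $0$) on $\Scal\setminus\Sk$ and then verifies the identity via the occupancy Bellman equation, whereas your application of Lemma~\ref{lem:pel_w} inside the Rmax/Rmin-MDP packages the same computation more compactly and handles all $q\in\Qcal$ at once through the sign of $\Rmax-(1-\gamma)q(s,\pi)$ (resp.\ $-(1-\gamma)q(s,\pi)$). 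The paper's version has the conceptual payoff of making explicit the robust-MDP interpretation (the interval is valid for every consistent MDP, and Rmax/Rmin are the extremal elements), while your version is shorter and more computational.
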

Despite the intuitiveness of the statement, the proof is quite involved and requires repeated applications of the results established in Sec.~\ref{sec:ope}; we refer interested readers to App.~\ref{app:rmax} for proof details.

\subsection{Guarantees in the Function Approximation Setting}  \label{sec:opt_fa}
We give more general guarantee in the function approximation setting. For simplicity we do not consider e.g., approximation/estimation errors, and incorporating them is routine \citep{munos2008finite,farahmand2010error,chen2019information}.


\para{Exploitation with Well-specified $\Qcal$} We start with the guarantee of MLB-PO. 
\begin{proposition}[MLB-PO] \label{prop:mlbpo}
	Let $\Pi$ be a policy class, and assume $Q^\pi \in \conv(\Qcal) ~ \forall \pi \in \Pi$. Let $\hat \pi = \argmax_{\pi\in\Pi} \lbwpi$. Then, for any $\pi\in\Pi$,
	$
	J(\hat \pi) \ge \lbwpi. 
	$ 
	As a corollary, for any $\pi$ s.t.~$\wpi \in \Wcal$,  $J(\hat \pi) \ge J(\pi)$, that is, we compete with any policy whose importance weight is realized by $\Wcal$.
\end{proposition}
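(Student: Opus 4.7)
The plan is to chain two observations. For the main inequality $J(\hat\pi) \ge \lbwpi$ at an arbitrary $\pi \in \Pi$, I would first apply the validity direction of Theorem~\ref{thm:ciw_valid} at the policy $\hat\pi$: since $\hat\pi \in \Pi$, the blanket assumption $Q^\pi \in \conv(\Qcal)~\forall \pi \in \Pi$ yields $Q^{\hat\pi} \in \conv(\Qcal)$, so $J(\hat\pi) \ge \lbw^{\hat\pi}$. Then, by the definition of $\hat\pi$ as the maximizer of the map $\pi \mapsto \lbwpi$ over $\Pi$, we have $\lbw^{\hat\pi} \ge \lbwpi$ for every $\pi \in \Pi$. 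Chaining the two inequalities gives the claimed bound.

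For the corollary, suppose $\pi \in \Pi$ satisfies $\wpi \in \Wcal$. It suffices to show $\lbwpi \ge J(\pi)$, since combined with the main inequality this yields $J(\hat\pi) \ge \lbwpi \ge J(\pi)$. To establish $\lbwpi \ge J(\pi)$ under $\wpi \in \Wcal$, I would plug $w = \wpi$ into $\lbwpi = \sup_{w \in \Wcal} \inf_{q \in \Qcal} \Lwq$ and compute $L(\wpi, q)$ directly. Using $\EE_{\wpi}[\cdot] = \EE_{d^\pi}[\cdot]$ together with Lemma~\ref{lem:pel_w}, one obtains $L(\wpi, q) = q(s_0,\pi) + \EE_{d^\pi}[r + \gamma q(s',\pi) - q(s,a)] = J(\pi)$ for every $q$, so $\inf_q L(\wpi, q) = J(\pi)$ and hence $\lbwpi \ge J(\pi)$.

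The whole proof is essentially bookkeeping once we trust the value interval machinery from Sec.~\ref{sec:ope}, so there is no real ``hard part''; the only subtle point is the two-sided sandwiching of $\lbwpi$ around $J(\pi)$. By Theorem~\ref{thm:ciw_valid}, $\lbwpi \le J(\pi)$ whenever $Q^\pi \in \conv(\Qcal)$, whereas the plug-in calculation above shows $\lbwpi \ge J(\pi)$ whenever $\wpi \in \Wcal$; when both realizabilities hold the bound collapses to $\lbwpi = J(\pi)$, and this collapse is precisely what powers the competitive guarantee against any $\pi \in \Pi$ whose importance weight is realized by $\Wcal$. As a remark, an alternative route to $\lbwpi \ge J(\pi)$ would combine Theorems~\ref{thm:ciw_valid} and \ref{thm:ciw_tight} (which together force equality), but the direct Lemma~\ref{lem:pel_w} plug-in is the shortest path and does not require $Q^\pi \in \conv(\Qcal)$ in this step.
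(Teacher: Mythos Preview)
Your proof is correct and matches the paper's argument for the main inequality: apply Theorem~\ref{thm:ciw_valid} at $\hat\pi$ to get $J(\hat\pi)\ge\mathrm{LB_w^{\hat\pi}}$, then use the $\argmax$ definition of $\hat\pi$. For the corollary, the paper takes precisely the alternative route you mention---it combines Theorems~\ref{thm:ciw_valid} and \ref{thm:ciw_tight} to force $\lbwpi=J(\pi)$---whereas your direct plug-in of $w=\wpi$ via Lemma~\ref{lem:pel_w} is a slightly shorter path to the needed inequality $\lbwpi\ge J(\pi)$ and, as you observe, does not rely on $Q^\pi\in\conv(\Qcal)$ at that step.
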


\para{Exploration with (Less) Well-specified $\Qcal$} We then provide the exploration guarantee of MUB-PO, 
which is an ``optimal-or-explore'' statement, that either the obtained policy is near-optimal (to be defined below), or it will induce effective exploration. 
Perhaps surprisingly, our results suggest that MUB-PO for exploration might be significantly more robust against misspecified $\Qcal$ than MLB-PO for exploitation. 

The key idea behind the agnostic result is the following: instead of competing with $\max_{\pi\in\Pi} J(\pi)$ as the optimal value under the assumption that $Q^\pi \in \conv(\Qcal), \forall \pi\in\Pi$ (the same assumption as MLB-PO), we aim at a less ambitious notion of optimality under a substantially relaxed assumption; without any explicit assumption on $\Qcal$, we directly compete with
$ 
\max_{\pi\in\Pi: Q^\pi \in \conv(\Qcal)} J(\pi).
$ 
In words, we compete with any policy whose Q-function is realized by $\conv(\Qcal)$. When $Q^{\pi} \in \conv(\Qcal)$ for $\pi = \argmax_{\pi\in\Pi} J(\pi)$, we compete with the usual notion of optimal value. 
However, even if some (or most) policies' Q-functions elude $\conv(\Qcal)$, we can still compete with whichever policy whose Q-function is captured by $\conv(\Qcal)$. A similar notion of optimality has been used by \citet{jiang2017contextual}, and indeed their algorithm is closely related to MUB-PO, which we discuss in App.~\ref{app:olive}. 

We state a short version of MUB-PO's guarantee, with the full version deferred to App.~\ref{app:mubpo}. 
\begin{proposition}[MUB-PO, short ver.] \label{prop:mubpo}
	Let $\Pi$ be a policy class. Let $\hat \pi = \argmax_{\pi\in\Pi} \ubwpi$. Assuming $\|q\|_\infty \le \Rmax/(1-\gamma)~ \forall q\in\Qcal$, we have for any $w\in \Wcal$,
	$$
	\|w\cdot \mu - d^{\hat \pi}\|_1 \ge \frac{(1-\gamma) \left(\max_{\pi \in \Pi: Q^\pi\in\conv(\Qcal)} J(\pi) - J(\hat \pi)\right)}{2\Rmax}.
	$$
\end{proposition}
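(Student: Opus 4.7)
First I would pick $\pi^\star\in\argmax_{\pi\in\Pi:\,Q^\pi\in\conv(\Qcal)} J(\pi)$ (the bound is vacuous if this constraint set is empty, so assume it is nonempty). Since $Q^{\pi^\star}\in\conv(\Qcal)$, Theorem~\ref{thm:ciw_valid} gives $\mathrm{UB_w^{\pi^\star}}\ge J(\pi^\star)$, and because $\pi^\star\in\Pi$ while $\hat\pi$ maximizes $\ubwpi$ over $\Pi$, we obtain $\mathrm{UB_w^{\hat\pi}}\ge \mathrm{UB_w^{\pi^\star}}\ge J(\pi^\star)$. This reduces the proposition to showing, for each $w\in\Wcal$,
$$
\mathrm{UB_w^{\hat\pi}} - J(\hat\pi) \le \frac{2\Rmax}{1-\gamma}\,\|w\cdot\mu - d^{\hat\pi}\|_1,
$$
which then rearranges to the stated inequality (and is trivially true when the right-hand side of the proposition is nonpositive).

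For that reduction, fix any $w\in\Wcal$ and write $L(w,q) = q(s_0,\hat\pi)+\EE_w[r+\gamma q(s',\hat\pi)-q(s,a)]$ (instantiating $\pi=\hat\pi$ throughout). By the minimax definition, $\mathrm{UB_w^{\hat\pi}} \le \sup_{q\in\Qcal} L(w,q)$. The key algebraic move is Lemma~\ref{lem:pel_w} at $\hat\pi$, namely
$$
J(\hat\pi) - q(s_0,\hat\pi) = \EE_{d^{\hat\pi}}[r+\gamma q(s',\hat\pi)-q(s,a)].
$$
Subtracting this identity from the defining expression of $L(w,q)$ cancels the $q(s_0,\hat\pi)$ term and exposes a signed-measure structure,
$$
L(w,q) - J(\hat\pi) = \big(\EE_w - \EE_{d^{\hat\pi}}\big)\big[r+\gamma q(s',\hat\pi)-q(s,a)\big] = \sum_{s,a}\big(w(s,a)\mu(s,a) - d^{\hat\pi}(s,a)\big)\,g_q(s,a),
$$
where $g_q(s,a):=\EE_{r\sim R(s,a),\,s'\sim P(s,a)}[r+\gamma q(s',\hat\pi)-q(s,a)]$.

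Hölder's inequality then gives $L(w,q) - J(\hat\pi) \le \|w\cdot\mu - d^{\hat\pi}\|_1\cdot\|g_q\|_\infty$, and the range hypotheses $r\in[0,\Rmax]$ together with $\|q\|_\infty\le \Rmax/(1-\gamma)$ yield $\|g_q\|_\infty \le \Rmax + (1+\gamma)\Rmax/(1-\gamma) = 2\Rmax/(1-\gamma)$. Since this estimate is uniform in $q$, it survives the $\sup_{q\in\Qcal}$, completing the reduction. The only non-routine ingredient is the Lemma~\ref{lem:pel_w} substitution: without it one would be stuck with a residual $q(s_0,\hat\pi)-Q^{\hat\pi}(s_0,\hat\pi)$ term that is not coupled to $\|w\cdot\mu - d^{\hat\pi}\|_1$ and whose crude range bound would spoil the $\Rmax/(1-\gamma)$ scaling; once that rewrite is in place, everything else is Hölder plus one-line range estimates.
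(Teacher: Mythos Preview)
Your proof is correct and follows essentially the same route as the paper's: reduce to bounding $\mathrm{UB_w^{\hat\pi}}-J(\hat\pi)$ via validity (Theorem~\ref{thm:ciw_valid}) plus optimality of $\hat\pi$, drop the outer $\inf_w$, rewrite $L(w,q)-J(\hat\pi)$ as a difference of expectations under $w\cdot\mu$ and $d^{\hat\pi}$ of the Bellman residual $(\Tcal^{\hat\pi}q-q)$, and apply H\"older with the range bound $\|\Tcal^{\hat\pi}q-q\|_\infty\le 2\Rmax/(1-\gamma)$. The only cosmetic difference is that the paper writes $J(\hat\pi)=L(w_{\hat\pi/\mu},q)$ and first states an IPM bound before specializing to $\ell_1$, whereas you invoke Lemma~\ref{lem:pel_w} directly and go straight to H\"older; your phrasing is arguably cleaner since it sidesteps any question of whether $w_{\hat\pi/\mu}$ exists.
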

Recall that $w\in\Wcal$ is supposed to model the importance weight that coverts data $\mu$ to the occupancy of some policy, e.g., $\wpi \,\cdot\, \mu = d^\pi$. The proposition states that either $\hat \pi$ is near-optimal, or it will induce an occupancy that cannot be accurately modeled by \emph{any} importance weights in $\Wcal$ when applied on the current data distribution $\mu$. 
Hence, if $\Wcal$ is very rich and models all distributions covered by $\mu$, then $\hat{\pi}$ must visit new state-actions or it must be near-optimal. 


\subsection{Preliminary Empirical Results} \label{sec:opt_exp}
While we would like to test MLB-PO and MUB-PO in experiments, the joint optimization of $\pi, w, q$ is very challenging and remains an open problem \citep{nachum2019algaedice}. Similar to \citet{nachum2019algaedice}, we try a heuristic variant of MLB-PO and MUB-PO that works in near-deterministic environments; see App.~\ref{app:opt_exp} for details. As Fig.~\ref{fig:learning_curve} shows, when the behavior policy is non-exploratory ($\tau=0.1$ yields a nearly deterministic policy), MLB-PO can reliably achieve good performance despite the lack of explicit regularization towards the behavior policy, whereas DQN is more unstable and suffers higher variance. MUB-PO, on the other hand, fails to achieve a high value---which is also expected from theory---but is able to induce exploration in certain cases. We defer more detailed results and discussions to App.~\ref{app:opt_exp} due to space limit. 

\section{Conclusions and Open Problems}
We derive a minimax value interval for  off-policy evaluation. The interval is valid as long as either the importance-weight or the value-function class is well specified, and its length quantifies the misspecification error of the other class. Our highly simplified derivations only take a few steps from the basic Bellman equations, which condense and unify the derivations in existing works. 
When applied to off-policy policy optimization in face of insufficient data coverage, which is an important scenario of practical concerns, optimizing our lower and upper bounds over a policy class can induce effective exploitation and exploration, respectively. 

We conclude the paper with open problems for future work: 
\begin{itemize}[leftmargin=*]
\item We handled sampling errors via bootstrapping in the experiments. Are there statistically more effective and computationally more efficient solutions? 
\item MLB-PO and MUB-PO exhibit promising statistical properties but require difficult optimization. Can we develop principled and practically effective  strategies for optimizing these objectives?
\item The double robustness of our interval protects against the misspecification of either $\Qcal$ or $\Wcal$, but still requires one of them to be realizable to guarantee validity. Is it possible at all to develop an interval that is \emph{always} valid, and whose length is allowed to depend on the misspecification errors of both $\Qcal$ and $\Wcal$? If impossible, can we establish information-theoretic hardness, and does reformulating the problem in a practically relevant manner help circumvent the difficulty? 
\end{itemize}  
Answering these questions will be important steps towards reliable and practically useful off-policy evaluation. 

\section*{Broader Impact}
This work is largely of theoretical nature, trying to unify existing methods and pointing out their connections, with minimal proof-of-concept simulation experiments. Therefore, we do not foresee direct broader impact. That said, an important motivation for this work is to equip RL with off-line evaluation methods that rely on as few assumptions as possible, and in the long term this should contribute to a more  trustworthy framework for applying RL to real-world tasks, where reliable evaluation is indispensable. We warn, however, that even though we aim at a less ambitious goal of producing a valid interval (whose length may not go to $0$ as sample size increases), we still require unverifiable assumptions (realizability of $\conv(\Qcal)$ or $\conv(\Wcal)$). Therefore, the value intervals produced by this and subsequent papers should be interpreted and treated with care and not taken as-is in application scenarios. There are also several important aspects of building practically useful confidence intervals that are ignored in this paper (since we are still in the early stage of theoretical investigations), such as the handling of statistical errors and possible confoundedness in the data, which need to be addressed by future works before these methods can be readily deployed in applications. 

\begin{ack}
This project is partially supported by a Microsoft Azure University Grant. The authors thank Jinglin Chen for pointing out several mistakes/typos in an earlier draft of the paper. 
\end{ack}

\bibliography{bib,pfi3}

\begin{thebibliography}{37}
\providecommand{\natexlab}[1]{#1}
\providecommand{\url}[1]{\texttt{#1}}
\expandafter\ifx\csname urlstyle\endcsname\relax
  \providecommand{\doi}[1]{doi: #1}\else
  \providecommand{\doi}{doi: \begingroup \urlstyle{rm}\Url}\fi

\bibitem[Uehara et~al.(2020)Uehara, Huang, and Jiang]{uehara2019minimax}
Masatoshi Uehara, Jiawei Huang, and Nan Jiang.
\newblock {Minimax Weight and Q-Function Learning for Off-Policy Evaluation}.
\newblock In \emph{Proceedings of the 37th International Conference on Machine
  Learning (ICML-20)}, 2020.

\bibitem[Precup et~al.(2000)Precup, Sutton, and Singh]{precup2000eligibility}
Doina Precup, Richard~S Sutton, and Satinder~P Singh.
\newblock {Eligibility Traces for Off-Policy Policy Evaluation}.
\newblock In \emph{Proceedings of the 17th International Conference on Machine
  Learning}, pages 759--766, 2000.

\bibitem[Jiang and Li(2016)]{jiang2016doubly}
Nan Jiang and Lihong Li.
\newblock {Doubly Robust Off-policy Value Evaluation for Reinforcement
  Learning}.
\newblock In \emph{Proceedings of The 33rd International Conference on Machine
  Learning}, volume~48, pages 652--661, 2016.

\bibitem[Li et~al.(2015)Li, Munos, and Szepesv{\'a}ri]{li2015minimax}
Lihong Li, R{\'e}mi Munos, and Csaba Szepesv{\'a}ri.
\newblock Toward minimax off-policy value estimation.
\newblock In \emph{Proceedings of the 18th International Conference on
  Artificial Intelligence and Statistics}, 2015.

\bibitem[Liu et~al.(2018)Liu, Li, Tang, and Zhou]{liu2018breaking}
Qiang Liu, Lihong Li, Ziyang Tang, and Dengyong Zhou.
\newblock Breaking the curse of horizon: Infinite-horizon off-policy
  estimation.
\newblock In \emph{Advances in Neural Information Processing Systems}, pages
  5361--5371, 2018.

\bibitem[Feng et~al.(2020)Feng, Ren, Tang, and Liu]{feng2020accountable}
Yihao Feng, Tongzheng Ren, Ziyang Tang, and Qiang Liu.
\newblock Accountable off-policy evaluation with kernel bellman statistics.
\newblock In \emph{Proceedings of the 37th International Conference on Machine
  Learning (ICML-20)}, 2020.

\bibitem[Dai et~al.(2020)Dai, Nachum, Chow, Li, Szepesvari, and
  Schuurmans]{dai2020coindice}
Bo~Dai, Ofir Nachum, Yinlam Chow, Lihong Li, Csaba Szepesvari, and Dale
  Schuurmans.
\newblock {CoinDICE: Off-Policy Confidence Interval Estimation}.
\newblock In \emph{Advances in neural information processing systems}, 2020.

\bibitem[Xie et~al.(2019)Xie, Ma, and Wang]{XieTengyang2019OOEf}
Tengyang Xie, Yifei Ma, and Yu-Xiang Wang.
\newblock Towards optimal off-policy evaluation for reinforcement learning with
  marginalized importance sampling.
\newblock In \emph{Advances in Neural Information Processing Systems 32}, pages
  9665--9675. 2019.

\bibitem[Nachum et~al.(2019{\natexlab{a}})Nachum, Chow, Dai, and
  Li]{ChowYinlam2019DBEo}
Ofir Nachum, Yinlam Chow, Bo~Dai, and Lihong Li.
\newblock Dualdice: Behavior-agnostic estimation of discounted stationary
  distribution corrections.
\newblock In \emph{Advances in Neural Information Processing Systems 32}.
  2019{\natexlab{a}}.

\bibitem[Liu et~al.(2020)Liu, Bacon, and Brunskill]{liu2019understanding}
Yao Liu, Pierre-Luc Bacon, and Emma Brunskill.
\newblock Understanding the curse of horizon in off-policy evaluation via
  conditional importance sampling.
\newblock In \emph{Proceedings of the 37th International Conference on Machine
  Learning (ICML-20)}, 2020.

\bibitem[Liu et~al.(2019)Liu, Swaminathan, Agarwal, and Brunskill]{liu2019off}
Yao Liu, Adith Swaminathan, Alekh Agarwal, and Emma Brunskill.
\newblock Off-policy policy gradient with state distribution correction.
\newblock In \emph{Proceedings of the 35th Conference on Uncertainty in
  Artificial Intelligence (UAI-19)}, 2019.

\bibitem[Rowland et~al.(2020)Rowland, Harutyunyan, van Hasselt, Borsa, Schaul,
  Munos, and Dabney]{rowland2019conditional}
Mark Rowland, Anna Harutyunyan, Hado van Hasselt, Diana Borsa, Tom Schaul,
  R{\'e}mi Munos, and Will Dabney.
\newblock Conditional importance sampling for off-policy learning.
\newblock 108:\penalty0 45--55, 2020.

\bibitem[Liao et~al.(2019)Liao, Klasnja, and Murphy]{liao2019off}
Peng Liao, Predrag Klasnja, and Susan Murphy.
\newblock Off-policy estimation of long-term average outcomes with applications
  to mobile health.
\newblock \emph{arXiv preprint arXiv:1912.13088}, 2019.

\bibitem[Zhang et~al.(2019)Zhang, Dai, Li, and Schuurmans]{zhang2019gendice}
Ruiyi Zhang, Bo~Dai, Lihong Li, and Dale Schuurmans.
\newblock Gendice: Generalized offline estimation of stationary values.
\newblock In \emph{International Conference on Learning Representations}, 2019.

\bibitem[Zhang et~al.(2020)Zhang, Liu, and Whiteson]{zhang2020gradientdice}
Shangtong Zhang, Bo~Liu, and Shimon Whiteson.
\newblock {GradientDICE: Rethinking Generalized Offline Estimation of
  Stationary Values}.
\newblock In \emph{Proceedings of the 37th International Conference on Machine
  Learning (ICML-20)}, 2020.

\bibitem[Feng et~al.(2019)Feng, Li, and Liu]{feng2019kernel}
Yihao Feng, Lihong Li, and Qiang Liu.
\newblock A kernel loss for solving the bellman equation.
\newblock In \emph{Advances in Neural Information Processing Systems}, pages
  15430--15441, 2019.

\bibitem[Dud{\'\i}k et~al.(2011)Dud{\'\i}k, Langford, and Li]{dudik2011doubly}
Miroslav Dud{\'\i}k, John Langford, and Lihong Li.
\newblock Doubly {R}obust {P}olicy {E}valuation and {L}earning.
\newblock In \emph{Proceedings of the 28th International Conference on Machine
  Learning}, pages 1097--1104, 2011.

\bibitem[Kallus and Uehara(2019)]{KallusNathan2019EBtC}
Nathan Kallus and Masatoshi Uehara.
\newblock Efficiently breaking the curse of horizon: Double reinforcement
  learning in infinite-horizon processes.
\newblock \emph{arXiv preprint arXiv:1909.05850}, 2019.

\bibitem[Tang et~al.(2020)Tang, Feng, Li, Zhou, and Liu]{tang2019harnessing}
Ziyang Tang, Yihao Feng, Lihong Li, Dengyong Zhou, and Qiang Liu.
\newblock Harnessing infinite-horizon off-policy evaluation: Double robustness
  via duality.
\newblock \emph{ICLR 2020(To appear)}, 2020.

\bibitem[Nachum et~al.(2019{\natexlab{b}})Nachum, Dai, Kostrikov, Chow, Li, and
  Schuurmans]{nachum2019algaedice}
Ofir Nachum, Bo~Dai, Ilya Kostrikov, Yinlam Chow, Lihong Li, and Dale
  Schuurmans.
\newblock Algaedice: Policy gradient from arbitrary experience.
\newblock \emph{arXiv preprint arXiv:1912.02074}, 2019{\natexlab{b}}.

\bibitem[Nachum and Dai(2020)]{nachum2020reinforcement}
Ofir Nachum and Bo~Dai.
\newblock Reinforcement learning via fenchel-rockafellar duality.
\newblock \emph{arXiv preprint arXiv:2001.01866}, 2020.

\bibitem[Jacot et~al.(2018)Jacot, Gabriel, and Hongler]{jacot2018neural}
Arthur Jacot, Franck Gabriel, and Cl{\'e}ment Hongler.
\newblock Neural tangent kernel: Convergence and generalization in neural
  networks.
\newblock In \emph{Advances in neural information processing systems}, pages
  8571--8580, 2018.

\bibitem[Nilim and El~Ghaoui(2005)]{nilim2005robust}
Arnab Nilim and Laurent El~Ghaoui.
\newblock Robust control of markov decision processes with uncertain transition
  matrices.
\newblock \emph{Operations Research}, 53\penalty0 (5):\penalty0 780--798, 2005.

\bibitem[Fujimoto et~al.(2019)Fujimoto, Meger, and Precup]{fujimoto2019off}
Scott Fujimoto, David Meger, and Doina Precup.
\newblock Off-policy deep reinforcement learning without exploration.
\newblock In \emph{International Conference on Machine Learning}, pages
  2052--2062, 2019.

\bibitem[Brafman and Tennenholtz(2003)]{brafman2003r}
Ronen~I Brafman and Moshe Tennenholtz.
\newblock R-max-a general polynomial time algorithm for near-optimal
  reinforcement learning.
\newblock \emph{The Journal of Machine Learning Research}, 3:\penalty0
  213--231, 2003.

\bibitem[Kakade(2003)]{kakade2003sample}
Sham~Machandranath Kakade.
\newblock \emph{On the sample complexity of reinforcement learning}.
\newblock PhD thesis, University of College London, 2003.

\bibitem[Munos and Szepesv{\'a}ri(2008)]{munos2008finite}
R{\'e}mi Munos and Csaba Szepesv{\'a}ri.
\newblock Finite-time bounds for fitted value iteration.
\newblock \emph{Journal of Machine Learning Research}, 9\penalty0
  (May):\penalty0 815--857, 2008.

\bibitem[Farahmand et~al.(2010)Farahmand, Szepesv{\'a}ri, and
  Munos]{farahmand2010error}
Amir-massoud Farahmand, Csaba Szepesv{\'a}ri, and R{\'e}mi Munos.
\newblock Error {P}ropagation for {A}pproximate {P}olicy and {V}alue
  {I}teration.
\newblock In \emph{Advances in Neural Information Processing Systems}, pages
  568--576, 2010.

\bibitem[Chen and Jiang(2019)]{chen2019information}
Jinglin Chen and Nan Jiang.
\newblock Information-theoretic considerations in batch reinforcement learning.
\newblock In \emph{Proceedings of the 36th International Conference on Machine
  Learning}, pages 1042--1051, 2019.

\bibitem[Jiang et~al.(2017)Jiang, Krishnamurthy, Agarwal, Langford, and
  Schapire]{jiang2017contextual}
Nan Jiang, Akshay Krishnamurthy, Alekh Agarwal, John Langford, and Robert~E.
  Schapire.
\newblock Contextual decision processes with low {B}ellman rank are
  {PAC}-learnable.
\newblock In \emph{International Conference on Machine Learning}, 2017.

\bibitem[Neumann(1928)]{neumann1928theorie}
J~v Neumann.
\newblock Zur theorie der gesellschaftsspiele.
\newblock \emph{Mathematische annalen}, 100\penalty0 (1):\penalty0 295--320,
  1928.

\bibitem[Sion et~al.(1958)]{sion1958general}
Maurice Sion et~al.
\newblock On general minimax theorems.
\newblock \emph{Pacific Journal of mathematics}, 8\penalty0 (1):\penalty0
  171--176, 1958.

\bibitem[Voloshin et~al.(2019)Voloshin, Le, Jiang, and
  Yue]{voloshin2019empirical}
Cameron Voloshin, Hoang~M Le, Nan Jiang, and Yisong Yue.
\newblock Empirical study of off-policy policy evaluation for reinforcement
  learning.
\newblock \emph{arXiv preprint arXiv:1911.06854}, 2019.

\bibitem[Jiang(2018)]{nan_rmax_notes}
Nan Jiang.
\newblock \emph{{CS 598: Notes on Rmax exploration}}.
\newblock {University of Illinois at Urbana-Champaign}, 2018.
\newblock \url{http://nanjiang.cs.illinois.edu/files/cs598/note7.pdf}.

\bibitem[M{\"u}ller(1997)]{muller1997integral}
Alfred M{\"u}ller.
\newblock Integral probability metrics and their generating classes of
  functions.
\newblock \emph{Advances in Applied Probability}, 29\penalty0 (2):\penalty0
  429--443, 1997.

\bibitem[Mnih et~al.(2013)Mnih, Kavukcuoglu, Silver, Graves, Antonoglou,
  Wierstra, and Riedmiller]{mnih2013playing}
Volodymyr Mnih, Koray Kavukcuoglu, David Silver, Alex Graves, Ioannis
  Antonoglou, Daan Wierstra, and Martin Riedmiller.
\newblock Playing atari with deep reinforcement learning.
\newblock \emph{arXiv preprint arXiv:1312.5602}, 2013.

\bibitem[Wang(2017)]{wang2017primal}
Mengdi Wang.
\newblock Primal-dual $\pi$ learning: Sample complexity and sublinear run time
  for ergodic markov decision problems.
\newblock \emph{arXiv preprint arXiv:1710.06100}, 2017.

\end{thebibliography}
\bibliographystyle{unsrtnat}

\clearpage
\appendix


\newcommand{\bre}{\boldsymbol{\color{red}\EE_w[r]}}
\newcommand{\breh}{\boldsymbol{\color{red}\EE_{\hat w}[r]}}
\newcommand{\bbar}{\boldsymbol{\color{red}|}}
\newcommand{\bq}{\boldsymbol{\color{red}q(s_0, \pi)}}
\newcommand{\bqh}{\boldsymbol{\color{red}\hat{q}(s_0, \pi)}}

\begin{table}[t]
\caption{Comparison of the upper and lower bounds of $J(\pi)$ obtained by different OPE methods. Rows 1,2,4,5 are the bounds derived in this paper, and Rows 3,6 (with $\mathrm{'}$) are their na\"ive counterparts obtained by the approximation guarantees of previous methods, which are looser (see Appendix~\ref{app:naiveCI}). $\hat{w}$ (and $\hat{q}$) is the $w$ (and $q$) that attains the infimum in the rest of the expression, e.g., $\hat{w} := \argmin_{w}\sup_q|\Lmwl(w,q)|$. The \textbf{\color{red} red} terms highlight why the na\"ive bounds are loose. All expressions with subscript ``$\mathrm{w}$'' are valid upper or lower bounds when $\conv(\Qcal)$ is realizable, and those with ``$\mathrm{q}$'' are derived assuming realizable $\conv(\Wcal)$. \newline  $\Lmwl(w,q) := q(s_0, \pi) + \EE_w[\gamma q(s', \pi) - q(s,a)].$ \quad $\Lmql(w,q) := \EE_w[r + \gamma q(s', \pi) - q(s,a)].$ 
\newline Our loss $\Lwq = \EE_w[r] + \Lmwl(w,q) = q(s_0, \pi) + \Lmql(w,q).$  \label{tab:compare}}
\centering
\begin{tabular}{|c|c|c|}
	\hline
		& Expression & Remark \\ \hline
		$\ubw$ & $\inf_w \sup_q \Big( \bre + \Lmwl(w,q) \Big)$ &  New \\ \hline
		$\lbw$ & $\sup_w \inf_q \Big(\bre + \Lmwl(w,q) \Big)$ & Fenchel AlgaeDICE \citep{nachum2019algaedice}\\ \hline
		\tabincell{c}{$\ubw'$\\$\lbw'$} & $\breh \pm \inf_w \sup_q \bbar \Lmwl(w,q) \bbar$ & \tabincell{c}{MWL \citep{uehara2019minimax} \\ see also \citet{liu2018breaking, zhang2019gendice}} \\ \hline
		$\ubq$ & $\inf_q \sup_w \Big( \bq +\Lmql(w,q) \Big)$ & $=\lbw$ with convex $\Qcal$ and $\Wcal$ \\ \hline 
		$\lbq$ & $\sup_q \inf_w \Big(\bq + \Lmql(w,q) \Big)$ & $=\ubw$ with convex $\Qcal$ and $\Wcal$ \\ \hline
		\tabincell{c}{$\ubq'$\\$\lbq'$} & $\bqh \pm \inf_q \sup_w \bbar \Lmql(w,q) \bbar $ & MQL \citep{uehara2019minimax} \\ \hline
	\end{tabular}
\end{table}

\section{Proofs of Section~\ref{sec:ope}} 
\label{app:proof}

\begin{proof}[\textbf{Proof of Theorem~\ref{thm:ciw_valid}}]
	Let $w_1$ denote the $w$ that attains the infimum in $\ubw$.\footnote{Point-wise supremum is lower semi-continuous, i.e., $\supq \Lwq$ is lower semi-continuous in $w$ and hence the infimum is attainable.} 
	\begin{align*}
	\ubw - J(\pi) = &~ \supq \Lq{w_1} - L(w_1, Q^\pi) \tag{Lemma~\ref{lem:pel_q}: $\forall w$, $J(\pi) = \Lw{Q^\pi}$} \\
	\ge &~ \infw \supq \{\Lwq - \Lw{Q^\pi}\} \\
	= &~ \infw \supq \Lmwl(w, q-Q^\pi). \tag{$\EE_{w}[r]$ terms cancel}
	\end{align*}
	Similarly, let $w_2$ denote the $w$ that attains the supremum in $\lbw$,
	\begin{align*}
	J(\pi) - \lbw = &~ L(w_2, Q^\pi) - \infq \Lq{w_2}  
	\ge \infw \supq \{\Lw{Q^\pi} - \Lwq\}.
	\end{align*}
	The corollary follows because $\forall w$, both $\supq\{\Lwq - \Lw{Q^\pi}\}$ and $\supq \{\Lw{Q^\pi} - \Lwq\}$ are non-negative when $Q^\pi \in \conv(\Qcal)$, noting that $\Lw{\cdot}$ is affine.
\end{proof}

\begin{proof}[\textbf{Proof of Theorem~\ref{thm:ciw_tight}}]
	\begin{align*}
	\ubw - \lbw = &~ \infw \supq \Lwq - \supw \infq \Lwq \\
	= &~ \inf_{w, w'\in\Wcal} \left\{\supq \Lwq -  \infq \Lq{w'} \right\} \\
	\le &~ \infw \left\{\supq \Lwq -  \infq \Lwq \right\}. \tag{constraining $w=w'$}
	\end{align*}
	Noting that $\Lwq = \EE_w[r] + \Lmwl(w, q)$,  
	\begin{align*}
	\supq \Lwq -  \infq \Lwq
	= \supq \Lmwl(w,q) - \infq \Lmwl(w,q) \le 2\supq|\Lmwl(w,q)|. 
	\end{align*}
	When $\wpi\in \Wcal$, $\ubw\le \lbw$ follows from the fact that $\Lmwl(\wpi, q) \equiv 0$, $\forall q$.
\end{proof}

\begin{proof}[\textbf{Proof of Theorem~\ref{thm:ciq_valid}}]
	Let $q_1$ denote the $q$ that attains the infimum in $\ubq$. 
	\begin{align*}
	\ubq - J(\pi) = &~ \supw \Lw{q_1} - L(\wpi, q_1) \tag{$\forall q$, $J(\pi) = \Lq{\wpi}$} \\
	\ge &~ \infq \supw \{\Lwq - \Lq{\wpi}\}.
	\end{align*}
	Similarly, let $q_2$ denote the $q$ that attains the supremum in $\lbq$,
	\begin{align*}
	J(\pi) - \lbq = &~ L(\wpi, q_2) - \infw \Lw{q_2}  
	\ge \infq \supw \{\Lq{\wpi} - \Lwq\}.
	\end{align*}
	The corollary follows because $\forall q$, both $\supCw \{\Lwq - \Lq{\wpi}\}$ and $\supCw \{\Lq{\wpi} - \Lwq\}$ are non-negative when $\wpi \in \conv(\Wcal)$.
\end{proof}

\begin{proof}[\textbf{Proof of Theorem~\ref{thm:ciq_tight}}]
	\begin{align*}
	\ubq - \lbq = &~ \infq \supw \Lwq - \supq \infw \Lwq \\
	= &~ \inf_{q, q'\in\Qcal} \left\{\supw \Lwq -  \infw \Lw{q'} \right\} \\
	\le &~ \infq \left\{\supw \Lwq -  \infw \Lwq \right\}. \tag{constraining $q=q'$}
	\end{align*}
	Note that $\Lwq = q(s_0, \pi) + \Lmql(w, q)$, therefore, 
	\begin{align*}
	\supw \Lwq -  \infw \Lwq
	= \supw \Lmql(w,q) - \infw \Lmql(w,q) \le 2\supw|\Lmql(w,q)|. 
	\end{align*}
	When $\qpi\in \Qcal$, $\ubq\le \lbq$ follows from the fact that $\Lmql(w, \qpi) \equiv 0$, $\forall w$.
\end{proof}

\begin{proof}[\textbf{Proof of Theorem~\ref{thm:convex}}]
Since $\Lwq$ is bi-affine---which implies that it is convex-concave---by the minimax theorem \citep{neumann1928theorie, sion1958general}, the order of $\inf$ and $\sup$ are exchangeable. 
\end{proof}

\section{Relationships between the Intervals in the Non-convex Case} \label{app:nonconvex}
When $\Qcal$ and $\Wcal$ are non-convex, Theorem~\ref{thm:convex} no longer holds but the intervals derived in Sec.~\ref{sec:mwl-ci} and \ref{sec:mql-ci} are still related in an interesting albeit more subtle manner; as stated in Theorem~\ref{thm:nonconvex}, the two ``reversed intervals'' are generally tighter than the original intervals, but they are only valid under stronger realizability conditions. 

Below we provide the proof of Theorem~\ref{thm:nonconvex}.
\begin{proof}[\textbf{Proof of Theorem~\ref{thm:nonconvex}}]
	We only prove the first statement under $\qpi \in \Qcal$, and the proof for the second statement under $\wpi\in\Wcal$ is similar and omitted. First of all, $[\ubq, \lbq] \subseteq [\lbw, \ubw]$ holds without assuming $\qpi \in \Qcal$, as $\sup\inf(\cdot) \le \inf \sup(\cdot)$. Hence it suffices to show $J(\pi) \in [\ubq, \lbq]$. 
	
	To prove this statement, we go back to Lemma~\ref{lem:pel_q}, which states $J(\pi) = L(w, \qpi)$ holds for arbitrary $w$. Therefore,
	$$
	\sup_{w\in\Wcal} L(w, \qpi) = J(\pi) = \inf_{w\in\Wcal} L(w, \qpi). 
	$$
	Now if we have $\Qcal$ such that $\qpi\in\Qcal$, we immediately have
	$$
	\infq \supw L(w, q) \le \supw L(w, \qpi) = J(\pi) = \infw L(w, q) \le  \supq \infw L(w, q).
	$$
	This completes the proof. Compared to the derivation in Sec.~\ref{sec:mwl-ci}, we have changed the order in which $\Qcal$ and $\Wcal$ are introduced. As a consequence, we need to optimize $q\in\Qcal$ over the objectives $\supw L(w,q)$ and $\infw L(w,q)$ which are no longer affine in $q$ (due to $\supw$ and $\infw$). For this reason, $\qpi \in \conv(\Qcal)$ is no longer sufficient to guarantee the validity of the interval, and we need the stronger condition $\qpi\in\Qcal$. 
\end{proof}

In Theorem~\ref{thm:nonconvex} we provide tighter intervals with stronger realizability assumptions (e.g., $Q^\pi \in \Qcal$ instead of $Q^\pi \in \conv(\Qcal)$). Below we show that such assumptions are necessary, that when we only have $Q^\pi \in \conv(\Qcal)$ but not $Q^\pi \in \Qcal$, the tighter interval $[\ubq, \lbq]$ can be invalid. Similar conclusions hold for $[\ubw, \lbw]$ which we do not go over in detail. 

\begin{proposition}\label{prop:example_q_real}
There exists an MDP, a target policy $\pi$, a data distribution $\mu$, and function classes $\Qcal$ and $\Wcal$ satisfying $Q^\pi\notin\Qcal$ but $Q^\pi\in C(\Qcal)$, where $[\ubq, \lbq]$ is invalid.
\end{proposition}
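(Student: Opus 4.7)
The plan is to exhibit an explicit minimal counterexample. I would work in a one-state MDP with $\Scal=\{s_0\}$, $\Acal=\{a_1,a_2\}$, discount $\gamma=0$, rewards $r(s_0,a_1)=1$, $r(s_0,a_2)=0$, target policy $\pi(s_0)=a_1$, and uniform data distribution $\mu(s_0,a_i)=1/2$. Then $J(\pi)=1$, $\qpi(s_0,a_1)=1$, $\qpi(s_0,a_2)=0$, and the loss collapses to $\Lwq = q(s_0,a_1) + \tfrac12 w(s_0,a_1)(1-q(s_0,a_1)) - \tfrac12 w(s_0,a_2)\,q(s_0,a_2)$, which is trivial to evaluate.

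Next I would take $\Qcal=\{q_1,q_2\}$ with $q_1=(2,1)$ and $q_2=(0,-1)$ (written as $(q(s_0,a_1),q(s_0,a_2))$), so that $\tfrac12(q_1+q_2)=\qpi$; this gives $\qpi \in \conv(\Qcal)$ while $\qpi \notin \Qcal$. For the weight class I would pick $\Wcal=\{w_1,w_2\}$ with $w_1\equiv 0$ and $w_2\equiv 4$. A direct plug-in yields $L(w_1,q_1)=2$, $L(w_2,q_1)=-2$, $L(w_1,q_2)=0$, $L(w_2,q_2)=4$, whence $\supw \Lq{q_1}=2$, $\supw \Lq{q_2}=4$, $\infw \Lq{q_1}=-2$, $\infw \Lq{q_2}=0$. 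Hence $\ubq=\infq\supw \Lwq = 2$ and $\lbq=\supq\infw \Lwq = 0$, so $\ubq > J(\pi) > \lbq$: not only does $[\ubq,\lbq]$ fail to contain $J(\pi)$, the interval is reversed.

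The reason this construction works is conceptual: by Lemma~\ref{lem:pel_q}, $\Lw{\qpi}=J(\pi)$ for every $w$, and the bi-affinity of $L$ forces $\tfrac12 L(w,q_1)+\tfrac12 L(w,q_2) = J(\pi)$ for every $w$. So for each fixed $w$, the two values $L(w,q_1)$ and $L(w,q_2)$ straddle $J(\pi)$. Applying $\supw$ (or $\infw$) to each $q_i$ \emph{separately} destroys this linearity-in-$q$ identity: with two distinct weights, one can push $L(w,q_1)$ above $J(\pi)$ using $w_1$ and simultaneously push $L(w,q_2)$ above $J(\pi)$ using $w_2$, so \emph{both} $\supw\Lq{q_i}$ lie strictly above $J(\pi)$, and therefore $\ubq>J(\pi)$. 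The same mechanism applied to $\infw$ forces $\lbq<J(\pi)$. This is precisely the phenomenon that the weaker condition $\qpi\in\conv(\Qcal)$ cannot rule out, since without $\qpi$ itself sitting in $\Qcal$ we cannot instantiate the outer $\inf_q$ or $\sup_q$ at $\qpi$.

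The only real obstacle is aesthetic: calibrating $q_1,q_2$ and $w_2$ so that all four strict inequalities (two for $\ubq>J(\pi)$ and two for $\lbq<J(\pi)$) hold simultaneously. This forces both $q_i$ to differ visibly from $\qpi$ and $w_2$ to be large enough to flip the sign of the residual $r-q(s,a)$ on both actions; the integer choices above were made to make every step a matter of inspection, and no further machinery beyond Lemma~\ref{lem:pel_q} is needed.
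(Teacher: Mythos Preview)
Your argument is correct and follows the same template as the paper's proof: pick $\Qcal=\{q_1,q_2\}$ with $Q^\pi=\tfrac12(q_1+q_2)$ and exhibit, for each $q_i$, a weight in $\Wcal$ that pushes $L(\cdot,q_i)$ strictly above $J(\pi)$, forcing $\ubq>J(\pi)$ (and symmetrically for $\lbq$). The paper carries this out in a three-state deterministic MDP with a uniform stochastic target policy, positive discount, and a continuum $\Wcal$ of normalized nonnegative weights; you collapse everything to a one-state, $\gamma=0$ instance with a two-element $\Wcal=\{0,4\}$, so the verification becomes a $2\times 2$ table of $L$-values. Your construction is more elementary and your conceptual paragraph---bi-affinity forces $L(w,q_1)$ and $L(w,q_2)$ to straddle $J(\pi)$ for each fixed $w$, while taking $\sup_w$ separately over the two $q_i$'s destroys this---makes the mechanism especially transparent; the paper's version, on the other hand, shows the failure is not an artifact of the degenerate choices $\gamma=0$ or $w\equiv 0$.
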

\begin{proof}
We consider the deterministic MDP in Figure \ref{Fig:MDP_Example} with 3 states and 2 actions. $s_0$ is the only initial state, and $s_2$ is an absorbing state. $r(s,a)$ equals 1 if and only if $s=s_0$ and $a=a_0$, otherwise 0.
We consider uniform policy $\pi$, i.e. $\pi(s_i|a_j)=0.5$ for all $i\in\{0,1,2\},j\in\{0,1\}$. Let $\mu$ be the uniform distribution over $\Scal\times\Acal$.

\begin{figure*}[h!]
    \centering
	\includegraphics[scale=1.0]{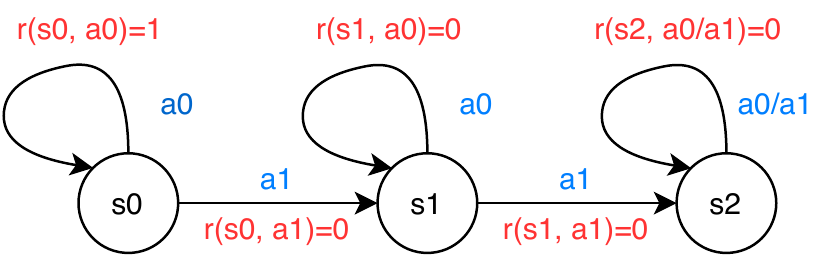}
	\caption{The MDP used in the proof of Proposition~\ref{prop:example_q_real}.}\label{Fig:MDP_Example}
\end{figure*}
We construct $\Qcal$ so that it contains two functions $q_1$ and $q_2$, defined as:
\begin{align*}
q_1(s,a)= Q^\pi(s,a)+ \epsilon \cdot \mathbb{I}[s=s_1], \qquad
q_2(s,a)=
    Q^\pi(s,a)-\epsilon \cdot \mathbb{I}[s=s_1], 
\end{align*}
for some $\epsilon>0$. It is easy to verify that $Q^\pi\notin \Qcal$ but $Q^\pi=\frac{1}{2}(q_1 + q_2) \in C(\Qcal)$. Also let $\Wcal$ be
\begin{align*}
    \mathcal{W}=\left\{\frac{1}{1-\gamma}\cdot\frac{f(s,a)}{\EE_\mu[f(s,a)]}: f \in [0, |\Scal\times\Acal|]^{\Scal\times\Acal}\right\}
\end{align*}
We pick out two elements $w_1,w_2$ from $\Wcal$: 
\begin{align*}
w_1(s,a)= \frac{6 \mathbb{I}[s=s_0,a=a_1]}{1-\gamma}, \qquad 
w_2(s,a)= \frac{6 \mathbb{I}[s=s_1,a=a_1]}{1-\gamma}.
\end{align*}
Let $
L(w,q):=q(s_0,\pi)+\EE_{w}[r+\gamma q(s',\pi)-q(s,a)]
$, and
\begin{align*}
    \sup_{w\in\Wcal} L(w,q_1)-J(\pi)=&\sup_{w\in\Wcal} L(w,q_1)-Q^\pi(s_0,\pi)\geq L(w_1,q_1)-Q^\pi(s_0,\pi)\\
    =&q_1(s_0,\pi)+\EE_{w_1}[r+\gamma q_1(s',\pi)-q_1(s,a)]-Q^\pi(s_0,\pi)\\
    =&\frac{r(s_0,a_1)+\gamma (\frac{1}{2}q_1(s_1,a_0)+\frac{1}{2}q_1(s_1,a_1))-q_1(s_0,a_1)}{1-\gamma}\\
    =&\frac{\gamma\epsilon}{1-\gamma}>0.\\
    \sup_{w\in\Wcal} L(w,q_2)-J(\pi)=&\sup_{w\in\Wcal} L(w,q_2)-Q^\pi(s_0,\pi)\geq L(w_2,q_2)-Q^\pi(s_0,\pi)\\
    =&q_2(s_0,\pi)+\EE_{w_2}[r+\gamma (q_2(s',\pi))-q_2(s,a)]-Q^\pi(s_0,\pi)\\
    =&\frac{r(s_1,a_1)+\gamma (\frac{1}{2}q_2(s_2,a_0)+\frac{1}{2}q_2(s_2,a_1))-q_2(s_1,a_1)}{1-\gamma}\\
    =&\frac{\epsilon}{1-\gamma}>0.
\end{align*}
Therefore,
$
\inf_{q\in\Qcal}\sup_{w\in\Wcal}L(w,q)-J(\pi)=\min\{\sup_{w\in\Wcal} L(w,q_1), \sup_{w\in\Wcal} L(w,q_2)\}>0,
$ 
which implies that the lower bound is invalid. The upper bound can be shown to be invalid in a similar manner.
\end{proof}

\section{On Double Robustness} \label{app:dr}
Our interval is valid when either function class is well-specified, which can be viewed as a type of double robustness. This is related to but different from the usual notion of double robustness in RL \citep{dudik2011doubly, jiang2016doubly, KallusNathan2019EBtC, tang2019harnessing}: classical doubly robust methods are typically ``meta''-estimators and require a value-function whose estimation procedure is unspecified, and the double robustness refers to the fact that the estimation is unbiased and/or enjoys reduced variance if the given value-function is accurate. In comparison, our double robustness gives weaker guarantees (valid interval, as opposed to accurate point estimates) but also requires much weaker assumptions (well-specified function \emph{class} as opposed to an accurate function), so it is important not to confuse the two types of double robustness.
\section{On AlgaeDICE's Notations} \label{app:algaedice}
In this section, we clarify the difference between the notations in AlgaeDICE \citep{nachum2019algaedice} and ours. 

We take the objective in their Eq.(15) as an example ($\max_\pi$ is dropped):
\begin{align*}
    &\min_{\nu:\Scal\times\Acal\rightarrow \mathbb{R}}\max_{\zeta:\Scal\times\Acal\rightarrow \mathbb{R}}(1-\gamma)\EE_{s_0\sim\mu_0,a_0\sim\pi(s_0)}[\nu(s_0,a_0)]\\
    &+\EE_{(s,a)\sim d^D,s'\sim T(s,a),a'\sim \pi(s')}[(\gamma \nu(s',a')+r(s,a)-\nu(s,a))\cdot\zeta(s,a)-\alpha\cdot f(\zeta)].
\end{align*}
As we can see, if we choose $\alpha=0$, the above is almost precisely our $\textrm{UB}_q$ ($=\textrm{LB}_w$ with convex classes): their $T$ is our $P$; their $\nu$ is our $q$; their $\zeta$ is our $w$; 
their $\nu(s_0, a_0)$ term corresponds to our $q(s_0, \pi)$ as we assume deterministic initial state w.l.o.g.~(see our Footnote~\ref{ft:initial}); 
they take expectation over the finite sample $d^D$ while we assume exact expectation over $\mu$ (from which $D$ can be sampled; see also Appendix \ref{app:staterr} for related discussions on generalization errors); finally, they derive the expression using fully expressive function classes $\Scal\times\Acal\to\RR$, where our derivation always uses restricted function classes $\Qcal$ and $\Wcal$.

Other than the above items, the only remaining difference is in the normalization convention: we define $J(\pi)$ and $d^\pi$ (and hence $\wpi$ and $w$) all in a unnormalized manner, whereas they take the normalized versions, which is why the expression still differs by a factor of $(1-\gamma)$.
\section{Comparison to Na\"ive Intervals \label{app:naiveCI}} 

We discuss in further details here why our upper and lower bounds are tighter than the na\"ive ones from previous works (see Table~\ref{tab:compare} and Remark~\ref{rem:naiveCI}). We compare $\ubq$ and $\ubq'$ as an example, and the situation for the other pairs of bounds are similar. 

Recall that the upper bound derived from MQL \citep{uehara2019minimax} is
\begin{equation}
\ubq' =  \hq(s_0,\pi)+\inf_{q\in\mathcal{Q}}\sup_{w\in\mathcal{W}}\Big|\EE_w[\Lmql(w,q)]\Big|,
\end{equation}
where $\Lmql(w,q) := \EE_w[r+\gamma q(s',\pi)-q(s,a)]$, and $\hq = \argmin_{q\in\mathcal{Q}}\supw \Big|\Lmql(w,q)\Big|$. In comparison, our bound is
\begin{equation}
\ubq = \infq\supw\Big(q(s_0,\pi)+\Lmql(w,q)\Big).
\end{equation}
Both bounds are valid upper bounds with realizable $\conv(\Qcal)$. Below we show that our upper bound is never higher than its na\"ive counterpart (this result does not require realizable $\Qcal$). 

\begin{proposition} $\ubq \le \ubq'$.
\end{proposition}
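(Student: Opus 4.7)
The plan is to prove this by a short chain of inequalities, using $\hat{q}$ as a specific witness in the outer infimum defining $\ubq$. Concretely, I would begin by recalling that
$$\ubq = \inf_{q\in\Qcal} \sup_{w\in\Wcal} \Big(q(s_0,\pi) + \Lmql(w,q)\Big) \le \sup_{w\in\Wcal} \Big(q(s_0,\pi) + \Lmql(w,q)\Big)$$
for every fixed $q \in \Qcal$, and in particular for $q = \hat q$. Since $q(s_0, \pi)$ does not depend on $w$, I can pull it out of the supremum to get
$$\ubq \le \hat q(s_0, \pi) + \sup_{w\in\Wcal} \Lmql(w, \hat q).$$

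The second step is the trivial observation that $\Lmql(w, \hat q) \le |\Lmql(w, \hat q)|$ for every $w$, so $\sup_w \Lmql(w, \hat q) \le \sup_w |\Lmql(w, \hat q)|$. Finally, by definition of $\hat q$ as the minimizer of $\sup_w |\Lmql(w, q)|$ over $q\in\Qcal$, we have $\sup_w |\Lmql(w, \hat q)| = \inf_{q\in\Qcal} \sup_{w\in\Wcal} |\Lmql(w, q)|$. Chaining these three facts gives $\ubq \le \hat q(s_0,\pi) + \inf_q \sup_w |\Lmql(w, q)| = \ubq'$, as desired.

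There is no genuine obstacle here: the bound is tight because $\ubq'$ effectively relaxes the inner $\sup_w$ in two ways, first by replacing the signed quantity $\Lmql(w,q)$ with its absolute value (which is what makes MQL's guarantee two-sided and hence loose), and second by decoupling the choice of $q$ used in the ``plug-in'' term $\hat q(s_0,\pi)$ from the $q$ used inside the sup, whereas $\ubq$ optimizes a single $q$ jointly. The proof is essentially one display of inequalities; the only mild subtlety worth noting in the write-up is that the argument does not require $Q^\pi \in \conv(\Qcal)$ or any realizability assumption, so the comparison is structural and holds regardless of whether either expression is actually an upper bound on $J(\pi)$. An entirely analogous three-line argument proves the corresponding inequalities $\lbq' \le \lbq$, $\ubw \le \ubw'$, and $\lbw' \le \lbw$ for the other pairs of bounds in Table~\ref{tab:compare}, which I would briefly remark on at the end.
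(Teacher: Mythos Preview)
Your proposal is correct and follows essentially the same argument as the paper's proof: both plug $\hat q$ into the outer infimum defining $\ubq$, then use $\Lmql(w,\hat q)\le |\Lmql(w,\hat q)|$, and finally invoke the definition of $\hat q$ to recover $\ubq'$. Your remarks on the two sources of tightness and on the absence of any realizability assumption also match the paper's subsequent commentary.
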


\begin{proof}
\begin{align}
\ubq = &~ \infq \Big(q(s_0,\pi)+\supw \Lmql(w,q)\Big) \nonumber\\
\le &~ \hq(s_0,\pi)+\supw \Lmql(w,\hq) \label{eq:tight1} \\
\le &~ \hq(s_0,\pi)+ \supw |\Lmql(w,\hq)| \label{eq:tight2} \\
= &~ \hq(s_0,\pi)+ \infq \supw |\Lmql(w,q)| = \ubq'. \tag*{\qedhere}
\end{align}
\end{proof}

\begin{remark}
As we can see, the tightness of $\ubq$ comes from two sources: (1) that we perform a unified optimization and put $q(s_0,\pi)$ inside $\infq$ (reflected in Eq.\eqref{eq:tight1}), and (2) that we do not need the absolute value in our objective (reflected in Eq.\eqref{eq:tight2}). 
On the other hand, if $\mathcal{W}$ is symmetric---that is, $-w\in \Wcal, \forall w\in\Wcal$---then we only enjoy the first kind of tightness.\footnote{This is because $\Lmql(w, q)$ is linear in $w$, and Eq.\eqref{eq:tight2} becomes an identity.}
\end{remark}

\begin{remark} $\ubq = \ubq'$ requires $\supw \Lmql(w, \hq) = \supw |\Lmql(w, \hq)|$ as a necessary condition. Similarly, one can show that $\lbq = \lbq'$ requires $-\infw \Lmql(w, \hq) = \supw|\Lmql(w, \hq)|$. Therefore, as long as
$$
-\supq\infw\Lmql(w,q) \neq \infq \supw \Lmql(w,q),
$$
at least one side of our interval will be strictly tighter than before.
\end{remark}


\section{Regularization} \label{app:reg}
Here we show how to introduce regularization into our intervals in a way similar to \citep{nachum2019algaedice}. 

\para{Derivation using Fenchel–Legendre Transformation} We exemplify how to adapt the derivation in Sec.~\ref{sec:mwl-ci} to obtain the regularized interval in Sec.~\ref{sec:mwl-ci}; the adaptation of Sec.~\ref{sec:mql-ci} is similar which we leave to the readers. Our derivation uses Fenchel-Legendre transformation in a way similar to \citep{nachum2019algaedice, nachum2020reinforcement}: we assume that $f: \RR \to \RR$ has a convex conjugate $f^*$ (i.e.,  $f^*(x^*) = \sup_x \{x\cdot x^* - f(x)\}$) that satisfies $f^*(0) = 0$. Below we show that by inserting such an $f^*$ into the derivation, we can obtain an interval that uses $f$ as the regularization function.
\begin{align*}
J(\pi) =&~ \qpi(s_0,\pi) + \EE_{s,a\sim \mu}\Big[ f^*\left(\EE_{r,s'|s,a}\Big[r + \gamma \qpi(s',\pi)-\qpi(s,a)\Big]\right)\Big] \tag{$\Tcal \qpi = \qpi$ and $f^*(0) = 0$}\\ 
=&~ \qpi(s_0,\pi) + \EE_{s,a\sim \mu}\Big[\sup_x \left(x\cdot\EE_{r,s'|s,a}\Big[r + \gamma \qpi(s',\pi)-\qpi(s,a)\Big] -f(x) \right)\Big]\\
\geq&~\qpi(s_0,\pi) + \EE_{s,a\sim \mu}\Big[w(s,a) \cdot \EE_{r,s'|s,a}\Big[r + \gamma \qpi(s',\pi)-\qpi(s,a)\Big]-f(w(s,a))\Big]\tag{Replace $x$ for each $(s,a)$ by $w(s,a)$ using an arbitrary $w\in\Wcal$} \\
= &~ \qpi(s_0,\pi) + \EE_{\mu}\Big[w(s,a) \cdot \Big(r + \gamma \qpi(s',\pi)-\qpi(s,a)\Big) - f(w(s,a))\Big].
\end{align*}
Although $f$ may be nonlinear, it is only applied to $w$ and the entire expression is still affine in $Q^\pi$, so we can replace $Q^\pi$ with $\infq$ as before: 
$$
J(\pi) \geq \infq q(s_0,\pi) +\EE_{\mu}\Big[w(s,a) \cdot \Big(r + \gamma \qpi(s',\pi)-\qpi(s,a)\Big) - f(w(s,a))\Big].
$$
Since it holds for arbitrary $w$, we take $\supw$ and obtain the lower bound:
\begin{align} \label{eq:lbw_reg}
J(\pi) \geq \supw \left(\infq L(w,q) - \EE_{\mu}[f(w(s,a))]\right).
\end{align}
Similarly, the upper bound (under $\qpi \in \conv(\Qcal)$) is
\begin{align}\label{eq:ubw_reg}
J(\pi) \leq \infw \left(\supq L(w,q) + \EE_{\mu}[f(w(s,a))]\right).
\end{align} 

\para{Properties of the Regularized Interval} From the above derivation, we see that the regularized interval is valid when $\qpi\in\conv(\Qcal)$, which is the same condition needed for the validity of the unregularized interval in Sec.~\ref{sec:mwl-ci}. One may naturally wonder what their relationship is, and the answer is very simple: adding any nontrivial regularization loosens the interval. 

To see this, first notice that Eq.\eqref{eq:lbw_reg} and \eqref{eq:ubw_reg} differ from $\lbw$ and $\ubw$ by a term $\EE_\mu[f(w(s,a))]$: such a term is subtracted from the lower bound and added to the upper bound. Now recall that our derivation crucially relies on $f^*(0)=0$. A direct consequence is that $f$ must be non-negative, as $f^*(0) = \inf_x f(x)$. Therefore, the regularization increases the upper bound and decreases the lower bound, which makes the interval looser. The tightest bound is obtained without any regularization, i.e., $\lbw$ and $\ubw$, which corresponds to $f \equiv 0$, whose convex conjugate is
\begin{equation*}
f^*(x^*)=\left\{
\begin{aligned}
0, &~~~ x = 0; \\
+\infty, &~~~ x\neq 0. \\
\end{aligned}
\right.
\end{equation*}

\section{OPE Experiments} \label{app:exp}

\begin{figure*}[t!]
	\centering
	\includegraphics[width=0.48\textwidth]{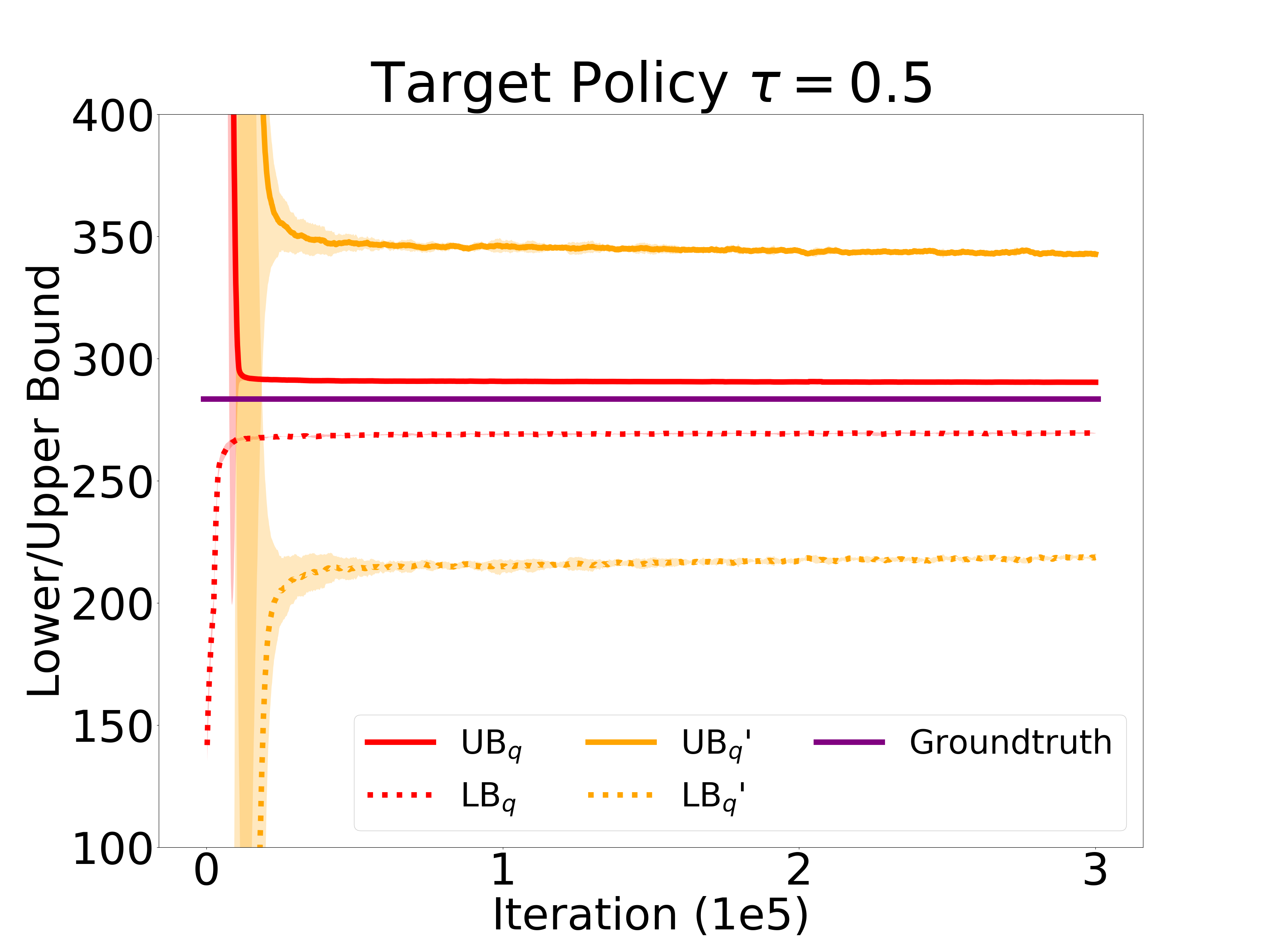} \hfill
	\includegraphics[width=0.48\textwidth]{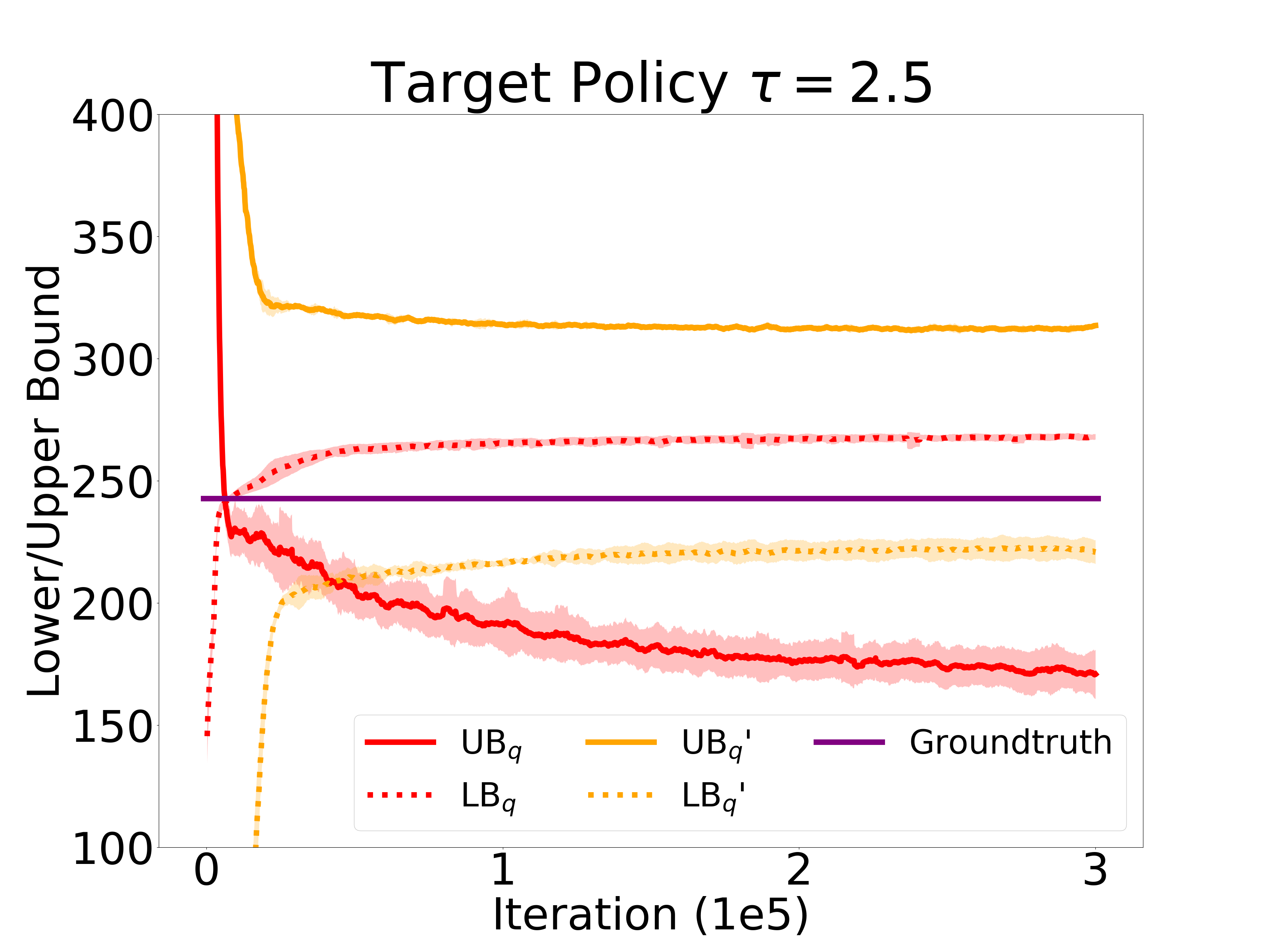}
	\caption{Example training curves averaged over 10 seeds. Error bars show twice the standard errors. The curves have been smoothed with a window size corresponding to 20 $q$-$w$ alternations. \textbf{Top}: The difference between behavior policy and target policy is relatively small, and $\ubq > \lbq$. The interval $[\lbq, \ubq]$ is significantly tighter than $[\lbq', \ubq']$. \textbf{Bottom}: Target policy is significantly different from the behavior policy, and realizing the importance weight is challenging. In this case, a reversed interval is observed, i.e. $\rm UB_q < LB_q$. \label{fig:learning_curve}}
\end{figure*}

\subsection{Environment and Behavior \& Target Policies}
We conduct experiments in the CartPole environment  with $\gamma=0.99$. Following \citet{uehara2019minimax}, we modify the reward function and add small Gaussian noise to transition dynamics to make OPE more challenging in this environment.\footnote{Many policies are indistinguishable under the original 0/1 reward function, so we define an angle-dependent reward function  that takes numerical values. We also add random noise to make the transitions stochastic.} 
To generate the behavior and the target policies, we apply softmax on a near-optimal $Q$-function trained via the open source code\footnote{\url{https://github.com/openai/baselines}} of DQN with an adjustable temperature parameter  $\tau$:
\begin{equation}
    \pi(a|s)\propto \exp(\frac{Q(s,a)}{\tau}).
\end{equation}
The behavior policy $\pi_b$ is chosen as $\tau=1.0$, and we use other values of $\tau$ for target policies. 
To collect the dataset, we truncate the generated trajectories at the 1000-th time step. For those terminated within 1000 steps, we pad the rest of the trajectories with the terminal states. We treat $\mu = d^\pi$ and approximate such a data distribution by weighting each data point $(s,a,r,s')$ with a weight $\gamma^t$, where $t$ is the time step $s$ is observed. All experiments generate datasets of $200$ trajectories except for the interval length comparison (Fig.~\ref{fig:len}), where the sample size is indicated on the x-axis. We report average results over 10 seeds in all the OPE experiments, and show twice the standard errors as error bars which correspond to $95\%$ confidence intervals.

\subsection{Details of the Algorithms}
We compare $\ubq$ and $\lbq$ to $\ubq'$ and $\lbq'$ in Table~\ref{tab:compare}. 
We use Multilayer Perceptron (MLP) to construct $\Qcal$ and $\Wcal$. The detailed specification of $\Qcal$ will be given later. For $\Wcal$, the ideal choice denoted by $\mathcal{W}^\alpha$ is defined as:
$$
\mathcal{W}^\alpha=\Big\{w(\cdot,\cdot)= \frac{\alpha |f(\cdot,\cdot)|}{\EE_{\mu}[|f|]}\Big|f\in {\rm MLP}\Big\}.
$$
Note that normalizing with $\EE_{\mu}[|f|]$ allows us to directly control the expectation of any $w\in\Wcal$ to be $\alpha$, and we use $\alpha = 25/(1-\gamma)$ throughout the OPE experiments. 
However, directly optimizing over $\mathcal{W}^\alpha$ is quite unstable, and we consider the following relaxation of our upper and lower bounds: fixing any $q\in\Qcal$, for any $w\in\Wcal^\alpha$, we relax the $\EE_w[r + \gamma q(s', \pi) - q(s,a)]$ term in $\ubq$ as
\begin{align} 
& \EE_{w}\Big[r+\gamma q(s',\pi)-q(s,a)\Big] \nonumber\\
= &\EE_{\mu}\Big[(r+\gamma q(s',\pi)-q(s,a))\frac{\alpha|f|}{\EE_{\mu}[|f|]}\Big] \nonumber\\
\le & \EE_{\mu}\Big[(r+\gamma q(s',\pi)-q(s,a))\frac{\alpha|f|\mathbb{I}[r+\gamma q(s',\pi)-q(s,a)>0]}{\EE_{\mu}[|f|]}\Big], \label{eq:opt_trick}
\end{align}
where $\mathbb{I}[\cdot]=1$ if the predicate is true, and $0$ otherwise. 
So essentially we turn each $f$ into a weighting function $w$ that evaluates to $\frac{\alpha|f|\mathbb{I}[r+\gamma q(s',\pi)-q(s,a)>0]}{\EE_{\mu}[|f|]}$ on each data point $(s,a,r,s')$. 
Such a relaxation helps stabilize training and results in  a looser upper bound compared to $\ubq$, which is still valid as long as $\ubq$ is a valid upper bound. We similarly relax the lower bound by replacing $\mathbb{I}[r+\gamma q(s',\pi)-q(s,a)>0]$ with $\mathbb{I}[r+\gamma q(s',\pi)-q(s,a)<0]$.

In addition, although optimizing MQL loss with $\mathcal{W}^\alpha$ can converge, we find that using the same relaxation can further stabilize training and lead to better results. Therefore, we adopt this trick in the calculation of $\rm{UB_q}'$ and $\rm{LB_q}'$ as well. 

We use a $32\times 32$ MLP (with tanh activation) to parameterize $q$ (except in Fig~\ref{fig:reverse} where the architecture is indicated on the x-axis) and use a one-hidden-layer MLP with 32 units for $f$ (which produces $w$). Moreover, we clip $q$ in the interval $[\frac{\Rmin}{1-\gamma} - \delta, \frac{\Rmax}{1-\gamma}]$, where $\Rmin=0.0$, $\Rmax=3.0$, $\delta=20$. 
We use stochastic gradient descent ascent (SGDA) with minimatches for optimization, and alternate between $q$ and $w$ every 500 and 50 iterations, respectively. The learning rates are both fixed as $0.005$, and each minibatch consists of $500$ transition tuples. The normalization factor in the relaxed objective is approximated on the minibatch. During the $q$-optimization phases, the weights $w$ (which depends on $q$ through the indicators) is treated as a constant and does not contribute to the gradients, but is re-computed every time $q$ is updated. See example training curves in Fig.~\ref{fig:learning_curve}.

\subsection{Bootstrapping Intervals} \label{app:bootstrap}
To account for statistical errors in our interval, we sample the dataset with replacement to generate 20 bootstrapped datasets with the same size as the original dataset. Then, we run our algorithms on those new datasets and record the upper/lower bounds. We pick the $k$-th largest (smallest) upper (lower) bound as the final upper (lower) bound, and for figures shown here we use $k=1$. We report interval lengths and validity ratios averaged over 10 runs. 

\subsection{On Comparison to MQL}
The comparison in Fig.~\ref{fig:len} should be interpreted carefully. This figure is used to demonstrate the theoretical predictions in App.~\ref{app:naiveCI}, where the two methods use the same $\Qcal$ and $\Wcal$ classes. Note that our choice of $\Wcal$ class in this experiment, $\Wcal^\alpha$, comes with a hyperparameter $\alpha$ that adjusts the magnitude of the functions in the class, and we choose a fairly large value of $\alpha = 25/(1-\gamma)$ for stability of optimization. (A more principled optimization approach is also an interesting future direction, which may allow us to choose a much smaller value of $\alpha$ for our intervals.) The loss of MQL, on the other hand, is homogeneous in $w\in\Wcal$ hence the algorithm is \emph{invariant} to the rescaling of $\Wcal$. Therefore, the value of $\alpha$ has no effect on the training process and merely determines the length of the interval in a straightforward manner (i.e., when we change $\alpha$, MQL's interval length scales linearly with $\alpha$, and its center does not move). To this end, the $\alpha$ for MQL could have been tuned to obtain a tighter yet still valid interval, although this is difficult in practice as we do not have the groundtruth value to tune $\alpha$ against (otherwise OPE would not be necessary; see discussions in \citep{voloshin2019empirical}). Nevertheless, we reiterate that Fig.~\ref{fig:len} is only meant to empirically illustrate the theoretical predictions of App.~\ref{app:naiveCI}, and to compare the two methods fairly the value of $\alpha$ should be tuned for MQL in some fashion, which we do not investigate in this paper.



\section{Sanity-Check Experiments} \label{app:tabular_exp}
In App.~\ref{app:exp} we introduced several tricks to stabilize the difficult minimax optimization problems associated with our intervals. While the tricks have worked out empirically in CartPole, we would like to further understand the legitimacy and the consequences of these tricks. 

The major modification is the relaxation in Eq.\eqref{eq:opt_trick}, where we allow $w\in\Wcal$ to depend not only on $(s,a)$ but also on the randomness of $(r,s')$. A potential caveat is that, if the transition or reward function are stochastic, given two tuples $(s_1,a_1,r_1,s'_1)$ and $(s_2,a_2,r_2,s'_2)$, it is possible that $w(s_1,a_1)\neq w(s_2,a_2)$ if $s_1=s_2,a_1=a_2$ but $r_1\neq r_2$ or $s'_1\neq s'_2$. In comparison, without the relaxation in Eq.\eqref{eq:opt_trick}, we shall always have $w(s_1,a_1)=w(s_2,a_2)$. While we may hardly find two identical state-action pairs in continuous tasks, the issue still exists if we observe states that are considered close to each other by the function approximator $\Wcal$. Therefore, we hypothesize that this relaxation may make the interval loose when the environment is highly stochastic, and an ideal way to test this hypothesis is to conduct an experiment in a tabular environment because (1) it is easy to find tabular environments with sufficient stochasticity, and (2) we can afford to implement a more faithful version of our algorithm in the tabular setting, which could be unstable and even diverge in the function-approximation setting.

\paragraph{Experiment Setup} 
Given the above considerations, we conduct additional experiments in Taxi, 
a tabular environment with 2000 states and 6 actions. 
We  parameterize $Q$ with $|\mathcal{S}|\times|\mathcal{A}|$ bounded variables, i.e., $\Qcal = \left[\frac{R_{\min}}{1-\gamma}, \frac{R_{\max}}{1-\gamma}\right]^{\Scal\times\Acal}$.
As for $\mathcal{W}$, we consider three different function classes:
\begin{align*}
    \mathcal{W}^\alpha_1=&\{w(s,a)=\frac{\alpha f(s, a)\mathbb{I}[r+\gamma Q(s',\pi)-Q(s,a)>0]}{\EE_{(\tilde{s},\tilde{a},\tilde{r},\tilde{s}')\sim \mu}[f(\tilde{s}, \tilde{a})]},\forall (s,a)\in\mathcal{S}\times\mathcal{A}\}\\
    &\cup \{w(s,a)=\frac{\alpha f(s, a)\mathbb{I}[r+\gamma Q(s',\pi)-Q(s,a)<0] }{\EE_{(\tilde{s},\tilde{a},\tilde{r},\tilde{s}')\sim \mu}[f(\tilde{s}, \tilde{a})]}
    ,\forall (s,a)\in\mathcal{S}\times\mathcal{A}\}\\
    \mathcal{W}^\alpha_2=&\{w(s,a)=\frac{\alpha f(s, a)\mathbb{I}[\EE_{r,s'|s,a}[r+\gamma Q(s',\pi)-Q(s,a)]>0] }{\EE_{(\tilde{s},\tilde{a})\sim \mu}[f(\tilde{s}, \tilde{a})]},\forall (s,a)\in\mathcal{S}\times\mathcal{A}\}\\
    &\cup \{w(s,a)=\frac{\alpha f(s, a)\mathbb{I}[\EE_{r,s'|s,a}[r+\gamma Q(s',\pi)-Q(s,a)]<0] }{\EE_{(\tilde{s},\tilde{a})\sim \mu}[f(\tilde{s}, \tilde{a})]},\forall (s,a)\in\mathcal{S}\times\mathcal{A}\}\\
    \mathcal{W}^\alpha_3 =& \{w(s,a)=\frac{\alpha f(s, a)}{\EE_{(\tilde{s},\tilde{a})\sim \mu}[f(\tilde{s}, \tilde{a})]},\forall (s,a)\in\mathcal{S}\times\mathcal{A}\}\\
\end{align*}
where $f$ can take independent values in different $(s,a)$ (i.e., we use a tabular representation for $f$). We only constrain $f$ to be positive but do not clip its value; during the training processes its value is usually bounded.

As we can see, $\mathcal{W}^\alpha_1$ precisely corresponds to the relaxation in Eq.\eqref{eq:opt_trick}, where we replaced the MLP $f$ with a tabular $f$. We compare it to two alternatives: $\Wcal_2^\alpha$ and $\Wcal_3^\alpha$: in $\mathcal{W}^\alpha_2$, we first integrate out the randomness in $\EE_{r,s'|s,a}$ so that $w$ does not depend on $(r,s')$, but still keep the modification related to the use of indicator function $\mathbb{I}[\ldots > 0]$ that avoids negativity. This indicator trick is further removed in $\Wcal_3^\alpha$, where we have a completely faithful implementation of the original algorithm. When optimizing with $\Wcal_1^\alpha$ and $\Wcal_2^\alpha$, we adopt SGDA and alternate between optimizing $Q$ and $w$ every 250 iterations. The learning rate for both $Q$ and $w$ are 5e-3, and the batch size is 500. When optimizing with $\mathcal{W}^\alpha_3$, 
we found that SGDA can lead to unstability, and we instead update $Q$ and $w$ synchronously at each training iteration. (As a side note, synchronous updates fail in CartPole experiments.)  We use the whole dataset in each iteration, and the learning rate is fixed to 5e-3.

The results are showed in Figure \ref{fig:sanity_check}. As we can see, the dependence of $w$ on $(r,s')$ indeed result in a looser bound asymptotically. Moreover, the use of the indicator trick does not make the interval loose in this case, while it converges more slowly than synchronously updating $Q$ and $w$ with $\mathcal{W}^\alpha_3$. The looseness of $\Wcal_1^\alpha$ suggests that our optimization strategy in Appendix~\ref{app:exp} is not general enough, and it will be important to investigate more principled optimization strategies that directly work with the original optimization problem without the relaxation in Eq.\eqref{eq:opt_trick}. 

\begin{figure*}[t!]
	\centering
	\includegraphics[scale=0.3]{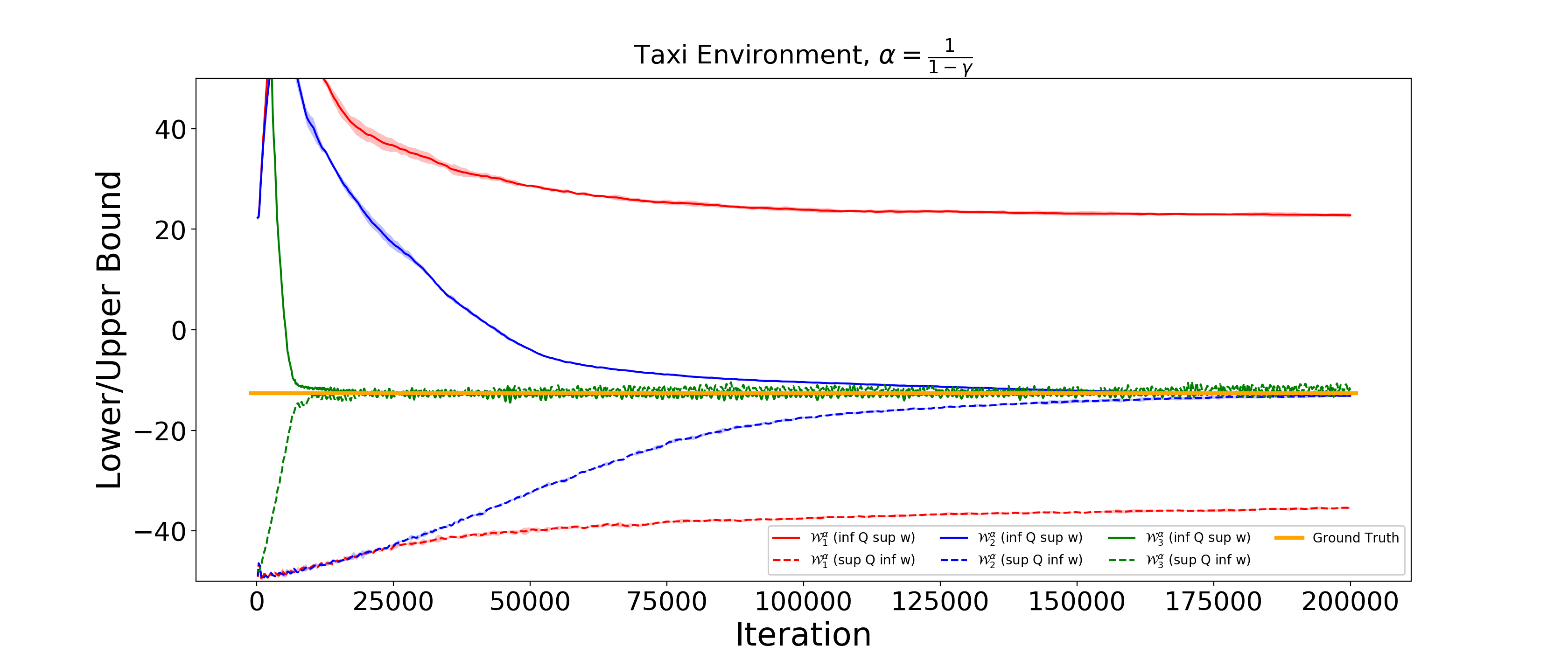}
	\caption{Comparison between experiments on Taxi with different function classes. Average over 3 seeds. \textbf{Red Curves}: Upper/Lowers bound  using $\mathcal{W}^\alpha_1$. \textbf{Blue Curves}: Upper/Lower bounds  using $\mathcal{W}^\alpha_2$. \textbf{Green Curves}: Upper/Lower bounds  using $\mathcal{W}^\alpha_3$.}\label{fig:sanity_check}
\end{figure*}


\section{Rmax / Rmin} 
\label{app:rmax}
Here we review the concept of Rmax and Rmin in detail, and provide the proof of Proposition~\ref{prop:Rmin}. We first recall the setup of Sec.~\ref{sec:tabular}:

\para{Setup} Consider an MDP with finite and discrete state and action spaces. Let $\Sk \subset \Scal$ be a subset of the state space, and $s_0 \in \Sk$. Suppose the dataset contains $n$ transition samples from each $s \in \Sk, a\in\Acal$ with $n \to \infty$ (i.e., the transitions and rewards in $\Sk$ are fully known), and $0$ samples from $s \notin \Sk$.

\para{Rmax and Rmin} In such a simplified setting, effective exploration can be achieved by (the episodic version of) the  well-known Rmax algorithm \citep{brafman2003r,kakade2003sample}, which computes the optimal policy of the Rmax-MDP $\MRmax$, defined as $(\Scal, \Acal, P_{\MRmax}, R_{\MRmin}, \gamma, s_0)$, where\footnote{With a slight abuse of notations, in this section we treat $R: \Scal\times\Acal\to[0, \Rmax]$ as a deterministic reward function for convenience.}
\begin{align*}
	P_{\MRmax}(s,a) = \begin{cases}
		P(s,a), & \textrm{if~} s \in \Sk\\
		\Indi[s'=s], & \textrm{if~} s \notin \Sk
	\end{cases}, \qquad
	R_{\MRmax}(s,a) = \begin{cases}
		R(s,a), & \textrm{if~} s \in \Sk\\
		\Rmax, & \textrm{if~} s \notin \Sk
	\end{cases}.
\end{align*}
In words, $\MRmax$ is the same as the true MDP $M$ on states with sufficient data, and the remaining ``unknown'' states are assumed to have self-loops with $\Rmax$ immediate rewards, making them appealing to visit and thus encouraging exploration. The theoretical guarantee for the policy computed by Rmax is an optimal-or-explore statement \citep[see e.g.,][]{nan_rmax_notes}, that either the optimal policy of $\MRmax$ is near-optimal in true $M$, or its occupancy measure must visit states outside $\Sk$, with a mass proportional to its suboptimality. 
Similarly, when the goal is to exploit, that is, to output a policy with the best worst-case guarantee, one simply needs to change the $\Rmax$ in the Rmax-MDP to the minimally possible reward value (which is $0$ in our setting), and we call this MDP $\MRmin$.

\begin{remark}
	In the simplified setting above, the states outside $\Sk$ receive no data at all. In the more general case where $\Sk$ is still under-explored but every state receives any least $1$ data point, it is not difficult to show that both MUB-PO and MLB-PO reduce to the certainty-equivalent solution, and can overfit to the poor estimation of transitions and rewards in states with few data points. Such a degenerate behavior may be prevented by regularizing $w$ and prohibiting any highly spiked $w$ (e.g., regularizing with $\EE_{\mu}[w^2]$, which measures the effective sample size of importance weighted estimate induced by $w$), or by bootstrapping and taking data randomness into consideration during policy optimization. 
\end{remark}

\begin{proof}[\textbf{Proof of Proposition~\ref{prop:Rmin}}]
	Let $\Mcal$ be the space of all MDPs that are consistent with the given dataset. Let $\MRmax$ and $\MRmin$ be the Rmax and Rmin MDPs, resp., and it is clear that $\MRmax, \MRmin \in \Mcal$. 
	
	\paragraph{MLB-PO Reduces to Rmin} To show that MLB-PO reduces to Rmin, it suffices to prove that for every $\pi: \Scal\to\Acal$, $\lbwpi = J_{\MRmin}(\pi)$. Since $\Qcal$ is the tabular function space and always realizable (regardless of the true MDP $M$), $\lbwpi$ is a valid lower bound, i.e., $\lbwpi \le J(\pi)$. Since all MDPs in $\Mcal$ \emph{could} be the true $M$, it must hold that $\lbwpi \le J_{\MRmin}(\pi)$. It suffices to show that $\lbwpi \ge J_{\MRmin}(\pi)$. 
	
	Recall that
	$$
	\lbwpi = \supw \infq  \left\{ q(s_0, \pi) + \EE_w[r+ \gamma q(s', \pi) - q(s,a)]\right\}.
	$$
	Since $\Qcal$ is the unrestricted tabular function space, we may represent $q\in\Qcal$ as a $|\Scal\times\Acal|$ vector where each coordinate $q(s,a)$ can take values between $[0, \Rmax/(1-\gamma)]$ independently. Now note that for any $s'\notin \Sk$, $a' \in \Acal$, $q(s', a')$ as a decision variable may only appear as the $\gamma q(s',\pi)$ term in the objective and can never appear as $q(s_0, \pi)$ or $-q(s,a)$. Since $\Wcal$ only contains non-negative functions, $\Lwq$ is non-decreasing in $q(s', \pi)$, and $\infq$ can always be attained when $q(s',a') = 0$, $\forall s'\notin\Sk, a'\in\Acal$. For any $w\in\Wcal$, let $q_w$ denote such a $q\in\Qcal$ that achieves the inner infimum, and
	$$
	\lbwpi = \sup_{w\in\Wcal} \{q_w(s_0,\pi) + \EE_w[r + \gamma q_w(s', \pi) - q_w(s,a)]\}.
	$$
	Next, consider the discounted occupancy of $\pi$ in $\MRmin$, denoted as $d_{\MRmin}^\pi$. Define
	$$
	w_0(s,a) := \Indi[s\in\Sk] \frac{d_{\MRmin}^\pi(s,a)}{1/|\Sk\times\Acal|}.
	$$  
	Let $q_0 = q_{w_0}$. 
	Now, since $w_0 \in \Wcal$, 
	\begin{align*}
		\lbwpi \ge &~ q_0(s_0, \pi) + \EE_{w_0}[r + \gamma q_0(s', \pi) - q_0(s,a)] \\
		= &~ q_0(s_0, \pi) + \frac{1}{|\Sk\times\Acal|}\sum_{s \in \Sk, a\in\Acal} w_0(s,a) (R(s,a) + \gamma \EE_{s'\sim P(s,a)}[q_0(s',\pi)] - q_0(s,a)) \\
		= &~ q_0(s_0, \pi) + \sum_{s \in \Sk, a\in\Acal} d_{\MRmin}^\pi(s,a) (R(s,a) + \gamma \EE_{s'\sim P(s,a)}[q_0(s',\pi)] - q_0(s,a)) \\
		= &~ \sum_{s \in \Sk, a\in\Acal} d_{\MRmin}^\pi(s,a) R(s,a) + \sum_{s\in\Scal, a\in\Acal} \nu(s,a) q_0(s,a),
	\end{align*}
	where 
	$$
	\nu(s,a):= \Indi[s=s_0, a= \pi(s_0)] + \gamma \sum_{s'\in\Sk, a'\in\Acal} d_{\MRmin}^\pi(s',a') P(s|s',a')- \Indi[s \in \Sk] d_{\MRmin}^\pi(s,a).
	$$
	Note that 
	$$
	\sum_{s\in\Sk, a\in\Acal} d^{\pi}_{\MRmin}(s,a) R(s,a) = \sum_{s\in\Sk, a\in\Acal} d^{\pi}_{\MRmin}(s,a) R(s,a) + \sum_{s\notin\Sk, a\in\Acal} d^{\pi}_{\MRmin}(s,a) \cdot 0 = J_{\MRmin}(\pi),
	$$ 
	so it suffices to show that $\sum_{s\in\Scal, a\in\Acal} \nu(s,a) q_0(s,a) = 0$, which we establish in the rest of this proof.
	
	Recall that $q_0(s,a)=0$ for any $s\notin \Sk$. So $\sum_{s\in\Scal, a\in\Acal}\nu(s,a) q_0(s,a) = \sum_{s\in\Scal, a\in\Acal}\nu'(s,a) q_0(s,a)$, where
	$$
	\nu'(s,a):= \Indi[s=s_0, a= \pi(s_0)] + \gamma\sum_{s'\in\Scal, a'\in\Acal} d_{\MRmin}^\pi(s',a') P_{\MRmin}(s|s',a')- d_{\MRmin}^\pi(s,a).
	$$
	This is because $\nu$ and $\nu'$ exactly agree on the value in $s\in\Sk$, and only differ on $s\notin \Sk$. To see this, consider any $s\in\Sk, a\in\Acal$. $\nu$ and $\nu'$ agree on the first and the last terms. They also agree on the second term for the summation variables $s'\in\Sk, a'\in\Acal$, as $P(s|s',a') = P_{\MRmin}(s|s',a')$ (the MDP $\MRmin$ has the true transition probabilities on states with sufficient data). So the only difference is 
	$$
	\gamma\sum_{s'\notin \Sk, a'\in\Acal} d_{\MRmin}^\pi(s',a') P_{\MRmin}(s|s',a').
	$$
	However, this term must be zero, because for any $s\in \Sk$, $s'\notin\Sk$, $P_{\MRmin}(s|s',a') = 0$, as the construction of $\MRmin$ guarantees that no states outside $\Sk$ will ever transition back to $\Sk$. Now that we conclude $\nu(s,a) q_0(s,a) = \nu'(s,a) q_0(s,a)$, the fact that it is zero is obvious: $\nu' \equiv 0$, which follows directly from the Bellman equation for discounted occupancy in MDP $\MRmin$. This completes the proof of $\lbwpi = J_{\MRmin}(\pi)$. 
	
	\paragraph{MUB-PO Reduces to Rmax} The proof for $\ubwpi = J_{\MRmax}(\pi)$ is similar. 
	Using the same argument as above, we know that  $\ubwpi \ge J_{\MRmax}(\pi)$ and it suffices to show that $\ubwpi \le J_{\MRmax}(\pi)$.
	
	Recall that
	$$
	\ubwpi = \infw \supq  \left\{ q(s_0, \pi) + \EE_w[r+ \gamma q(s', \pi) - q(s,a)]\right\}.
	$$
	Next, we consider the discounted occupancy of $\pi$ in $\MRmax$, denoted as $d^\pi_{\MRmax}$. Define $w_0$ as (we recycle the symbol from the previous part of the proof)
	\begin{align*}
		w_0(s,a):=\Indi[s\in\Sk] \frac{d_{\MRmax}^\pi(s,a)}{1/|\Sk\times\Acal|}.
	\end{align*}
	and $q_0$ as
	\begin{align*}
		q_0 := \arg\max_{q\in\Qcal} \left\{ q(s_0, \pi) + \EE_{w_0}[r+ \gamma q(s', \pi) - q(s,a)]\right\}.
	\end{align*}
	With the same argument as the Rmin case, we can always have $q_0(s,a)=\frac{\Rmax}{1-\gamma},\forall s\notin\Sk,a\in\Acal$. 
	
	Since $w_0\in\Wcal$,
	\begin{align*}
		\ubwpi \leq&~q_0(s_0, \pi) + \EE_{w_0}[r+ \gamma q_0(s', \pi) - q_0(s,a)]\\
		=&~q_0(s_0, \pi) + \frac{1}{|\Sk\times\Acal|}\sum_{s \in \Sk, a\in\Acal} w_0(s,a) (R(s,a) + \gamma \EE_{s'\sim P(s,a)}[q_0(s',\pi)] - q_0(s,a)) \\
		=&~q_0(s_0, \pi) + \sum_{s \in \Sk, a\in\Acal} d_{\MRmax}^\pi(s,a) (R(s,a) + \gamma \EE_{s'\sim P(s,a)}[q_0(s',\pi)] - q_0(s,a)) \\
		=&~ q_0(s_0, \pi) + J_{\MRmax}(\pi) - \sum_{s \notin \Sk, a\in\Acal} d_{\MRmax}^\pi(s,a) \Rmax \\
		&~+ \sum_{s \in \Sk, a\in\Acal} d_{\MRmax}^\pi(s,a) (\gamma \EE_{s'\sim P(s,a)}[q_0(s',\pi)]-q_0(s,a)).
	\end{align*}
	The last step follows from the fact that in $\MRmax$, all $s\not\in\Sk$ are absorbing states with $\Rmax$ rewards. 
	Now that we have extracted out $J_{\MRmax}(\pi)$, it suffices to show that the remaining terms cancel. 
	
	To show the cancellation, we will use two properties of the $\MRmax$ construction: (1) For any $s\notin\Sk$, $q_0(s,a) - \gamma \EE_{s'\sim P_{\MRmax}(s,a)}[q_0(s', \pi)] = \frac{\Rmax}{1-\gamma} - \frac{\gamma\Rmax}{1-\gamma} = \Rmax$ (because $P_{\MRmax}(s,a)$ is a point mass on $s$), and (2) for any $s\in\Sk$, $P_{\MRmax}(s,a) = P(s,a)$. Using these, we may rewrite the remaining terms as
	\begin{align*}
		& q_0(s_0, \pi) - \sum_{s \notin \Sk, a\in\Acal} d_{\MRmax}^\pi(s,a) \Rmax +\sum_{s \in \Sk, a\in\Acal} d_{\MRmax}^\pi(s,a) (\gamma \EE_{s'\sim P(s,a)}[q_0(s',\pi)]-q_0(s,a)) \\
		=& q_0(s_0, \pi) + \sum_{s \in \Scal, a\in\Acal}d_{\MRmax}^\pi(s,a) (\gamma \EE_{s'\sim P_{\MRmax}(s,a)}[q_0(s',\pi)]-q_0(s,a)).
	\end{align*}
	This is $0$ due to the Bellman equation for occupancy $d_{\MRmax}^\pi$ in the MDP $\MRmax$ (which we have also used in Lemma~\ref{lem:pel_w}).
\end{proof}
\section{Additional Proofs, Results, and Discussions of Section~\ref{sec:opt}}  \label{app:mubpo}
\subsection{Proof of Proosition~\ref{prop:mlbpo}}
It suffices to show that $J(\hat \pi) \ge \mathrm{LB_w^{\hat\pi}}$, which follows directly from Theorem~\ref{thm:ciw_valid}: $\lbwpi$ is a valid lower bound for any $\pi \in \Pi$ due to the realizability of $\Qcal$. The corollary holds because for $\pi$ with $\wpi$ realized by $\Wcal$, Theorems~\ref{thm:ciw_valid} and \ref{thm:ciw_tight} guarantee that $J(\pi) = \lbwpi$. \qed

\subsection{Full Version of Proposition~\ref{prop:mubpo} and Proof}

\begin{proposition}[Full version of Proposition~\ref{prop:mubpo}] \label{prop:mubpo_full}
	Let $\Pi$ be a policy class. Let $\hat \pi = \argmax_{\pi\in\Pi} \ubwpi$.  Then, for any $w\in \Wcal$,
	$$
	\mathrm{IPM}(w\cdot \mu, d^{\hat \pi}; \Fcal) \ge \max_{\pi \in \Pi: Q^\pi\in\conv(\Qcal)} J(\pi) - J(\hat \pi),
	$$
	where 
	\begin{itemize}
		\item $(w\cdot \mu)(s,a):= w(s,a) \cdot \mu(s,a)$, 
		\item $\mathrm{IPM}(\nu_1, \nu_2; \Fcal):= \sup_{f\in\Fcal} |\EE_{\nu_1}[f] - \EE_{\nu_2}[f]|$, 
		\item $\Fcal := \{\Tcal^\pi q - q: q\in\Qcal, \pi\in\Pi\}$. 
	\end{itemize}
	As a corollary, if we further assume $\|q\|_\infty \le \Rmax/(1-\gamma), \forall q\in\Qcal$,
	$$
	\|w\cdot \mu - d^{\hat \pi}\|_1 \ge \frac{(1-\gamma) \left(\max_{\pi \in \Pi: Q^\pi\in\conv(\Qcal)} J(\pi) - J(\hat \pi)\right)}{2\Rmax}.
	$$
\end{proposition}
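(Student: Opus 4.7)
The plan is to use $\mathrm{UB_w^{\hat\pi}}$ as a pivot between $J(\pi^\ast)$ and $J(\hat\pi)$, where $\pi^\ast \in \argmax_{\pi\in\Pi:Q^\pi\in\conv(\Qcal)} J(\pi)$. Since $Q^{\pi^\ast}\in\conv(\Qcal)$, Theorem~\ref{thm:ciw_valid} applied to $\pi^\ast$ gives $\mathrm{UB_w^{\pi^\ast}} \ge J(\pi^\ast)$, and since $\hat\pi$ maximizes $\mathrm{UB_w^\pi}$ over $\Pi$, we also get $\mathrm{UB_w^{\hat\pi}} \ge \mathrm{UB_w^{\pi^\ast}}$. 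Chaining these inequalities yields
\begin{equation*}
J(\pi^\ast) - J(\hat\pi) \;\le\; \mathrm{UB_w^{\hat\pi}} - J(\hat\pi),
\end{equation*}
so the proof reduces to controlling the RHS by $\mathrm{IPM}(w\cdot\mu,\, d^{\hat\pi};\,\Fcal)$ for an arbitrary $w\in\Wcal$. Notice that this argument does not require $Q^{\hat\pi}\in\conv(\Qcal)$, which is precisely what makes the statement agnostic.

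For the pivot-to-IPM step I would fix $w\in\Wcal$ and start from $\mathrm{UB_w^{\hat\pi}} = \infw \supq L(w,q) \le \supq L(w,q)$. Then invoke Lemma~\ref{lem:pel_w} to write $J(\hat\pi) = q(s_0,\hat\pi) + \EE_{d^{\hat\pi}}[r + \gamma q(s',\hat\pi) - q(s,a)]$ for every $q\in\Qcal$, and push $-J(\hat\pi)$ inside the supremum. The $q(s_0,\hat\pi)$ terms cancel, leaving
\begin{equation*}
\mathrm{UB_w^{\hat\pi}} - J(\hat\pi) \;\le\; \supq \Bigl(\EE_w[r+\gamma q(s',\hat\pi) - q(s,a)] - \EE_{d^{\hat\pi}}[r+\gamma q(s',\hat\pi) - q(s,a)]\Bigr).
\end{equation*}
Taking the inner conditional expectation over $r\sim R(s,a)$ and $s'\sim P(s,a)$ collapses each bracket to an integral of $\Tcal^{\hat\pi}q - q$ against $w\cdot\mu$ and $d^{\hat\pi}$, respectively, so the difference equals $\EE_{w\cdot\mu}[\Tcal^{\hat\pi}q - q] - \EE_{d^{\hat\pi}}[\Tcal^{\hat\pi}q - q]$. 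Since $\hat\pi\in\Pi$, the test function $\Tcal^{\hat\pi}q - q$ belongs to $\Fcal$ for every $q\in\Qcal$, and the supremum over $q$ is therefore dominated by $\mathrm{IPM}(w\cdot\mu,\, d^{\hat\pi};\,\Fcal)$ (using $\sup(\cdot)\le\sup|\cdot|$). Combining with the pivot inequality and taking $\sup_{\pi^\ast}$ finishes the main statement.

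For the corollary I would apply the standard duality $|\EE_{\nu_1}[f]-\EE_{\nu_2}[f]| \le \|f\|_\infty \|\nu_1-\nu_2\|_1$ to the signed measure $w\cdot\mu - d^{\hat\pi}$, so that $\mathrm{IPM}(w\cdot\mu,\,d^{\hat\pi};\,\Fcal) \le \bigl(\sup_{f\in\Fcal}\|f\|_\infty\bigr) \cdot \|w\cdot\mu - d^{\hat\pi}\|_1$. Under $\|q\|_\infty\le\Rmax/(1-\gamma)$, a one-line Bellman bound gives $\|\Tcal^{\hat\pi}q\|_\infty \le \Rmax + \gamma\Rmax/(1-\gamma) = \Rmax/(1-\gamma)$, hence $\|\Tcal^{\hat\pi}q-q\|_\infty \le 2\Rmax/(1-\gamma)$. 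Rearranging yields the $(1-\gamma)/(2\Rmax)$ factor.

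The main obstacle I anticipate is the bookkeeping in the pivot step: making sure the $q(s_0,\hat\pi)$ term cancels cleanly and that the two $r$-terms---which live under different measures---both reduce, after the conditional expectation over $(r,s')\mid(s,a)$, to the same $R(s,a)$ that is absorbed into $\Tcal^{\hat\pi}q$. Once that reduction is in place, both the main inequality and the corollary follow by standard bounds.
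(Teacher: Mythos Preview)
Your proposal is correct and follows essentially the same route as the paper: pivot through $\mathrm{UB_w^{\hat\pi}}$ using Theorem~\ref{thm:ciw_valid} and the optimality of $\hat\pi$, then for a fixed $w\in\Wcal$ bound $\mathrm{UB_w^{\hat\pi}} - J(\hat\pi)$ by picking the worst $q$, cancelling the $q(s_0,\hat\pi)$ terms, and recognizing the residual as an IPM over $\Fcal$; the corollary is H\"older exactly as you describe. The only cosmetic difference is that the paper writes $J(\hat\pi)=L(w_{\hat\pi/\mu},q)$ (via Lemma~\ref{lem:pel_q}/Lemma~\ref{lem:pel_w} with $w_{\hat\pi/\mu}$) whereas you invoke Lemma~\ref{lem:pel_w} directly with $\EE_{d^{\hat\pi}}$, which is slightly cleaner since it does not presuppose $w_{\hat\pi/\mu}$ exists.
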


To interpret the result, recall that $w\in\Wcal$ is supposed to model an importance weight function that coverts the data distribution to the occupancy measure of some policy, e.g., $\wpi \,\cdot\, \mu = d^\pi$. The proposition states that either $\hat \pi$ is near-optimal, or it will induce an occupancy measure that cannot be accurately modeled by \emph{any} importance weights in $\Wcal$ when applied on the current data distribution $\mu$. The distance\footnote{$w \cdot \mu$ may be unnormalized, but this does not affect our results.} between the two distributions $d^\pi$ and $w \cdot \mu$ is measured by the Integral Probability Metric \citep{muller1997integral} defined w.r.t.~a discriminator class $\Fcal$, and can be relaxed to the looser but simpler $\ell_1$ distance. Therefore, if we have a rich $\Wcal$ class that models all distributions covered by $\mu$, then $\hat{\pi}$ must visit new areas in the state-action space or it must be near-optimal. 

Below we give the proof of Proposition~\ref{prop:mubpo_full}, which reuses many results established in Sec.~\ref{sec:ope}.

\begin{proof}[Proof of Proposition~\ref{prop:mubpo_full}]
	Fixing any $\pi\in\Pi$ such that $Q^\pi \in \conv(\Qcal)$:
	\begin{align*}
		J(\pi) - J(\hat \pi)
		\le &~ \mathrm{UB_w^{\pi}} - J(\hat \pi) \tag{$\ubw$ is valid upper bound as $Q^\pi$ realized by  $\conv(\Qcal)$} \\
		\le &~ \mathrm{UB_w^{\hat \pi}} - J(\hat \pi)  \tag{$\hat \pi$ optimizes $\ubwpi$}.
	\end{align*} 
	Recall that $\mathrm{UB_w^{\hat\pi}} = \infw \supq L(w,q;\hat\pi)$, so for any $w\in \Wcal$, $\mathrm{UB_w^{\hat\pi}} \le \supq L(w, q; \hat\pi)$. On the other hand, for any $q$, $J(\hat\pi) = \Lq{w_{\hat\pi/\mu}}$. Now let $q_w:= \argmax_{q\in\Qcal} L(w,q;\hat\pi)$. For any $w\in\Wcal$, 
	\begin{align*}
		\max_{\pi \in \Pi: Q^\pi \in \Qcal} J(\pi) - J(\hat \pi)  
		\le &~  \mathrm{UB_w^{\hat \pi}} - J(\hat \pi) \\
		\le &~ L(w, q_w; \hat \pi) - L(w_{\hat\pi/\mu}, q_w; \hat\pi) \\
		= &~ \EE_{w}[r + \gamma q_w(s', \hat\pi) - q_w(s, a)] - \EE_{w_{\hat\pi/\mu}}[r + \gamma q_w(s', \hat\pi) - q_w(s, a)] \\
		\le &~ |\EE_w[\Tcal^{\hat \pi} q_w - q_w]] - \EE_{w_{\hat\pi/\mu}}[\Tcal^{\hat \pi} q_w - q_w]|.
	\end{align*} 
	The main statement immediately follows by noticing that $\hat{\pi} \in \Pi, q_w\in\Qcal$, hence $\Tcal^\pi q_w -q_w \in \Fcal$. The $\ell_1$-distance corollary follows from relaxing IPM using H\"older's inequality for the $\ell_1$ and $\ell_\infty$ pair. 
\end{proof}

\subsection{Alternative Guarantee for MLB-PO} \label{app:mlb-po-alt}
Proposition~\ref{prop:mlbpo} shows that MLB-PO puts the heavy expressivity burden on $\Qcal$ and is agnostic against misspecified $\Wcal$. When data has sufficient coverage and $\Wcal$ is highly expressive, we can similarly show that optimizing $\lbqpi$ performs robust exploitation and is agnostic against misspecified $\Qcal$.

\begin{proposition}[Exploitation with expressive $\Wcal$] \label{prop:mubpo-q}
	Let $\Pi$ be a policy class, and assume $\wpi \in \conv(\Wcal) ~ \forall \pi \in \Pi$. Let $\hat \pi = \argmax_{\pi\in\Pi} \lbqpi$. Then, for any $\pi\in\Pi$,
	$$
	J(\hat \pi) \ge \lbqpi. 
	$$
	As a corollary, for any $\pi$ such that $\qpi \in \Qcal$, we have $J(\hat \pi) \ge J(\pi)$, that is, we compete with any policy whose value function can be realized by $\Qcal$.
\end{proposition}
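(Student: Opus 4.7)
The plan is to mirror the proof of Proposition~\ref{prop:mlbpo}, exchanging the roles of $\Qcal$ and $\Wcal$ throughout and invoking the ``q-side'' validity/tightness results (Theorems~\ref{thm:ciq_valid} and \ref{thm:ciq_tight}) in place of their ``w-side'' counterparts.

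First I would observe that the assumption $\wpi \in \conv(\Wcal)$ for every $\pi\in\Pi$ is exactly the hypothesis under which Theorem~\ref{thm:ciq_valid} certifies $\lbqpi \le J(\pi)$. In particular, applied at $\pi=\hat\pi$, this yields $J(\hat\pi) \ge \mathrm{LB_q^{\hat\pi}}$. Next I would use the definition $\hat\pi = \argmax_{\pi\in\Pi}\lbqpi$, which gives $\mathrm{LB_q^{\hat\pi}} \ge \lbqpi$ for every $\pi\in\Pi$. Chaining these two inequalities produces the main claim $J(\hat\pi)\ge \lbqpi$.

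For the corollary, I would take any $\pi\in\Pi$ with $\qpi\in\Qcal$. The standing assumption still gives $\wpi\in\conv(\Wcal)$, so Theorem~\ref{thm:ciq_valid} yields both $\lbqpi \le J(\pi) \le \ubqpi$. On the other hand, Theorem~\ref{thm:ciq_tight} applied under $\qpi\in\Qcal$ gives $\ubqpi \le \lbqpi$. Combining these sandwich inequalities forces $\lbqpi = J(\pi) = \ubqpi$, so the main claim upgrades to $J(\hat\pi) \ge J(\pi)$.

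There is no real obstacle here: the entire argument is a direct dualization of Proposition~\ref{prop:mlbpo}'s two-line proof, using the fact that Sections~\ref{sec:mwl-ci} and \ref{sec:mql-ci} have been deliberately set up to be symmetric under swapping $(\Qcal,Q^\pi,\Lmwl)\leftrightarrow(\Wcal,\wpi,\Lmql)$. The only conceptual subtlety worth flagging (if desired) is that the tightness step requires the stronger $\qpi\in\Qcal$ rather than $\qpi\in\conv(\Qcal)$ only because Theorem~\ref{thm:ciq_tight} is stated that way; since the tightness is used only for the corollary and not the main inequality, this does not create a gap.
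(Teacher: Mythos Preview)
Your proposal is correct and is exactly the dualization the paper intends: the paper omits the proof with the remark that it ``is similar to that of Proposition~\ref{prop:mlbpo},'' and your argument---invoking Theorem~\ref{thm:ciq_valid} for validity at $\hat\pi$, the $\argmax$ definition, and then Theorems~\ref{thm:ciq_valid} and~\ref{thm:ciq_tight} together for the corollary---is precisely that similar proof spelled out.
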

	The proof is similar to that of Proposition~\ref{prop:mlbpo} and hence omitted.

\subsection{Connection between MUB-PO and OLIVE \citep{jiang2017contextual}} \label{app:olive}
A complete algorithm for  exploration usually involves multiple iterations data collection and policy re-computation, and we only show that MUB-PO performs one such iteration effectively. There are further design choices needed to complete MUB-PO into a full algorithm. 
For example, one may repeatedly collect new data using the policy computed by MUB-PO and merge it with the data from previous rounds. However, it is difficult to analyze the algorithm theoretically: the realizability of $\conv(\Wcal)$ depends on $\mu$, but $\mu$ itself dynamically changes over the execution of the algorithm due to data pooling, and any realizability-type assumptions such as $\wpi \in \Wcal$ are no longer static and cannot appear in an \emph{a priori} guarantee. 
	
One interesting way to avoid this difficulty is to use a special class of importance weights $\Wcal$: instead of $w$ that depends on $(s,a)$, consider $w$ that is $(s,a)$-independent and is indicator function of the \emph{identity of the dataset}. That is, the $\sup_{w \in \Wcal}$ in $\ubwpi$ chooses among the datasets collected in different rounds, and takes expectation w.r.t.~only one of them (without reweighting within the dataset), essentially avoiding data pooling. The resulting algorithm is very similar to a parameter-free variant\footnote{The original OLIVER algorithm requires the approximation error of $\Qcal$ as an input, which can be avoided by replacing its constrained optimization step with an unconstrained one similar to MUB-PO.} 
of the OLIVER algorithm by \citet[Algorithm 3]{jiang2017contextual}, which has been shown to enjoy low-sample complexities in a wide range of low-rank environments.
\section{Policy Optimization Experiments} \label{app:opt_exp}

\begin{figure*}[t]
	\centering
	\includegraphics[width=0.48\textwidth]{Pictures/Tau0_1_discount.png}
	\includegraphics[width=0.48\textwidth]{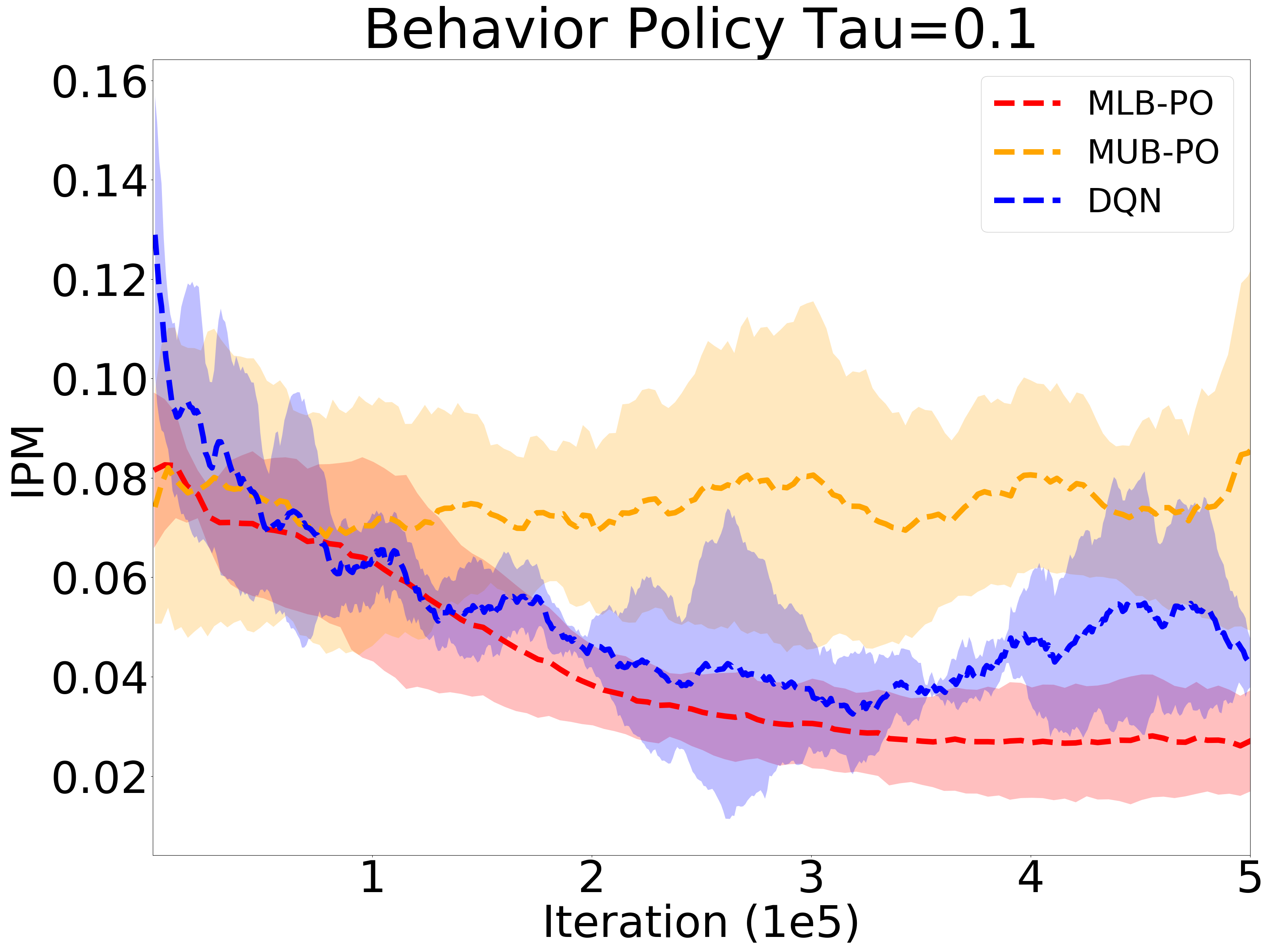}
	\includegraphics[width=0.48\textwidth]{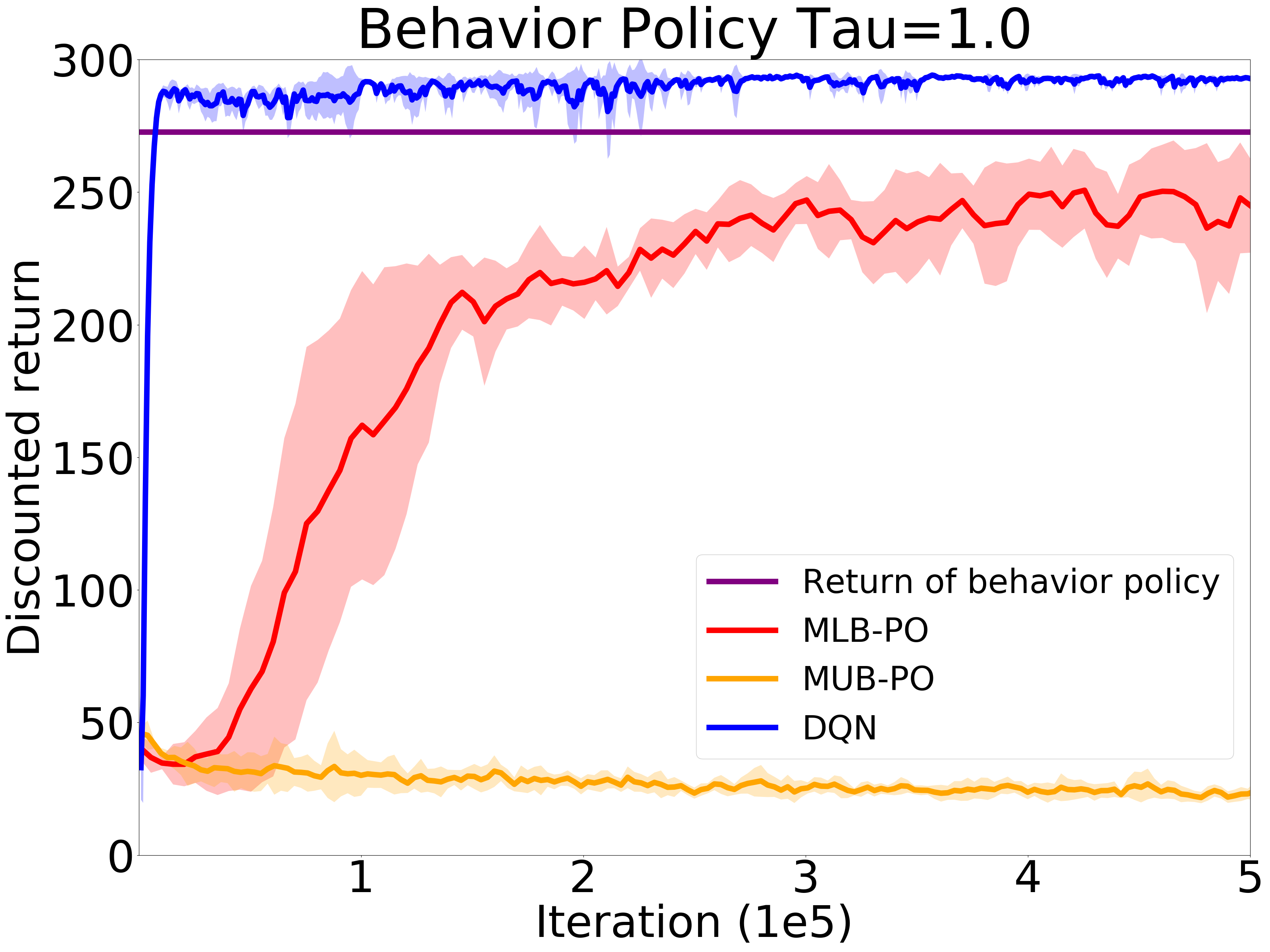}
	\includegraphics[width=0.48\textwidth]{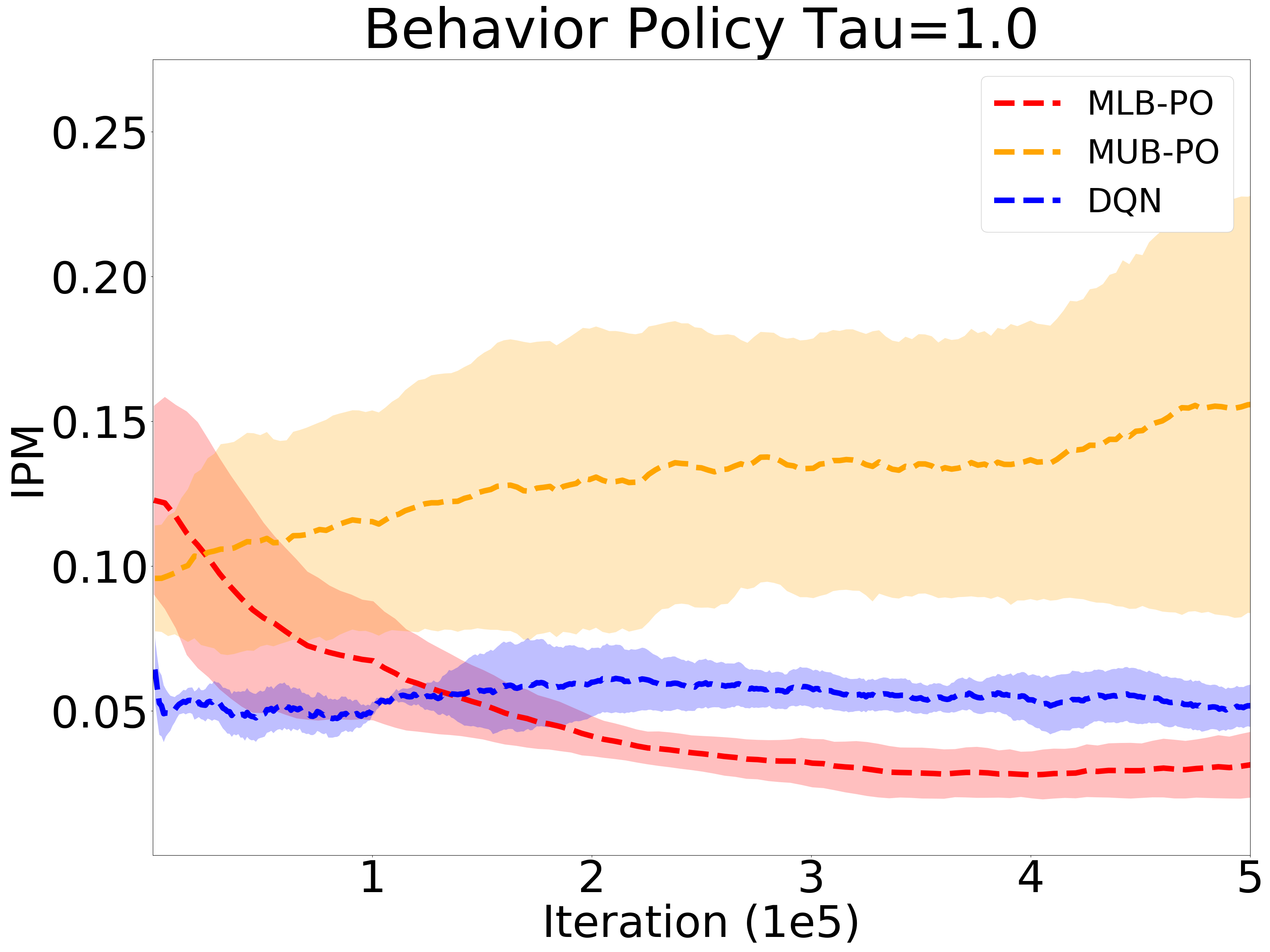}
	\caption{Expected return and IPM of the learned policies as a function of training iterations in the policy optimization experiments. Results in this figure are averaged over 5 random seeds. \textbf{Top row}: Behavior policy is $\tau=0.1$; \textbf{Bottom row}: Behavior policy is $\tau=1.0$.}\label{Fig:val_discount}
\end{figure*}

In this section, we report the policy optimization results in CartPole environment. Due to the difficulty of optimization, we follow \citet{nachum2019algaedice} and consider the following simplified heuristic version of MUB-PO and MLB-PO, which mostly applies in near-deterministic environments: 
\begin{align*}
    \argmax_\pi \max_q q(s_0,\pi) - \beta\EE_\mu[|r+\gamma q(s',\pi)-q(s,a)|] \tag{Simplified MUB-PO}\\
    \argmax_\pi \min_q q(s_0,\pi) + \beta\EE_\mu[|r+\gamma q(s',\pi)-q(s,a)|] \tag{Simplified MLB-PO}
\end{align*}
Here $\beta$ is a hyperparameter (similar to the role of $\alpha$ in the OPE experiments) that controls the level of pessimism in MLB-PO and optimism in MUB-PO. We parameterize the policy $\pi(\cdot|s)$ using a 32$\times$32 MLP with a softmax layer.

For MUB-PO, we update $q$ and $\pi$ iteratively every 500 iterations. As for the lower bound, in order to avoid the policy becoming greedy so quickly, we only update $\pi$ once before alternating to $Q$. 
Besides, we also report the results of DQN \cite{mnih2013playing} for comparison. 
Among all the experiments, we fix the learning rate as 5e$-3$ and the batch size as 500. All the Q-functions are parameterized by a 32$\times$32 MLP.

Moreover, we also record the IPM of each algorithms during training, defined as:
$$
\text{IPM} = \max_{f\in\mathcal{F}}\Big|\EE_{(s,a)\sim d^{\pi_b}}[f(s,a)]-\EE_{(s,a)\sim d^{\pi}}[f(s,a)]\Big|.
$$
The IPM score measures the difference of the state-action visitation between the behavior policy and the policy we are optimizing, which is inspired by Proposition~\ref{prop:mubpo_full} and reflects whether $\pi$ explores new states and actions outside the coverage of $d^{\pi_b}$. 
In practice, $\mathcal{F}$ is constructed by 50 randomly initialized 32$\times$32 MLPs.

In Fig.~\ref{Fig:val_discount}, we report the results with different behavior policies: $\tau=0.1$ (behavior policy is nearly deterministic) and $\tau=1.0$ (behavior policy is more exploratory). For MLB-PO and MUB-PO, we use $\beta=1/(1-\gamma)$ and $\beta=3/(1-\gamma)$ in $\tau=0.1$ and $\tau=1.0$, respectively. In both cases, MLB-PO delivers good performance as expected, and when data is non-exploratory ($\tau=0.1$) it outperforms DQN. Notably, MLB-PO does not use explicit regularization towards behavior policy as many popular algorithms do. MUB-PO, on the other hand, fails to obtain a policy of high return, but the IPM scores show that it can induce an occupancy significantly different from $d_{\pi_b}$, which is what our theory anticipated (Proposition~\ref{prop:mubpo_full}). 
\section{Handling Statistical Errors based on Generalization Error Bounds}  \label{app:staterr}
In this section we briefly explain how to construct a confidence bound that is valid with high probability in the presence of sampling errors due to only having access to a finite dataset. We emphasize that the construction based on generalization error bounds is typically loose for practical purposes and this section only serves as a conceptual illustration. We will use $\ubw:=\inf_{w\in\Wcal}\sup_{q\in\Qcal} L(w,q)$ as an example and the other upper/lower bounds can treated similarly. Let $\rm{\widehat{UB}_w}$ denote the sample-based approximation of $\ubw$:
\begin{align*}
    {\rm \widehat{UB}_w}=\inf_{w\in\Wcal}\sup_{q\in\Qcal}\hLwq:=q(s_0, \pi) + \EE_{D}[w(s,a)(r+\gamma q(s',\pi) - q(s,a))].
\end{align*}
where $D$ is a dataset with $n$ i.i.d.~data points $(s,a,r,s')$ sampled as $(s,a) \sim \mu, r \sim R(s,a), s' \sim P(s,a)$. We assume $\Wcal$ and $\Qcal$ have bounded ranges of output values, that is, there exists two constants $C_{\Wcal},C_{\Qcal}>0$, such that, $w\in\Wcal$, $q\in\Qcal$, $\|w\|_\infty \le C_{\Wcal}$, $\|q\|_{\infty} \le C_{\Qcal}$. 
As a result,
\begin{align*}
    |L(w,q)|\leq&|q(s_0, \pi)| + \EE_{\mu}[|w(s,a)|(r+\gamma |q(s',\pi)| + |q(s,a)|)]\\
    \leq& C_{\Qcal} + \frac{C_{\Wcal}}{1-\gamma}(1+(1+\gamma) C_{\Qcal}):=L_{\max}.
\end{align*}
In the following, we will use $L_{\max}$ as a shorthand for the upper bound of $|L(w,q)|$. 

According to standard results for Rademacher complexity, we have with probability $1-\delta$, for any $w\in\Wcal$ and $q\in\Qcal$, 
\begin{align}\label{eq:Rademacher_Complexity}
    \Lwq\leq \hLwq + 2\hRcal_D(\Wcal, \Qcal)+6L_{\max}\sqrt{\frac{\log\frac{2}{\delta}}{2n}}.
\end{align}
where  $\hRcal_D(\Wcal,\Qcal):=\EE_{\sigma}[\sup_{(w,q)\in\Wcal\times\Qcal}\frac{1}{n}\sum_{(s,a,r,s')\in D}\sigma_i L^{(s,a,r,s')}(w,q)]$ is the empirical Rademacher complexity of $\Wcal\times\Qcal$ w.r.t. sample $D$, which can be computed from data. 

We define $(w^*, q^*)=\arg\min_{w\in\Wcal}\max_{q\in\Qcal}\Lwq$ and $(\hat{w}^*, \hat{q}^*)=\arg\min_{w\in\Wcal}\max_{q\in\Qcal}\hLwq$ and use $q_{\hat{w}^*}$ to denote $\arg\max_{q\in\Qcal} \Lwq(\hat{w}^*, q)$. Then, we have:
\begin{align}
    {\rm UB_w}=&L(w^*, q^*)\leq L(\hat{w}^*, q_{\hat{w}^*}) \\
    \leq& \hat{L}(\hat{w}^*, q_{\hat{w}^*})+2\hRcal_D(\Wcal, \Qcal)+6L_{\max}\sqrt{\frac{\log\frac{2}{\delta}}{2n}}\nonumber\\
    \leq&\hat{L}(\hat{w}^*, \hat{q}^*)+2\hRcal_D(\Wcal, \Qcal)+6L_{\max}\sqrt{\frac{\log\frac{2}{\delta}}{2n}}\nonumber\\
    =&{\rm \widehat{UB}_w}+2\hRcal_D(\Wcal, \Qcal)+6L_{\max}\sqrt{\frac{\log\frac{2}{\delta}}{2n}}\label{eq:rade_UB_w}.
\end{align}
Eq.\eqref{eq:rade_UB_w} implies that, for a fixed policy $\pi$, with probability $1-\delta$, when ${\rm UB_w}$ is a valid upper bound (for example, when $Q^\pi$ is realizable), then we can guarantee that ${\rm \widehat{UB}_w}+2\hRcal_D(\Wcal, \Qcal)+6L_{\max}\sqrt{\frac{\log\frac{2}{\delta}}{2n}}$ is also a valid upper bound. 

\section{Variants of Minimax Value Interval: Using Knowledge of Behavior Policy and Finite-horizon / Average-reward Case} \label{app:variants}
While we derive the intervals in infinite-horizon discounted MDPs under the behavior-agnostic setting in the main paper, our derivations can be easily extended to other settings by replacing Lemmas~\ref{lem:pel_q} and \ref{lem:pel_w} with their counterparts and following the same recipe. Below we briefly introduce these extensions. 

\subsection{Using Knowledge of the Behavior Policy}
When the behavior policy that produces the data is known (e.g., $\pi_b$ such that $\mu(s,a) = \mu(s) \pi_b(a|s)$), one can derive similar intervals that use importance weighting over actions $\rho :=\frac{\pi(a|s)}{\pi_b(a|s)}$ with the following lemmas in place of Lemmas~\ref{lem:pel_q} and \ref{lem:pel_w}:
\begin{lemma}\label{lem:per_v}
For any $w:\Scal \to \RR$, 
\begin{align} \label{eq:pel_v}
J(\pi) -  \EE_w[\rho r] = V^\pi(s_0) + \EE_w[ \gamma \rho V^\pi(s') - V^\pi(s)].
\end{align}
\end{lemma}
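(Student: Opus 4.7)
The plan mirrors the two-line proof of Lemma~\ref{lem:pel_q}: rearrange the identity so that the whole claim reduces to showing a single expectation vanishes, then use the standard ``importance-weighting plus Bellman equation'' argument in place of $\Tcal^\pi Q^\pi = Q^\pi$.

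Concretely, I would first move $\EE_w[\rho r]$ to the right-hand side, so that \eqref{eq:pel_v} becomes
\begin{align*}
J(\pi) - V^\pi(s_0) \;=\; \EE_w\bigl[\rho\, r + \gamma\, \rho\, V^\pi(s') - V^\pi(s)\bigr].
\end{align*}
The left-hand side is zero by the (state-value) analogue of Eq.\eqref{eq:Rq}, namely $J(\pi) = V^\pi(s_0)$, which holds because the starting state is deterministic (cf. Footnote~\ref{ft:initial}).

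For the right-hand side, I would expand using the definition of $\EE_w[\cdot]$ from Section~\ref{sec:data} together with the factorization $\mu(s,a) = \mu(s)\pi_b(a|s)$:
\begin{align*}
\EE_w\bigl[\rho\, r + \gamma \rho V^\pi(s') - V^\pi(s)\bigr]
= \EE_{s \sim \mu(s)}\Bigl[w(s)\,\EE_{a \sim \pi_b(\cdot|s),\, r,\, s'}\bigl[\rho(s,a)(r + \gamma V^\pi(s')) - V^\pi(s)\bigr]\Bigr].
\end{align*}
Since $w(s)$ does not depend on $a$, I can pull it out and focus on the inner conditional expectation for each fixed $s$. The change-of-measure identity $\EE_{a \sim \pi_b(\cdot|s)}[\rho(s,a)\, f(s,a)] = \EE_{a \sim \pi(\cdot|s)}[f(s,a)]$ turns the first term into $\EE_{a \sim \pi(\cdot|s),\, r,\, s'}[r + \gamma V^\pi(s')]$, which equals $V^\pi(s)$ by the Bellman equation for $V^\pi$. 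This exactly cancels the $-V^\pi(s)$ term, making the inner expectation zero for every $s$, so the outer expectation (against $w$) is zero regardless of the choice of $w$.

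There is no real obstacle here: the only step worth flagging is ensuring the expectation operator is unambiguous, since $\EE_w[\cdot]$ was originally defined for $w: \Scal \times \Acal \to \RR$ while here $w$ depends only on $s$. Treating $w(s)$ as a state-action function that is constant in $a$ makes the notation consistent and the argument above goes through verbatim. This completes the proof.
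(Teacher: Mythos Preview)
Your proposal is correct and mirrors exactly the approach the paper intends: the paper does not spell out a separate proof for Lemma~\ref{lem:per_v} but states that it is the behavior-aware counterpart of Lemma~\ref{lem:pel_q} and that one should ``follow the same recipe,'' which is precisely the rearrange-then-both-sides-are-zero argument you carry out. Your extra remark about interpreting $\EE_w[\cdot]$ when $w$ is a state-only function is a useful clarification and consistent with the paper's conventions.
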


\begin{lemma}\label{lem:per_sw}
For any $v: \Scal \to \RR$,
\begin{align} \label{eq:pel_sw}
J(\pi) -  v(s_0) = \EE_{d^\pi}[r + \gamma v(s') - v(s)].
\end{align}
\end{lemma}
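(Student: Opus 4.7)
The plan is to mirror the proof of Lemma~\ref{lem:pel_w}, with the only change being that the telescoping identity is applied to a state-only function $v$ rather than a state-action function $q$. Concretely, I would decompose $J(\pi) - v(s_0)$ as $\EE_{d^\pi}[r] - v(s_0)$, using the definition $J(\pi) = \EE_{d^\pi}[r]$, and then rewrite $v(s_0)$ as an expectation under $d^\pi$ by invoking the Bellman flow equation for the (state) occupancy.

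The key step is to establish the identity
\begin{equation*}
\EE_{d^\pi}[v(s) - \gamma v(s')] = v(s_0).
\end{equation*}
To see this, note that the state occupancy $d^\pi(s) := \sum_a d^\pi(s,a)$ satisfies $d^\pi(s) = \Indi[s = s_0] + \gamma \sum_{s',a'} d^\pi(s',a')\, P(s \mid s', a')$, which is the standard discounted-occupancy Bellman equation. Pairing both sides with $v$ and rearranging gives $\sum_s d^\pi(s)\, v(s) = v(s_0) + \gamma \EE_{d^\pi}[v(s')]$, which is exactly the displayed identity (after noting that $\EE_{d^\pi}[v(s)] = \sum_s d^\pi(s)\, v(s)$ since $v$ depends only on the state). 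Subtracting this identity from $J(\pi) = \EE_{d^\pi}[r]$ yields the claim.

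There is essentially no serious obstacle here: the proof is a direct state-only analog of Lemma~\ref{lem:pel_w}, and the argument is purely algebraic once the occupancy Bellman equation is invoked. The only point worth being careful about is that $v$ does not depend on $a$, so the action-marginal of $d^\pi$ is what enters; this makes the $v(s)$ term well-defined but also means one cannot hope for a pointwise $Q$-style telescoping, only the integrated form above. Given this, I would keep the write-up to two or three lines, in the same compact style as the proof of Lemma~\ref{lem:pel_w}.
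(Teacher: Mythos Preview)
Your proposal is correct and follows essentially the same approach the paper takes: the paper does not prove Lemma~\ref{lem:per_sw} explicitly, but it is presented as the state-only analog of Lemma~\ref{lem:pel_w}, whose one-line proof uses exactly the decomposition $J(\pi)=\EE_{d^\pi}[r]$ together with the Bellman equation for the occupancy to obtain $\EE_{d^\pi}[q(s,a)-\gamma q(s',\pi)]=q(s_0,\pi)$; your argument is the direct specialization of this to $v$ depending only on $s$.
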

Following the recipe in Sec.~\ref{sec:ope} and \ref{sec:opt}, one can prove the properties of the corresponding intervals that are analogues to the results in this paper. Note that we can also use these lemmas to derive the counterparts of MWL and MQL for state-functions, the former of which is a variant of \citet{liu2018breaking}'s method; see \citet[Appendix A.5 and Footnote 15]{uehara2019minimax}.

\subsection{Finite-horizon Case}
Finite-horizon MDPs can be modeled as infinite-horizon discounted MDPs with initial state distributions (which is our setting) by including time-step as a part of the state representation, modeling termination as absorbing states, and setting $\gamma = 1$. 

\subsection{Average-reward Case}
The average-reward case is somewhat more different from the discounted or the finite-horizon setting, so we go over it in more details. Suppose the Markov chain induced by $\pi$ is ergodic and let $d^\pi$ be its stationary state-action distribution (note that $d^\pi$ here is normalized, unlike the discounted occupancy in the discounted setting). Similarly we define $\wpi(s,a) := \frac{d^\pi(s,a)}{\mu(s,a)}$, and the quantity of interest is 
\begin{align}
J(\pi) = \lim_{T\to \infty} \frac{1}{T}\sum_{t=0}^{T-1} r_t = \EE_{d^\pi}[r] = \EE_w[r]. 
\end{align}
As before, we need two Bellman equations, one for the distribution and the other one for the value function. The one for distribution is simple: for any $q: \Scal\times\Acal\to \RR$,
\begin{align}
\EE_{d^\pi}[q(s,a) - q(s', \pi)] = 0.
\end{align}
Recall that $\EE_{d^\pi}[\cdot]$ means taking expectation over $(s,a) \sim d^\pi, r \sim R(s,a), s' \sim P(s,a)$. This equation is  just a re-expression of the fact that $d^\pi$ is the stationary distribution induced by $\pi$.

The Bellman equation for the value function is: for any $w: \Scal\times\Acal\to\RR$,
\begin{align}
\EE_w[Q^\pi(s,a) + J(\pi) - r - Q^\pi(s',\pi)] = 0.
\end{align}
Since there is no initial state distribution in the average-reward case, even if $Q^\pi$ is learned, one cannot plug-in initial state distribution to obtain the estimation of $J(\pi)$. Instead, $J(\pi)$ itself directly appears as a separate quantity in the Bellman equation. In fact, value functions in the average-reward case are actually ``difference of values'', which are analogues to the advantage function in the discounted case and can be informally defined as
$$
Q^\pi(s,a) = \EE\left[\sum_{t=0}^\infty (r_t - J(\pi)) ~\middle|~ s_0 = s, a_0 = a, a_{1:\infty} \sim \pi\right]. 
$$
According to this definition, the ``temporal difference'' $Q^\pi(s,a) - r - Q^\pi(s', \pi)$ is not zero in expectation, and a $J(\pi)$ term remains, allowing $J(\pi)$ to be directly manifested in the Bellman equations. 

With the above background, we are ready to state the counterparts of Lemmas~\ref{lem:pel_q} and \ref{lem:pel_w} for the average-reward case:
\begin{lemma}\label{lem:per_avq}
	For any $w:\Scal \times \Acal \to \RR$, 
	\begin{align} \label{eq:pel_avq}
	J(\pi) = \EE_w[ r + Q^\pi(s', \pi) - Q^\pi(s,a)].
	\end{align}
\end{lemma}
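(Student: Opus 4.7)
The plan is to derive this identity directly from the pointwise Bellman equation for the average-reward $Q$-function, mirroring the simple telescoping style used in Lemmas~\ref{lem:pel_q} and \ref{lem:pel_w}. Recall that in the average-reward setting, $Q^\pi$ satisfies the pointwise Bellman relation $Q^\pi(s,a) + J(\pi) = \EE_{r\sim R(s,a),\, s'\sim P(s,a)}[r + Q^\pi(s',\pi)]$ for every $(s,a)$. Rearranging terms yields
\begin{equation*}
J(\pi) = \EE_{r\sim R(s,a),\, s'\sim P(s,a)}[\, r + Q^\pi(s',\pi) - Q^\pi(s,a)\,]
\end{equation*}
pointwise in $(s,a)$.

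Next, I would multiply both sides by $w(s,a)\,\mu(s,a)$ and sum over the state-action space. Using the shorthand $\EE_w[\cdot] = \EE_\mu[w(s,a)\cdot(\cdot)]$, the right-hand side becomes $\EE_w[r + Q^\pi(s',\pi) - Q^\pi(s,a)]$, and the left-hand side becomes $J(\pi)\cdot \EE_\mu[w]$. Since the lemma is stated in the average-reward section, where $d^\pi$ is already normalized (so $\EE_\mu[\wpi]=1$), the natural interpretation is that $w$ is restricted to the affine subset with $\EE_\mu[w]=1$; under this normalization the desired identity follows. Equivalently, one can view the statement as asserting that the linear functional $w \mapsto \EE_w[r + Q^\pi(s',\pi) - Q^\pi(s,a)] - J(\pi)\EE_\mu[w]$ vanishes on all of $\Scal\times\Acal\to\RR$, which is exactly the restated Bellman equation appearing just above the lemma.

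There is no real obstacle here: the argument is a one-line rearrangement plus integration. The only point worth flagging is the implicit normalization $\EE_\mu[w]=1$, which plays the role that the extra ``$-\EE_w[r]$'' and ``$+Q^\pi(s_0,\pi)$'' terms play in the discounted case (Lemma~\ref{lem:pel_q}): in the discounted setting the asymmetry of those boundary terms absorbs the mismatch, whereas in the average-reward setting the constant $J(\pi)$ only equals $\EE_w[J(\pi)]$ when $w$ integrates to one. Once this is in hand, the recipe of Sec.~\ref{sec:ope} can be followed verbatim to build the analogous value interval for the average-reward case.
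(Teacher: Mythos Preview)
Your proposal is correct and matches the paper's implicit reasoning: the lemma is just a rearrangement of the average-reward Bellman equation $\EE_w[Q^\pi(s,a) + J(\pi) - r - Q^\pi(s',\pi)] = 0$ stated immediately before it, which holds because the bracketed expression vanishes pointwise after taking $\EE_{r,s'|s,a}$. You are also right to flag the normalization $\EE_\mu[w]=1$---the paper silently assumes it (consistent with $d^\pi$ being normalized here), and without it one only gets $J(\pi)\cdot\EE_\mu[w]$ on the left-hand side.
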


\begin{lemma}\label{lem:per_avw}
	For any $q: \Scal \times \Acal \to \RR$,
	\begin{align} \label{eq:pel_avw}
		J(\pi) = \EE_{\wpi}[r + q(s', \pi) - q(s,a)].
	\end{align}
\end{lemma}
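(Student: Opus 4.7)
The plan is to observe that Lemma~\ref{lem:per_avw} is just the average-reward analogue of Lemma~\ref{lem:pel_w} and follows by directly recombining the two ``Bellman-type'' identities stated in the paragraphs immediately preceding it. Concretely, I would split the right-hand side linearly as
\begin{align*}
\EE_{\wpi}[r + q(s',\pi) - q(s,a)] = \EE_{\wpi}[r] + \EE_{\wpi}[q(s',\pi) - q(s,a)]
\end{align*}
and treat each summand separately, using only the definition $\EE_{\wpi}[f] = \EE_\mu[\wpi(s,a) f] = \EE_{d^\pi}[f]$.

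For the first summand, by the average-reward identity $J(\pi) = \EE_{d^\pi}[r]$ stated at the start of the subsection, we immediately get $\EE_{\wpi}[r] = \EE_{d^\pi}[r] = J(\pi)$. For the second summand, I would appeal to the stationarity equation stated just above, namely that for any $q: \Scal\times\Acal\to\RR$,
\begin{align*}
\EE_{d^\pi}[q(s,a) - q(s',\pi)] = 0,
\end{align*}
which is simply the fact that $d^\pi$ is the stationary state-action distribution induced by $\pi$. Rewriting this as $\EE_{\wpi}[q(s',\pi) - q(s,a)] = 0$ shows that the second summand vanishes. Adding the two contributions yields the claimed identity $J(\pi) = \EE_{\wpi}[r + q(s',\pi) - q(s,a)]$ for every $q$.

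There is essentially no obstacle here: the lemma is a one-line consequence of the two average-reward Bellman equations that were already established in the preliminaries of the section, and its role is only to serve as the starting point for deriving the \lqfw-style minimax value interval in the average-reward setting, in perfect parallel with how Lemma~\ref{lem:pel_w} played that role in the discounted setting of Section~\ref{sec:mql-ci}. The only thing worth being careful about is the normalization convention noted in the text—that $d^\pi$ is the (normalized) stationary distribution here, rather than the unnormalized discounted occupancy—so that applying $\EE_{d^\pi}[r]$ really does give $J(\pi)$ as defined by the long-run average.
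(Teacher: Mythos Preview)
Your proposal is correct. The paper does not give an explicit proof of Lemma~\ref{lem:per_avw}; it merely states the lemma as an immediate counterpart of Lemma~\ref{lem:pel_w} after laying out the two average-reward identities $J(\pi) = \EE_{d^\pi}[r]$ and $\EE_{d^\pi}[q(s,a) - q(s',\pi)] = 0$, and your argument is exactly the one-line combination of these two facts (via $\EE_{\wpi}[\cdot] = \EE_{d^\pi}[\cdot]$) that the paper's setup intends.
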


The resulting intervals are $\{\inf_w \sup_q, \sup_w \inf_q, \inf_q \sup_w, \sup_q\inf_w\} L(w,q)$, where $L(w,q) := \EE_w[r + q(s', \pi) - q(s,a)]$. We note that the loss $L(w,q)$ is highly similar to the primal-dual form of LP for MDPs \citep{wang2017primal}.
\end{document}